\newcommand{\SupI}{\emph{Appendix}}
\newcommand{\kc}{\mathcal{K}}
\newcommand{\scc}{\mathcal{S}}
\newcommand{\lc}{\mathcal{L}}
\newcommand{\ajcomm}[1]{ \hfill \break  \{ \textcolor{blue}{AJ:#1} \}  }
\newcommand{\utheta}{\underline{\theta}}
\newcommand{\usigma}{\underline{\sigma}}
\newcommand{\hutheta}{\underline{\hat{\theta}}}
\newcommand{\argmin}[1]{\underset{#1}{\text{argmin}} }
\newcommand{\Enu}{\underset{\usigma \sim \nu}{ \mathbb{E}} }
\newcommand{\EXp}[1]{\underset{#1}{ \mathbb{E}} }
\newcommand{\Var}[1]{\underset{#1}{\operatorname{Var}} }
\newcommand{\su}{\setminus u}
\definecolor{REVDEL}{RGB}{200,0,0}
\definecolor{REVADD}{RGB}{0,70,200}
\newif\ifSHOWREVISIONS
\newtheorem{theorem}{Theorem}
\newtheorem{corollary}{Corollary}
\newtheorem{lemma}{Lemma}
\newtheorem{proposition}{Proposition}
\newtheorem{definition}{Definition}
\newtheorem{condition}{Condition}
\long\def\ca#1\cb{} 
\begin{document}
\title{Discrete distributions are learnable from metastable samples}

\author{Abhijith Jayakumar}
\email{abhijithj@lanl.gov}
\author{Andrey Y. Lokhov}
\author{Sidhant Misra}
\author{Marc Vuffray}
\affiliation{Theoretical Division, Los Alamos National Laboratory, Los Alamos, NM 87545, USA}


\begin{abstract}
Physically motivated stochastic dynamics are widely used to sample from high-dimensional distributions. However, such samplers often get trapped in metastable states, approximately sampling from a distribution that differs significantly from the desired stationary state. We rigorously show that for multivariable discrete distributions, the true stationary model can nevertheless be recovered from these metastable samples. This relies on a fundamental observation: for distributions satisfying a strong metastability condition, their single-variable conditional probabilities are on average extremely close to those of the true stationary distribution. This remains true even when the two distributions are far apart under global metrics such as Kullback-Leibler divergence. Consequently, we can effectively learn the true model using a conditional-likelihood estimator even when the samples are drawn from a restricted state space. Extending these general results to Ising models, we prove rigorous parameter and structure learning guarantees. Finally, we demonstrate this phenomenon numerically on higher-alphabet spin glass models.
\end{abstract}
\maketitle

\twocolumngrid
\section{Introduction}

Markov chains are by far the most popular tool used to study systems described by many-variable probability distributions. It is well known that Markov chains can mix very slowly to the stationary distribution and this is often associated with the existence of so-called~\emph{metastable} states, where the chain can get stuck for long periods of time~\cite{griffiths1966relaxation,kirkpatrick1987stable, cassandro1984metastable, levin2010glauber}. Slow mixing in such systems is not merely an algorithmic inconvenience but is a pervasive property observed in natural systems from molecular biology to quantum field theory~\cite{hodgman2009metastable, dinner1998metastable, del2004critical,schaefer2011critical}.
Many recent works that give rigorous guarantees on learning such distributions work under the assumption that independent and identically distributed (i.i.d.) samples are available from the ground truth distribution~\cite{diakonikolas2017learning, cule2010maximum, verleysen2003learning, wang2019nonparametric}.  This assumption is well justified in contexts where data is collected from independent agents in a real-world setting. However, it is less likely to hold in cases where the source of the data is a natural dynamical system or Markov chain sampling algorithm, due to slow mixing often exhibited by such systems~\cite{barahona1982computational, krauth2006statistical, martinelli2004relaxation}. These observations raise the following question: is it possible to learn something useful about the stationary distribution of a Markov chain given samples drawn from such a metastable state of the chain?

\textit{Prima facie}, this task looks hopeless. A chain exhibiting metastability often samples from a restricted part of a state space. This in turn implies that metastable distributions will be quite different from the stationary distribution when their difference is measured using global metrics like Kullback–Leibler (KL) divergence or total variation (TV) distance. This can have severe consequences for learning using certain algorithms that attempt to minimize global metrics between the data distribution and a hypothesis. For instance, the  Maximum Likelihood Estimator (MLE) implicitly attempts to minimize the KL divergence. Due to the large KL divergence between the metastable and stationary distributions, the MLE estimate will never approximate the true distribution well even if MLE uses a large amount of data from the metastable distribution.

However for the purposes of learning, it is not always necessary or useful to minimize such global metrics. For example, consider the problem of learning undirected graphical models. Efficient algorithms for this problem, like pseudo-likelihood (PL)~\cite{ravikumar2010high,lokhov2018optimal,wu2018sparse}, Interaction Screening~\cite{vuffray2016interaction, vuffray2019efficient} and Sparsitron~\cite{Klivans2017}, work by learning the \emph{single-variable conditionals} of this distribution (i.e., the conditional distributions of one variable where other variables are fixed). These works exploit the fact that there is a bijective mapping between positive distributions over discrete variables and their single-variable conditionals \cite{besag1974spatial}. It might then seem that samples from the true distribution are necessary to even approximately learn the conditionals. In this work, we will establish that this is not the case. That is, there exist other distributions that can be globally different from the true distribution but have conditionals that are on average very close to the ground-truth single-variable conditionals. We will explicitly show that metastable distributions of reversible Markov chains that have the true model as its equilibrium distribution satisfy such a property.

\subsection{Prior work and motivation}
The main theoretical motivation for studying learning from metastable distributions comes from the observation that many instances of multivariable discrete distributions are believed to be hard to sample from~\cite{sly2010computational, sly2012computational,barahona1982computational}. This necessarily leads to poor mixing of Markov chain samplers, which is usually observed in the low-temperature region, when the variables in the model interact strongly~\cite{mossel2013exact, levin2010glauber}. Now from the context of learning, it is natural to ask if something about the stationary distribution can be reconstructed from data produced by such a Markov chain.

For the purposes of learning, we will model the stationary distribution as a Gibbs distribution with a certain energy function, equivalently as an undirected graphical model \cite{koller2009probabilistic}. The theory of learning undirected graphical models with discrete variables in the setting where i.i.d. samples are given from the true distribution is well developed~\cite{bresler2015efficiently, vuffray2016interaction, vuffray2019efficient, lokhov2018optimal, Klivans2017, Ankur2017nips, wu2018sparse}. Recent works  have established that methods that learn the parameters of the model by minimizing metrics based on single-variable conditionals achieve efficient sample and computational complexity scaling. Some works have also demonstrated efficient learning in the setting where samples are given as a time series from a Markov chain sampler~\cite{dutt2021exponential, bresler2017learning, gaitonde2023unified, gaitonde2024efficiently}. Notice that this setting is markedly different from learning from metastable distributions. In our case, we do not observe any dynamical information from the Markov chain. Instead, we assume that the samples we have are i.i.d. from a metastable distribution of the Markov chain.

An alternative perspective on Markov chains and local learning methods has been explored in some recent works in the context of learning energy based models with continuous variables~\cite{koehler2022statistical, qin2023fit}. These works show that such models that do not bottleneck the probability flow are efficiently learnable by a local learning method known as score matching. In our work, in the discrete variable setting, we show that the presence of large bottlenecks does not impede learning.

Metastability in the context of multivariable discrete systems has been well explored in the statistical physics literature~\cite{griffiths1966relaxation, hanggi1986escape, sewell1980stability} due to the connection between such distributions and the emergence of different thermodynamic phases in models of interacting systems. A rigorous examination of this phenomenon has also been undertaken by some authors~\cite{levin2010glauber, sly2012computational} who show specific examples of regions within which stochastic dynamics can mix well but the chain struggles to escape the said region effectively. More recently, Liu et al.~\cite{liu2024locally} have also explored metastability in Markov chains, and have demonstrated rigorous algorithmic utility for data produced from these states.

\subsection{Extended summary of results}
First, we informally state the primary result of our work:\\ \emph{Let $\mu(\usigma)$ be a distribution over a set of discrete variables  and let  $P$ be a Markov chain  that has $\mu$ as its equilibrium distribution; we define a family of metastable distributions of $P$ such that given i.i.d. samples from these distributions, the stationary distribution $\mu$ can be learned to near optimal accuracy using the pseudo-likelihood method. This family of metastable distributions is defined by an approximate detailed balance condition. Moreover, metastable distributions that represent a chain stuck in a region of state space can be shown to lie in this family.} 
\newline

The rest of the paper is dedicated to defining metastable distributions, establishing these definitions by examples, and then using the notion of metastability to give learning guarantees. This leads to the discovery of elegant connections between several ideas from statistical physics and learning theory.

We begin in Section~\ref{sec:notions} by defining two notions of metastability by relaxing, respectively, the stationarity and detailed balance conditions that are obeyed by the equilibrium distribution of a reversible Markov chain $(P)$. The general definition of metastability corresponds to distributions that only change by $\eta$ in the total variation distance (TV) when updated by the Markov chain.  It follows that when $\eta = 0$ this condition is satisfied only by the equilibrium distribution. This definition of  metastability captures the intuitive notion of a metastable distribution as one that does not change much with time.  We then define a second notion of metastability which we can call \emph{$\eta$-strong metastability}. These are defined as distributions that violate the detailed balance condition  of $P$ by $\eta.$ Here the violation is again measured in TV. Just like how the exact detailed balance condition of a  distribution with respect to $P$ implies that it is a stationary state of $P,$ the strong notion of metastability can be trivially shown  to imply the  notion of metastability. To foreshadow the learning results, we will later show that samples from strongly metastable states  allow us to recover the true energy function describing the stationary distribution of $P$.


\begin{figure*}[!htb]
    \centering
    \includegraphics[width=0.61\linewidth]{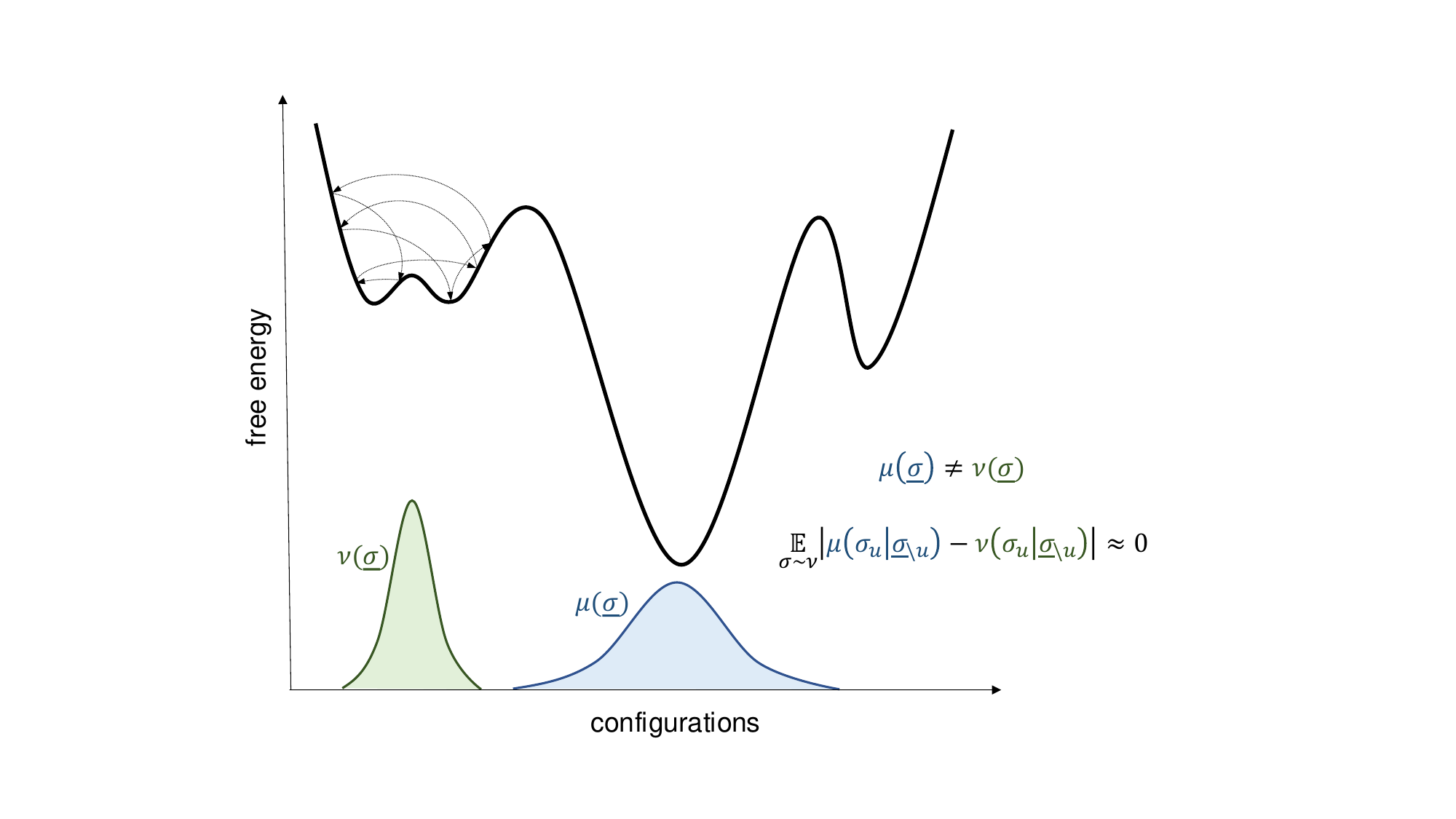}
    \caption{An informal representation of our result given by Theorem 1. 
    Samples coming from a metastable distribution of reversible Markov chain samplers are far from the full measure in global metrics. Surprisingly, at the same time we show that single-variable conditionals in metastable distributions are on average close to those of the true distribution. In the Curie-Weiss model, we will use an explicit construction to demonstrate that such metastable states correspond to the local minima of the free energy, which agrees with an intuitive statistical physics picture of metastability.}
    \label{fig:schematic}
\end{figure*}

The question remains whether there are interesting or relevant cases of strong metastable distributions. We show two explicit constructions of strongly metastable distributions that show these states do exist and that they align with the  intuitive notion of a metastable system as one that is stuck in a region of state space, or as a system trapped in local minima of free energy \cite{cahn2003material, PEREPEZKO2005247}.
     For general reversible Markov chains, we construct $\eta-$strongly metastable states by using the notion of \emph{conductance} from theory of Markov chains. We show that any subset in the state space supports a $\eta$-strong metastable state with $\eta$ proportional to the conductance of the subset. For the case of slow mixing chains this implies that there exists an $\eta$ strongly metastable state with $\eta$ being exponentially small in the number of variables in the system. Moreover, the explicit constructions we have correspond to distributions resulting from Markov chains that have mixed well inside a set but have no support outside of it, essentially being stuck in such a set.
     In the final section of the paper, we illustrate another class of strongly metastable states in the Curie-Weiss model. These distributions are peaked around the minima of the free energy of the system and can also be shown to have an $\eta$ that decays with the number of variables in the model.
    
After establishing the notion of strong metastability we move on to showing the connection between strongly metastable distributions and learning discrete distributions. The key connection we show is that strongly metastable states have on average almost the same single-variable conditionals as the equilibrium distribution $\mu$, which is the true distribution we are ultimately interested in learning.

For our learning setup, we  model our distributions as Gibbs distributions with some energy function ($\mu(\usigma) \propto e^{E^*(\usigma)})$ and aim to learn this energy function by optimizing over a parametric family of energy functions using the pseudo-likelihood method.  
Now given some natural assumptions about the learning problem (see Condition \ref{cond:bounded_flip} and \ref{cond:temperature_gen}) we rigorously establish that strongly metastable states are almost as good as the equilibrium distribution for learning the energy function. Below we informally state the main result that facilitates this.

\ca
\emph{Condition \ref{cond:bounded_flip}: Bounded spin-flip} We assume that there is a non-zero ratio between the single-variable conditionals of the true distribution $\mu$ and the probability of the chain P changing only one variable. This is a mild condition as reversible Markov chain samplers like Glauber dynamics or Metropolis-Hastings samplers satisfy this condition with the ratio going as $\Theta(1/n)$, which is sufficiently large to show non-trivial learning guarantees. We call this ratio $\omega_P.$

\emph{Condition \ref{cond:temperature_gen}: Bounded interactions/ temperature bound} We assume that the energy functions that we optimize over only change by at most $2 \gamma $ when a single-variable is changed in their input. For sparse models, this corresponds to a bounded on inverse temperature of the model which is a necessary quantity that is seen to control the sample complexity of this learning problem even in the setting where the samples come from the true distribution \cite{lokhov2018optimal,santhanam2012information}. 

From these natural assumptions, we rigorously establish that strongly metastable states are almost as good as the equilibrium distribution for learning the energy function. Below we informally state the main result that facilitates this.
\cb

\paragraph*{ Theorem \ref{thm:close_condtionals}: Closeness of conditionals } The single-variable conditionals of an $\eta-$ strongly metastable state of $P$ differ from those of the equilibrium distribution $\mu$ in average TV distance by only $O(\eta).$ The average in this case is taken over the conditioning variables using the metastable distribution.

This is schematically represented in Figure~\ref{fig:schematic}.
This result is crucial for learning as the PL method works by minimizing the average KL divergence between single-variable conditionals. Specifically, given samples from the data distribution and a parametric family of energy functions, the PL method estimates the energy function from the data by minimizing the average KL divergence between the single-variable conditionals of the data and the model, averaged over the data distribution \cite{besag1974spatial, ravikumar2010high}. The data distribution in our case is given by the i.i.d. samples drawn from a strongly metastable distribution.
Now using the well known bounds connecting TV to the KL divergence we can argue that the PL loss  computed from samples drawn from  a strongly metastable distribution is close to its optimal value when the parameters of the model hypothesis match that of the stationary distribution of $P$.

Now notice from our constructions that $\eta$ for strong metastable states is often a decaying function of the number of variables for slow-mixing systems.  This implies that for large systems that exhibit slow mixing, the $O(\eta)$ bias incurred in the PL loss function is dominated by the statistical errors that scale as $M^{-1/2}$.  These bounds on the PL loss computed from metastable samples are given formally in Theorem \ref{thm:PL_close}.

These ``nearly optimal'' results can be further developed into parameter learning guarantees. We demonstrate this for the special case of binary undirected graphical models with pairwise interactions. In other words, we solve the inverse Ising problem given data from metastable states of the model \cite{aurell2012inverse, nguyen2017inverse}.

We use tools from statistical learning theory to show that the parameters of the energy function of pairwise binary (Ising) exponential family models can be reconstructed from samples from metastable states. We show that the coupling and magnetic field parameters of such models can be learned using $e^{O(\beta d)} \log(n)$ samples for an $n$ spin system with only a bias that scales with $\eta$. Hence in the most interesting cases (i.e. like in Section \ref{sec:cheeger_metastable} for a slow mixing chain) this bias is exponentially small in the number of spins and is completely dominated by the statistical error in the learning process.

In the final section of the paper we give a numerical illustrations of our results using the  Curie-Weiss ferromagnet and  a 
spin glass model with higher-order interactions. By exploiting the permutation symmetry of the Curie-Weiss model, we numerically show that there are strongly metastable states supported around the minima of the free energy of this model. We also collect metastable data from this model by running Glauber dynamics around one of the free energy minima and show that the full model parameters can be learned faithfully from this data using the PL method. For the spin glass model, we demonstrate learning from metastable distributions on a spin glass model with higher-order interactions and a three level alphabet.

\ca
The following are the main contributions of this work. In Section ..., we rigorously define two related notions of metastability.  While metastable distributions of the Markov chain have been understood at a non-rigorous level in statistical physics, we give a concrete definition of this. ...
 In Section ... we connect strong metastability to the theory of Markov chains and explicitly show constructions of such metastable distributions.

 In Section ... we connect strong metastability to learning graphical models. We do this by showing that strongly metastable distributions have on average single-variable conditionals very close to that of the ground truth distributions. This implies that the ground truth can be recovered even if we have samples from such metastable distributions using a method that works by reconstructing the single-variable conditionals. We show this rigorously for the case of the pseudo-likelihood method.

We also demonstrate this effect numerically on the Currie-Weiss model and a Spin Glass model. For the Curie-Weiss model, we also analyze the model using its free energy to give a new characterization of the minima of the free energy in terms of the single-variable conditionals. 

\textcolor{gray}{Explain contributions}
\cb


\section{Definitions of metastability}
\label{sec:notions}

 A metastable distribution must be a distribution that behaves similarly to the stationary distribution under a time update. From this, a natural definition of a metastable distribution can be given  by bounding the change in the distribution after one step of a Markov chain. 

\begin{definition}[Metastability]
A distribution $\nu$ is $\eta$- metastable with respect to a Markov chain P if and only if,
\begin{equation}
    |\nu - \nu P|_{TV} \leq \eta
\end{equation}
\end{definition}

This definition most naturally captures the idea of a metastable distribution. A similar notion of metastability has been recently defined in \cite{liu2024locally} using the KL divergence instead of TV. Later we will show explicit constructions where $\eta$ is smaller than some inverse polynomial function of the size of the state space of the chain. In the language of high-dimensional distributions, this implies an exponentially small quantity in the number of variables. It can be shown that if the Markov chain is started from such a distribution then that chain will mix poorly to the stationary distribution.

\begin{proposition}\label{prop:slow_mix}
    Starting from an $\eta-$ metastable distribution it takes at least $\frac{|\mu - \nu|_{ TV} - \epsilon}{\eta}$ steps to get $\epsilon$ close to the equilibrium distribution $\mu$ in TV.
\end{proposition}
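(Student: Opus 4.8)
The plan is to combine the one-step displacement bound coming from $\eta$-metastability with the non-expansiveness of the Markov operator $P$ in total variation, and then close the argument with two applications of the triangle inequality.

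First I would record the elementary fact that $P$ acts as a contraction on signed measures in the TV norm: for any signed measure $\xi$ on the state space, $|\xi P|_{TV} \le |\xi|_{TV}$. This is immediate from row-stochasticity of $P$, since $\sum_y \bigl| \sum_x \xi(x) P(x,y) \bigr| \le \sum_x |\xi(x)| \sum_y P(x,y) = \sum_x |\xi(x)|$. Applying this with $\xi = \nu P^{j} - \nu P^{j-1} = (\nu P - \nu)P^{j-1}$ gives $|\nu P^{j} - \nu P^{j-1}|_{TV} \le |\nu P - \nu|_{TV} \le \eta$ for every $j \ge 1$, where the last inequality is exactly the $\eta$-metastability of $\nu$. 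The point to emphasize is that the one-step bound propagates to all later times because of this contraction property, not because $\nu P$ is itself assumed metastable.

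Next I would telescope: by the triangle inequality, $|\nu P^k - \nu|_{TV} \le \sum_{j=1}^{k} |\nu P^{j} - \nu P^{j-1}|_{TV} \le k\eta$. Applying the triangle inequality once more, $|\mu - \nu|_{TV} \le |\mu - \nu P^k|_{TV} + |\nu P^k - \nu|_{TV} \le |\mu - \nu P^k|_{TV} + k\eta$. If $k$ is a number of steps for which the chain has come $\epsilon$-close to equilibrium, i.e. $|\mu - \nu P^k|_{TV} \le \epsilon$, then rearranging gives $k \ge (|\mu - \nu|_{TV} - \epsilon)/\eta$, which is the stated lower bound.

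The argument is short, and the only genuinely substantive ingredient is the non-expansiveness of $P$ in TV; everything else is the triangle inequality, so I do not expect a real obstacle. The one thing to be careful about is that the normalization of the TV distance is used consistently throughout — it does not matter whether $|\cdot|_{TV}$ denotes $\tfrac12\|\cdot\|_1$ or $\|\cdot\|_1$, since the contraction bound and all the triangle inequalities above are homogeneous of degree one in the signed measure.
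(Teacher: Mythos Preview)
Your proof is correct and follows essentially the same approach as the paper's: both use the non-expansiveness of $P$ in TV (what the paper calls the data-processing inequality) to propagate the one-step bound $|\nu P - \nu|_{TV}\le\eta$ to all later increments, then telescope via the triangle inequality and rearrange. The only cosmetic difference is that you first bound $|\nu P^k-\nu|_{TV}\le k\eta$ and then apply the triangle inequality with $\mu$, whereas the paper writes the full telescoping chain from $\nu$ to $\mu$ in one line.
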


This simple fact is a consequence of the data-processing inequality. The proof is given in \SupI \ref{app:slow_mix}.

However, almost all Markov chain samplers used in practice have a stationary state that satisfies time reversibility, which is captured by the detailed balance condition. For reversible Markov chains, the detailed balance condition can be shown to imply the existence of a stationary distribution. For a distribution $\mu$ and a chain $P$ with a state space $\scc$, the following implication holds true,
\begin{equation}
    P(i|j) \mu(j) = P(j|i) \mu(i), ~~ \forall i,j \in \scc \implies \mu P = \mu
\end{equation}

Now if we take the view that this time reversibility condition must only be mildly violated for a metastable distribution then a stronger notion of metastability emerges.

\begin{definition}[Strong Metastability]
Let $P$ be a Markov chain with a state space $\scc$. Then a distribution $\nu$ is $\eta$-strongly metastable with respect to $P$ iff,

\begin{equation}\label{eq:strong_def}
    \frac12\sum_{i,j \in \scc } |P(i|j)\nu(j) - P(j|i)\nu(i)|~ \leq ~\eta
\end{equation}
    
\end{definition}

If we set $\eta$ to zero in the definition of strong metastability, then we get the detailed balance condition, $P(i|j) \nu(j) = P(j|i) \nu(i).$ As foreshadowed by their names, it can be easily deduced from their definitions that strong metastability implies metastability with the same $\eta$.

In the following sections we will show explicit constructions of strongly metastable distributions that apply to general reversible chains. We will then use this definition of strong metastability to show how the equilibrium distribution can be learned given samples drawn from such a state. We also discuss more closely the relationship between these two notions of metastability in \SupI~ \ref{appendix:meta_separation}. Specifically in Proposition \ref{prop:strong_weak},  we show that it is possible to have large separations between these two measures of metastability, implying that the strong metastability condition is indeed the relevant condition to look for in the context of learning.

\ca
\ajcomm{Condense the separation results and constructions }

-----------------

In the following sections we will show explicit constrictions of strongly metastable distributions that applies to general reversible chains. We will then use this definition of strong metastability to show how the equilibrium distribution can be learned given samples drawn from such a state.

Notice that either of these definitions of metastability does not directly imply that the metastable distribution is close to the equilibrium distribution. In fact, in practice, it is often seen that metastable distributions are supported in subregions of the state space from which the Markov chain struggles to escape. Hence it is surprising that the parameters of the true model may be reconstructed from such states.



These two definitions of metastability raises an interesting question: \emph{For a reversible Markov chain does $\eta-$  metastability imply $\eta-$ strong metastability?}
We can construct a simple counter-example showing that is not true. For an even integer $L$, let $\scc = [L]$ and  $P$ be the Markov chain corresponding to the random walk on a cycle graph on $L$ elements. Take $\nu$ to be the following `tent' distribution,
\begin{equation}
\nu(i)  = \frac{1}{Z} 
\begin{cases}
i  L ,~~ &i \leq L/2, \\
-i  L + L^2 ,~~ &i > L/2, \\
\end{cases}
\end{equation}

The $Z$ here ensures normalization. Now by direct computation, we can verify that this state is $\frac{2L}{Z}$ metastable while being $\frac{L^2}{2Z}$ strongly metastable. This shows that there can be $\Theta(|\scc|)$ factor difference between the two measures of metastability.  This construction can be generalized to general reversible Markov chains using by mapping them to an electrical network and analyzing the current flows.

\begin{proposition}\label{prop:strong_weak}
Let $G = (\scc,E)$ be a graph defined on the state space with the edge set $E = \{(i,j) | P(i|j) \neq 0 \}.$  Let $diam(G)$ be the diameter of this graph. Then the chain $P$ has a $\eta$- metastable distribution that is $\Omega(diam(G) \eta)-$strongly metastable.
\end{proposition}
We conjecture that this construction does not give the optimal separation between metastable and strongly metastable distributions. The reason is that the diameter is a topological property of the chain and it can be changed drastically by adding a few well-chosen, very small but non-zero transitions to the chain.

While the relationship between these two notions of metastability is an interesting question, it is not the main focus of this paper.  We leave further exploration of this for future work.

\cb

\ca
{
\begin{proposition}
\label{prop:sm2}
Let $P$ be a reversible Markov chain and $\nu$ be an $\eta$ metastable distribution with  $\nu(I - P)  =  \frac{\eta}{2}(e_s - e_t).$ Then,
\begin{equation}
    \frac12\sum_{i,j \in \scc } |P(i|j)\nu(j) - P(j|i)\nu(i)|~ \geq \frac{\eta}{2}\sqrt\frac{\min_{ij:P(i|j) \neq 0}P(i|j)\mu(j)}{ \mu(s) p^{s,t}_{\text escape}}.
\end{equation}
Here $ p^{s,t}_{\text escape}$ is the probability that the Markov chain starting from $s$ reaches $t$ before returning to $s.$
\end{proposition}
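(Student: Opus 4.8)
\emph{Proof plan.} The plan is to reinterpret both sides of the claimed inequality in the language of the electrical network naturally associated to the reversible chain $P$, and then finish with an elementary $\ell^1$--$\ell^2$ comparison. First I would set $c_{ij} := P(i|j)\mu(j)$; reversibility gives $c_{ij}=c_{ji}$, and $\sum_i c_{ij}=\mu(j)$. Introduce the potential $h(j) := \nu(j)/\mu(j)$ and the current $I_{ij} := c_{ij}(h(i)-h(j)) = P(j|i)\nu(i) - P(i|j)\nu(j)$, which is antisymmetric in $(i,j)$. Two observations. (i)~$|P(i|j)\nu(j)-P(j|i)\nu(i)| = |I_{ij}|$, so the left-hand side of the proposition equals $\frac12\sum_{i,j}|I_{ij}| = \sum_{e\in E}|I_e|$, the $\ell^1$ norm of the current over the edge set $E$. (ii)~The net current out of node $j$ is $\sum_i I_{ji} = \nu(j)-(\nu P)(j) = (\nu(I-P))(j)$, which by hypothesis is $\eta/2$ at $s$, $-\eta/2$ at $t$, and $0$ elsewhere. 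Hence $I$ is a flow of strength $\eta/2$ from $s$ to $t$ obeying Ohm's law with potential $h$; equivalently $I$ equals $\eta/2$ times the unit electrical current between $s$ and $t$, and $h$ (up to an additive constant) is $\eta/2$ times the associated voltage.

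Next I would evaluate the energy $\mathcal{E}(I) := \sum_{e\in E} I_e^2/c_e = \frac12\sum_{i,j} c_{ij}(h(i)-h(j))^2$. Summation by parts, using the node balances from (ii), gives $\mathcal{E}(I) = \sum_i h(i)\,(\nu(I-P))(i) = \frac{\eta}{2}\,(h(s)-h(t))$. Since $h$ is $\eta/2$ times the voltage of the unit-current flow, $h(s)-h(t) = \frac{\eta}{2} R_{\mathrm{eff}}(s,t)$, and the classical escape-probability identity $R_{\mathrm{eff}}(s,t) = 1/(c(s)\,p^{s,t}_{\text{escape}})$ with $c(s)=\sum_i c_{si}=\mu(s)$ yields $\mathcal{E}(I) = (\eta/2)^2/(\mu(s)\,p^{s,t}_{\text{escape}})$.

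Finally, with $c_{\min} := \min_{ij:\,P(i|j)\neq 0} P(i|j)\mu(j)$, the elementary bound $\mathcal{E}(I) = \sum_e I_e^2/c_e \leq c_{\min}^{-1}\sum_e I_e^2 \leq c_{\min}^{-1}\big(\sum_e|I_e|\big)^2$ rearranges to $\sum_{e\in E}|I_e| \geq \sqrt{c_{\min}\,\mathcal{E}(I)} = \frac{\eta}{2}\sqrt{c_{\min}/(\mu(s)\,p^{s,t}_{\text{escape}})}$, which is exactly the assertion by observation (i).

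The substantive steps are all bookkeeping or standard: the one genuinely new computation is the summation-by-parts identity for $\mathcal{E}(I)$, and the one external ingredient is the effective-resistance/escape-probability formula $R_{\mathrm{eff}}(s,t)=1/(c(s)p^{s,t}_{\text{escape}})$, which I would state and cite carefully, paying particular attention to the conductance normalization $c(s)=\mu(s)$ so that the textbook statement applies verbatim. The degenerate cases ($s=t$, or $s$ and $t$ in distinct components of $E$) force $\eta=0$ and are dismissed immediately. I expect this effective-resistance translation to be the step most worth spelling out; everything else is either a one-line calculation or the reverse-Cauchy--Schwarz estimate above.
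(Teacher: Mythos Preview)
Your proposal is correct and follows essentially the same route as the paper: the electrical-network translation with conductances $c_{ij}=P(i|j)\mu(j)$ and voltages $\nu/\mu$, the summation-by-parts computation of the Dirichlet energy, the identity $R_{\mathrm{eff}}(s,t)=1/(\mu(s)\,p^{s,t}_{\text{escape}})$, and the $\ell^1$--$\ell^2$ comparison with $c_{\min}$ factored out. The only cosmetic difference is ordering---the paper applies $\|\cdot\|_1\geq\|\cdot\|_2$ to $|I_{ij}|$ first and then pulls out $c^*$, whereas you compute $\mathcal{E}(I)$ first and then bound it by $c_{\min}^{-1}\big(\sum_e|I_e|\big)^2$---but the content is identical.
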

}
\cb

\subsection{Existence of strongly metastable distributions for reversible Markov chains}
\label{sec:cheeger_metastable}

From the definition of strong metastability, it is not clear whether metastable distributions seen while sampling from challenging many-variable problems satisfy this condition.  We will show that strong metastability is closely tied to regions in the state space that bottleneck the probability flow. This will further imply that any slow-mixing reversible Markov chain must necessarily have strongly metastable distributions.

To this end let us first define \emph{conductance}, which captures the ease with which probability can flow out of a certain region in phase space.

\begin{definition}[Conductance]
Given a Markov chain $P$ on a state space $\scc$ with stationary state $\mu$. The conductance of any set $A \subseteq \scc$,
\begin{equation}
    \Gamma(A) \coloneqq \frac{\sum_{j \in A, i \in A^c} P(i|j)\mu(j)}{\sum_{j \in A} \mu(j)}
\end{equation}
From this, the conductance of the chain is defined as,
\begin{equation}
    \Gamma \coloneqq \min_{A: \mu(A) < 1/2} \Gamma(A)
\end{equation}
\end{definition}

Conductance is a well-studied property of Markov chains and forms the basis of many classic results in randomized algorithms \cite{jerrum1988conductance, dyer1991random, diaconis1991geometric, aldous2002reversible}. The usefulness of this property comes from the Cheeger bound, which connects it to the spectral gap of the chain and in turn to the mixing time.

\begin{lemma}[Cheeger bound \cite{jerrum1988conductance}]
Let $\lambda_2$ be the second largest eigenvalue of a reversible, irreducible Markov chain $P$ with conductance $\Gamma$. Then,
\begin{equation}
    2\Gamma \geq 1 -\lambda_2 \geq\frac{\Gamma^2}{2}
\end{equation}
\end{lemma}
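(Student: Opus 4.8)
The plan is to derive both bounds from the variational characterization of the spectral gap. Write $\mu$ for the stationary distribution, equip real functions on $\scc$ with $\langle f,g\rangle_\mu=\sum_i\mu(i)f(i)g(i)$, and set $Q(i,j):=\mu(i)P(j|i)$, which is symmetric by detailed balance; reversibility then says $P$ is self-adjoint for $\langle\cdot,\cdot\rangle_\mu$, so with the Dirichlet form $\mathcal{E}(f,f):=\langle f,(I-P)f\rangle_\mu=\tfrac12\sum_{i,j}Q(i,j)\,(f(i)-f(j))^2$ one has $1-\lambda_2=\min\{\mathcal{E}(f,f)/\langle f,f\rangle_\mu:\ \langle f,\mathbf 1\rangle_\mu=0,\ f\neq0\}$. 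I would also record the bookkeeping identity $\sum_{i,j}Q(i,j)\big(g(i)^2+g(j)^2\big)=2\langle g,g\rangle_\mu$, used below. For the easy bound $1-\lambda_2\le2\Gamma$, take a set $A$ with $\mu(A)<1/2$ (essentially) achieving the conductance minimum and test with $f=\mathbf 1_A-\mu(A)$: a short computation gives $\mathcal{E}(f,f)=\sum_{j\in A,\,i\in A^c}P(i|j)\mu(j)$ and $\langle f,f\rangle_\mu=\mu(A)(1-\mu(A))\ge\mu(A)/2$, hence $\mathcal{E}(f,f)/\langle f,f\rangle_\mu\le2\Gamma(A)=2\Gamma$.

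For the substantive bound $1-\lambda_2\ge\Gamma^2/2$ I would first reduce the $\lambda_2$-eigenfunction to a one-sided nonnegative function on a small-measure support. Let $f$ satisfy $(I-P)f=(1-\lambda_2)f$; since $\langle f,\mathbf1\rangle_\mu=0$, both $\{f>0\}$ and $\{f<0\}$ are nonempty, so replacing $f$ by $-f$ if needed we may assume $\mu(\{f>0\})\le1/2$, and we set $g:=f^+=\max(f,0)$. Because $g\,f=g^2$ pointwise we get $\langle g,f\rangle_\mu=\langle g,g\rangle_\mu$ and $\langle g,g-f\rangle_\mu=0$; and because $g\ge0$, $f-g\le0$, and $P$ has nonnegative entries, one has $\langle g,P(f-g)\rangle_\mu\le0$, which rearranges to $\mathcal{E}(g,g)=\langle g,(I-P)g\rangle_\mu\le\langle g,(I-P)f\rangle_\mu=(1-\lambda_2)\langle g,g\rangle_\mu$. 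Thus it suffices to prove $\mathcal{E}(g,g)\ge\tfrac{\Gamma^2}{2}\langle g,g\rangle_\mu$ for every nonnegative $g$ whose support has $\mu$-measure at most $1/2$.

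The heart of the argument is a Cauchy--Schwarz plus co-area estimate applied to $B:=\sum_{i,j}Q(i,j)\,|g(i)^2-g(j)^2|$. Writing $|g(i)^2-g(j)^2|=|g(i)-g(j)|\,|g(i)+g(j)|$ and using Cauchy--Schwarz together with $(g(i)+g(j))^2\le2(g(i)^2+g(j)^2)$ and the identity above gives $B\le\sqrt{2\,\mathcal{E}(g,g)}\cdot2\sqrt{\langle g,g\rangle_\mu}$. For the reverse inequality, use $|g(i)^2-g(j)^2|=\int_0^\infty\mathbf 1[\text{exactly one of }i,j\in A_t]\,dt$ with $A_t:=\{i:g(i)^2\ge t\}$; exchanging sum and integral yields $B=\int_0^\infty2\,Q(A_t,A_t^c)\,dt$, and since $A_t\subseteq\operatorname{supp}g$ has $\mu(A_t)\le1/2$ for every $t>0$, the definition of conductance gives $Q(A_t,A_t^c)\ge\Gamma\,\mu(A_t)$, so with $\int_0^\infty\mu(A_t)\,dt=\langle g,g\rangle_\mu$ we obtain $B\ge2\Gamma\langle g,g\rangle_\mu$. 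Combining the two bounds on $B$, squaring, and cancelling a factor of $\langle g,g\rangle_\mu$ gives $\mathcal{E}(g,g)/\langle g,g\rangle_\mu\ge\Gamma^2/2$, which together with the reduction step finishes the proof.

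The routine parts are the upper bound and the algebraic identities; the main obstacle is the lower bound, in two respects. First, the passage from the oscillating eigenfunction $f$ to the nonnegative $g$ on a small-measure set requires checking the sign inequality $\langle g,P(f-g)\rangle_\mu\le0$ and arranging $\mu(\{f>0\})\le1/2$ carefully. Second, the co-area identity $B=\int_0^\infty2\,Q(A_t,A_t^c)\,dt$ is where the isoperimetric content of $\Gamma$ actually enters, and pairing it with the (lossy, hence the square in $\Gamma^2$) Cauchy--Schwarz step is the crux of the Cheeger direction.
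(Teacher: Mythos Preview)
Your proof is correct and follows the standard Jerrum--Sinclair argument: the variational/Dirichlet-form characterization of $1-\lambda_2$, the indicator test function for the easy direction, and for the hard direction the reduction from the eigenfunction $f$ to its positive part $g$ followed by the Cauchy--Schwarz/co-area estimate on $\sum_{i,j}Q(i,j)\,|g(i)^2-g(j)^2|$. The only cosmetic point is that the paper's definition of conductance uses the strict inequality $\mu(A)<1/2$, so when you invoke $Q(A_t,A_t^c)\ge\Gamma\,\mu(A_t)$ for level sets with $\mu(A_t)=1/2$ you are tacitly using the version with $\le$; this is harmless (either switch to the complement or pass to the infimum), but worth a one-line remark.

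As for comparison with the paper: there is nothing to compare. The paper does not prove this lemma; it simply quotes the Cheeger bound with a citation and uses it as a black box to connect slow mixing to the existence of a low-conductance set. Your write-up therefore supplies strictly more than the paper does here.
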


Now let $1 = \lambda_1 > \lambda_2 \geq \ldots \lambda_{|\scc|} > -1 ,$ be the eigenvalues of a reversible, ergodic, irreducible chain. It is well known that the mixing time of such a chain is determined by the inverse spectral gap, $\tau_{mix} = \Theta(1/(1 -\lambda_2))$ \cite{levin2017markov, aldous2002reversible}. 

Now let us look at the implications of these statements for a reversible Markov chain that is attempting to sample from a family of distributions on $n$  discrete variables, each taking a value from a set $Q$. In this case $\scc = Q^n$. Furthermore, assume that the chain is only allowed to make moves that change at most one variable at a time. This condition is satisfied by both Glauber dynamics and the Metropolis-Hastings algorithm.  Slow mixing in such a chain implies that the mixing time of the chain scales exponentially with $n$. From the Cheeger bound this implies that there exists a set with an exponentially small conductance. We will now show that such a set will necessarily support a strongly metastable distribution.

For $A \subset \scc$ define the following distribution,
\begin{equation} \label{eq:meta_explicit}
        \mu_A(\usigma) \coloneqq
    \begin{cases}
   \frac{\mu(\usigma)}{\mu(A)},~~&\usigma \in A, \\
   0,~~&\usigma \in A^c. \\
    \end{cases}
\end{equation}

Observe that the detailed balance condition is only violated at the boundary of this set. From this observation we can directly compute the strong metastability of such a state,
\begin{multline}
    \frac12\sum_{\usigma,\usigma'\in \scc } |P(\usigma|\usigma')\mu_A(\usigma')- P(\usigma'|\usigma)\mu_A(\usigma)| =  \\
        \sum_{\usigma \in A, \usigma' \in A^c } |P(\usigma'|\usigma)\mu_A(\usigma)| = \Gamma (A).
\end{multline}
Thus by applying Cheeger bound to a slow-mixing reversible Markov chain $P$ with stationary distribution $\mu$, we conclude that there exists an $\eta$-strongly metastable distribution with $\eta$ exponentially small in the number of spins.

In statistical physics, the idea of a metastable state is connected to the minima of a free energy function \cite{griffiths1966relaxation}. We also explore this connection in Section \ref{sec:numerics} and demonstrate the existence of strongly metastable distributions supported around the minima of the free energy of the Curie-Weiss model.

    \subsection{Robustness of strong metastability}
To gauge the utility of the notion of strong metastability, we need to understand how it behaves when a mixture of such distributions is taken and how it behaves under dynamics. We show that the property of strong metastability is robust to both these actions.

\begin{proposition}\label{prop:robust_mix}
Let $\{\nu_k\}$ be a collection of $\eta_k$-strongly metastable distributions. For any convex combination $\nu = \sum_k p_k\nu_k$, with $p_k \ge 0$ and $\sum_k p_k = 1$, the distribution $\nu$ is $\left(\sum_k p_k\eta_k\right)$-strongly metastable.
\end{proposition}

This property is important because the dynamics can transition between metastable states, so the data used for learning may be drawn from mixtures of such distributions. It is also necessary to make sure that a strongly metastable distribution retains this property as it evolves according to the Markov chain. For the weaker notion of metastability, data processing inequality implies that if $\nu$ is $\eta$-metastable, then so is $\nu P$. We can show that an similar statement can be made for strong metastability.

\begin{proposition}\label{prop:robust_dynamics}
Let $\nu$ be an $\eta$-strongly metastable distribution for a reversible Markov chain $P$. Then, for every integer $t \ge 1$, the distribution $\nu P^{t}$ is $(1 +  2 t)\eta$-strongly metastable.
\end{proposition}


Proofs of Propositions \ref{prop:robust_mix} and \ref{prop:robust_dynamics} are given in \SupI~\ref{appendix:robustness_proofs}.

\section{Learning from metastable distributions}

Now that we have established that interesting cases of strongly metastable distributions exist, we will show that the energy function of the true distribution can be learned given samples from a strongly metastable distribution.
The key result, that we will exploit to learn from metastable distributions, is the fact that strong metastability when combined with the time reversibility of the chain, implies that the single-variable conditionals of the metastable distribution are close to those of the stationary state of the Markov chain. Building on this, we then use prior information on the true model and stochastic convex optimization techniques to show that learning from metastable distributions can be successfully performed using the Pseudo-likelihood method.

To this end, we assume a technical condition on the transition probability of a reversible Markov chain.
\vspace{0.05in}
\begin{condition}[Bounded spin-flip probability]
\label{cond:bounded_flip}
For a reversible chain $P$, let $P(\usigma_{u \rightarrow q}|\usigma)$  be the probability of variable at position  $u$ being changed to $q \in Q$  when the system is in the configuration $\usigma$. Then we assume that the ratio of this to the single-variable conditionals is lower bounded by a quantity that depends on $P$,
\begin{equation}
   \frac{ P(\usigma_{u \rightarrow q}|\usigma)}{ \mu(q| \usigma_{\su})}  \geq \omega_P > 0, ~~\forall ~\usigma_{\setminus u } \in Q^{n-1}, ~~q \in Q, ~u \in [n].
\end{equation}
\end{condition}

The bound in the above condition is satisfied by both Glauber dynamics and the Metropolis-Hastings sampler \cite{mackay2003information}.
For Glauber dynamics, by definition of the algorithm $\omega_P = \frac{1}{n}$. For the Metropolis-Hastings Markov chain, $\omega_P $ will also be proportional to $1/n$, with the ratio between them only depending on the inverse temperature of the chain.

We will later see that $\omega_P$ will control the error in the learning procedure and metastable states with small $\eta/\omega_P$ give us good learning guarantees.

Now we state one of the main results of the paper, the proof of this result is given in \SupI ~\ref{appendix:proof_thm1}

\begin{theorem}[Conditionals of strong metastable distributions]
\label{thm:close_condtionals}
Consider a system of $n$  discrete random  variables where each of them can take values  from  the set $Q$.  Let $\nu$ be an $\eta-$strongly metastable distribution of a reversible Markov chain $P$ with a stationary state $\mu$. If this chain satisfies Condition \ref{cond:bounded_flip}, then the single-variable conditionals of $\nu$ are close to those of $\mu$ in the following sense,
\begin{equation}
  \sum_{u = 1}^n  \sum_{\sigma \in Q^n} \nu(\sigma)  \left| \nu(.|\usigma_{\su}) - \mu(.|\usigma_{\su})\right|_{TV} \leq \frac{\eta}{\omega_P}
\end{equation}
\end{theorem}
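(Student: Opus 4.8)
The plan is to localize both sides of the claimed bound onto \emph{fibers} of the configuration space --- the sets of configurations that agree on all coordinates but one --- and to exploit that detailed balance of $\mu$ with $P$, when restricted to single-variable flips, directly ties together the single-variable conditionals of $\nu$ and of $\mu$.

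First I would rewrite the left-hand side. Since $|\nu(.|\usigma_{\su})-\mu(.|\usigma_{\su})|_{TV}$ depends on the configuration only through $\usigma_{\su}$, summing $\nu$ over the value at coordinate $u$ first gives
\begin{equation*}
\sum_{u=1}^{n}\sum_{\usigma\in Q^n}\nu(\usigma)\,|\nu(.|\usigma_{\su})-\mu(.|\usigma_{\su})|_{TV}=\sum_{u=1}^{n}\sum_{\usigma_{\su}\in Q^{n-1}}\nu(\usigma_{\su})\,|\nu(.|\usigma_{\su})-\mu(.|\usigma_{\su})|_{TV}.
\end{equation*}
Now fix $u$ and $\usigma_{\su}$, and for $q\in Q$ let $\usigma^{(q)}$ be the configuration equal to $\usigma_{\su}$ off coordinate $u$ and equal to $q$ at coordinate $u$. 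Write $a_q=\nu(\usigma^{(q)})$, $A=\sum_q a_q=\nu(\usigma_{\su})$, $b_q=\mu(q|\usigma_{\su})$, and $p_{q\to q'}=P(\usigma^{(q')}|\usigma^{(q)})$. Positivity of the Gibbs distribution $\mu$ guarantees $b_q>0$, and detailed balance yields $p_{q\to q'}b_q=p_{q'\to q}b_{q'}=:r_{qq'}$, a symmetric quantity. Setting $c_q=a_q/b_q$, the summand of the strong-metastability functional for this pair becomes $|P(\usigma^{(q)}|\usigma^{(q')})\nu(\usigma^{(q')})-P(\usigma^{(q')}|\usigma^{(q)})\nu(\usigma^{(q)})|=r_{qq'}\,|c_q-c_{q'}|$.

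The second step bounds the fiber's conditional TV distance. Since $\sum_q b_q c_q=\sum_q a_q=A$, one has
\begin{equation*}
2A\,|\nu(.|\usigma_{\su})-\mu(.|\usigma_{\su})|_{TV}=\sum_q|a_q-Ab_q|=\sum_q b_q\,|c_q-A|\le\sum_{q\ne q'}b_q b_{q'}\,|c_q-c_{q'}|,
\end{equation*}
the last inequality by writing $c_q-A=\sum_{q'}b_{q'}(c_q-c_{q'})$ and applying the triangle inequality. Condition~\ref{cond:bounded_flip}, applied at configuration $\usigma^{(q)}$ with target value $q'$, gives $p_{q\to q'}\ge\omega_P b_{q'}$, hence $b_q b_{q'}\le\omega_P^{-1}b_q p_{q\to q'}=\omega_P^{-1}r_{qq'}$. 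Combining the two displays,
\begin{equation*}
\nu(\usigma_{\su})\,|\nu(.|\usigma_{\su})-\mu(.|\usigma_{\su})|_{TV}\le\frac{1}{2\omega_P}\sum_{q\ne q'}r_{qq'}\,|c_q-c_{q'}|,
\end{equation*}
which is $\tfrac{1}{2\omega_P}$ times the contribution of this fiber (summed over ordered pairs $q\ne q'$) to $\sum_{i,j\in\scc}|P(i|j)\nu(j)-P(j|i)\nu(i)|$.

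Finally I would sum over $u$ and $\usigma_{\su}$. Every ordered pair of configurations differing in exactly one coordinate corresponds to a unique tuple $(u,\usigma_{\su},q,q')$, so the sum of the fiber contributions is at most $\sum_{i,j\in\scc}|P(i|j)\nu(j)-P(j|i)\nu(i)|\le 2\eta$ by $\eta$-strong metastability, which yields the bound $\eta/\omega_P$. The only points requiring care are this last counting argument together with the matching factor of $\tfrac12$ in the definition of strong metastability, and the appeal to strict positivity of $\mu$ so that $c_q$ and the ratio in Condition~\ref{cond:bounded_flip} are well defined; I do not expect either to pose a real difficulty.
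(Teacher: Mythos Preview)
Your proof is correct and is essentially the same argument as the paper's, only organized in the opposite direction: the paper starts from the strong-metastability sum, restricts to single-variable transitions, applies detailed balance and Condition~\ref{cond:bounded_flip}, and then invokes the same algebraic inequality $|a_k-b_k|\le\sum_{j\ne k}|a_kb_j-a_jb_k|$ (for probability vectors $a,b$) to extract the conditional TV; you instead begin from the conditional TV on each fiber, apply the identical inequality via your ratio variables $c_q$, and then use Condition~\ref{cond:bounded_flip} to reach the detailed-balance-violation terms. The key lemma, the use of reversibility, and the counting are the same in both.
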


This is a central observation in this paper which says that a strong metastable distribution has on average single-variable conditionals that are very close to that of the equilibrium distribution. This result will be the basis for showing that the parameters of the true distribution ($\mu$) can be learned given samples from ($\nu$). This is due to the fact that methods like PL and Interaction Screening that efficiently learn discrete Gibbs distributions rely on minimizing average metrics over the single-variable conditionals. 

Theorem \ref{thm:close_condtionals} should be contrasted with the explicit constructions in Equation \eqref{eq:meta_explicit} that gave strongly metastable distributions that are considerably far from the equilibrium in terms of global TV.  According to that construction, the TV distance between the metastable distribution from the equilibrium distribution is, $|\mu_A - \mu|_{TV} = 1 - \mu(A),$ which is generally not an exponentially small quantity even if the Markov chain mixes slowly.  On the other hand, the distance in Theorem \ref{thm:close_condtionals}  for $\nu = \mu_A$ is $\Gamma(A)/\omega_P$  which can be exponentially small if $A$ is a set that causes slow mixing in the Markov chain.  We demonstrate this fact explicitly for the Curie-Weiss model in Section~\ref{sec:numerics}. 

Theorem \ref{thm:close_condtionals} implies that the metastable distribution supported on one of the minima of the free energy function has conditionals that are very close to the true distribution. 
This can be true even if the true model has very low mass within the support of the metastable state, as evidenced by the numerical experiments in Section \ref{sec:numerics}.
Consequently, learning algorithms that minimize global distance measures between distributions will not fare well for learning $\mu$ given data from a strongly metastable state. For instance, Maximum likelihood estimation minimizes the KL divergence between the data distribution and a proposed parametric hypothesis. The KL divergence between $\mu_A$ and $\mu$ is given by $-\log(\mu(A))$. Thus the true model $\mu$ is very far from being optimal for MLE when data comes from the metastable state $\mu_A$. Also, the MLE can be seen as trying to match the sufficient statistics of the data to the hypothesized model \cite{mackay2003information}, and it is usually the case that sufficient statistics of metastable distributions do not match that of the stationary distribution. So, for the purpose of recovering the true model from a metastable state, we have to rely on minimizing the distance between conditionals and not global metrics. 

Theorem \ref{thm:close_condtionals} can be modified  to give closeness in terms of other metrics like the average KL divergence between conditionals. This is a simple consequence of the reverse Pinsker inequality (Lemma 4.1 in \cite{10.1214/19-EJP338}), which upper bounds the KL divergence between two distributions in terms of the TV. Given distributions $p$ and $q$, we have, $D_{KL}(p || q) \leq \frac{2}{\min_{i} q(i)} | p -q|_{TV} $  . Using this relation directly on Theorem \ref{thm:close_condtionals}  we get,

\begin{corollary} \label{cor:KLclose}
For distributions $\mu$ and $\nu$ as defined in Theorem \ref{thm:close_condtionals}, for every $u \in [n]$  we have the following bound,
\begin{equation}
   \sum_{u=1}^n\Enu D_{KL} \left( \nu(.|\usigma_{\su}) || \mu(.|\usigma_{\su})\right) \leq \frac{2\eta}{\omega_P \min_{u,\usigma} \mu(\sigma_u| \sigma_{\su})}.
\end{equation}
\end{corollary}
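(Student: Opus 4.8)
\textbf{Proof proposal for Corollary \ref{cor:KLclose}.}
The plan is to apply the reverse Pinsker inequality pointwise in the conditioning configuration and then average, reducing everything to Theorem \ref{thm:close_condtionals}. First I would fix a site $u \in [n]$ and a configuration $\usigma \in Q^n$, and view $p = \nu(\,\cdot\,|\usigma_{\su})$ and $q = \mu(\,\cdot\,|\usigma_{\su})$ as two probability distributions on the finite alphabet $Q$. Applying the reverse Pinsker inequality (Lemma 4.1 of \cite{10.1214/19-EJP338}) to this pair gives
\begin{equation}
D_{KL}\!\left(\nu(\,\cdot\,|\usigma_{\su})\,\|\,\mu(\,\cdot\,|\usigma_{\su})\right) \;\leq\; \frac{2}{\min_{q\in Q}\mu(q|\usigma_{\su})}\,\left|\nu(\,\cdot\,|\usigma_{\su}) - \mu(\,\cdot\,|\usigma_{\su})\right|_{TV}.
\end{equation}

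The second step is to make the prefactor uniform. Since $\min_{q\in Q}\mu(q|\usigma_{\su}) \geq \min_{u,\usigma}\mu(\sigma_u|\sigma_{\su})$ for every $u$ and every $\usigma$, I can replace the configuration-dependent denominator by the global minimum, at the cost of only an inequality in the right direction:
\begin{equation}
D_{KL}\!\left(\nu(\,\cdot\,|\usigma_{\su})\,\|\,\mu(\,\cdot\,|\usigma_{\su})\right) \;\leq\; \frac{2}{\min_{u,\usigma}\mu(\sigma_u|\sigma_{\su})}\,\left|\nu(\,\cdot\,|\usigma_{\su}) - \mu(\,\cdot\,|\usigma_{\su})\right|_{TV}.
\end{equation}

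Finally I would take the expectation $\EX_{\usigma\sim\nu}$ on both sides and sum over $u = 1,\dots,n$. Because the prefactor is now a constant independent of $\usigma$ and $u$, it pulls out of the sum and the expectation, leaving
\begin{equation}
\sum_{u=1}^n \Enu D_{KL}\!\left(\nu(\,\cdot\,|\usigma_{\su})\,\|\,\mu(\,\cdot\,|\usigma_{\su})\right) \;\leq\; \frac{2}{\min_{u,\usigma}\mu(\sigma_u|\sigma_{\su})}\,\sum_{u=1}^n \sum_{\sigma\in Q^n}\nu(\sigma)\left|\nu(\,\cdot\,|\usigma_{\su}) - \mu(\,\cdot\,|\usigma_{\su})\right|_{TV},
\end{equation}
and the double sum on the right is bounded by $\eta/\omega_P$ by Theorem \ref{thm:close_condtionals}, which yields the claimed bound.

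There is essentially no hard step here: the statement is a direct consequence of Theorem \ref{thm:close_condtionals} combined with a known inequality. The only points that require a moment's care are (i) checking that the reverse Pinsker constant $2/\min_q\mu(q|\usigma_{\su})$ can be majorized by a quantity that does not depend on the conditioning configuration so that it survives being pulled out of $\Enu$ and the sum over $u$, and (ii) recognizing that the weighted sum $\sum_{\sigma\in Q^n}\nu(\sigma)|\cdot|_{TV}$ appearing in Theorem \ref{thm:close_condtionals} is exactly $\Enu$ of the conditional TV distance (the dependence on $\sigma$ being only through $\usigma_{\su}$, so the weighting is consistent with the marginal of $\nu$ on the conditioning variables).
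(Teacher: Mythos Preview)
Your proposal is correct and follows exactly the paper's approach: apply the reverse Pinsker inequality $D_{KL}(p\|q)\leq \frac{2}{\min_i q(i)}|p-q|_{TV}$ pointwise to the single-variable conditionals, replace the configuration-dependent denominator by the global minimum, and then invoke Theorem~\ref{thm:close_condtionals} after averaging and summing over $u$. The paper treats this as an immediate consequence and gives no more detail than you do.
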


This metric is specifically important to bound, as the PL estimator precisely minimizes the average KL divergence between conditionals . The lower bound on the conditional is a natural quantity that controls the error in these types of learning results \cite{Klivans2017, lokhov2018optimal, ravikumar2010high, santhanam2012information}. For instance, in the Sparsitron algorithm \cite{Klivans2017}, the authors use the term \emph{$\delta-$unbiased } conditionals to refer to the existence of such a lower bound. 
\subsection{Test error bound for multi-alphabet pseudo-likelihood method / logistic regression}

We will now look at the consequences of the results above for learning the distribution $\mu$ given independent samples from an $\eta-$strongly metastable distribution $\nu$. For this purpose we will parametrize the ground truth distribution as a Gibbs distribution associated with an energy function, 
    $\mu(\usigma) \propto \exp(E^*(\usigma)).$

Here is $E^*$ is a general real valued function on the state space $Q^n$ and for strictly positive distributions such an energy-based representation does not restrict the distribution in any way. We will be using PL to learn the true energy function associated with $\mu$.

Now for the PL algorithm it is necessary to have a parametric hypothesis for the energy. The PL method is then essentially a regression over this parametric family to learn the true energy. Denote these parametric energy functions as $E(\usigma, \utheta)$ where $\utheta$ is a set of parameters. For example, one particular choice of a parametric family could be polynomials of a certain degree over the variables $\usigma$, in which case $\utheta$ can be chosen as the coefficients in the polynomial. Another parametrization, that has gained popularity in ML literature, is a neural-network based parametrization in which case $\utheta$ can be seen as the set of weights and biases of the network  \cite{song2021train,abhijith2020learning}. We assume that possible choices for $\utheta$ lie in a set $\Theta$, which is informed by the prior information we have about the data. The most common example of such a $\Theta$ is the $\ell_1$ ball, which is often chosen to enforce sparsity in learning tasks. Furthermore we demand that the true model lies inside this hypothesis set, i.e. that there is a $\utheta^* \in \Theta$ such that $E(\usigma, \utheta^*) = E^*(\usigma)$.

Given a parametric energy function, this defines a parametric distribution  $p(\usigma, \utheta) \propto \exp(E(\usigma, \utheta)),$ on the $\usigma$ variables. This in turn defines the following paramteric single-variable conditionals,

\begin{equation}
    p(\sigma_u = q|\usigma_{\setminus u}; \utheta) = \frac{1}{1 + \sum_{p \in Q, p\neq q } e^{ E(\usigma_{u \rightarrow p}, \utheta ) - E(\usigma_{u \rightarrow q}, \utheta)}}
\end{equation}

Now given $M$ independent samples $\usigma^{(1)}, \usigma^{(2)}, \ldots \usigma^{(M)}$  each drawn from a metastable distribution $\nu$, the PL estimate for the model parameters is given by minimizing the average negative log-likelihood of the single-variable conditionals.

\begin{equation} \label{eq:PL_loss_gen}
   \hutheta  = \argmin{\utheta \in \Theta}  \frac1{M}\sum_{u=1}^n \sum_{t = 1}^M ~ \lc_u(\utheta, \usigma^{(t)}).
\end{equation}
\begin{equation}
\lc_u(\utheta, \usigma) := \log\left( 1 +  \sum_{p \in [Q], p\neq q } e^{E(\usigma_{u \rightarrow p}, \utheta ) - E(\usigma_{u \rightarrow q}, \utheta)}\right).
\end{equation}

Now if the samples came from $\mu,$ it is well established that this is a consistent estimator of the model parameter \cite{lokhov2018optimal, ravikumar2010high, besag1974spatial}. Our results in the preceding section suggest that this is still a good estimator if the samples come from a strongly metastable state. We can rigorously show that this is indeed the case rather easily if we assume the following condition on the parametric energy function,

\begin{condition} [Finite interaction strength]
\label{cond:temperature_gen}
For every $\utheta \in \Theta$ (including $\utheta^*$), the change in energy function caused by perturbing  the value of $\usigma$ at a single site $u \in [n]$ is upper bounded as follows,
\begin{equation}
  \frac12 |E(\usigma,\utheta) - E(\usigma_{u \rightarrow q}, \utheta)| \leq \gamma, ~~\forall \usigma \in Q^n, q \in Q, u \in [n]. 
\end{equation}
\end{condition}

For the case of polynomial energy functions often considered in the rigorous learning theory literature, this bound essentially boils down to an upper bound on the $\ell_1$ norm of the parametric coefficients multiplying the model. Further for the specific case of  Ising model energy functions defined on bounded degree graphs, the $\gamma$ here will just be the product of an inverse temperature of the model and the degree of the graph. 

The finite interaction strength condition directly leads to the following bound on the single-variable conditionals,
\begin{equation}
\label{eq:condtional_bounds}
 \frac{1}{1 + (|Q|-1) e^{2  \gamma }}\leq   p(\sigma_u | \usigma_{\su }, \utheta) \leq \frac{1}{1 +  (|Q| - 1)e^{ -2  \gamma}}.
\end{equation}
    
Given this bound on the conditionals and in turn on the conditional likelihood, we can bound the test error of the PL method. For the parameters $\hutheta$ learned from samples drawn from a metastable state, we can bound the test deviation of the PL loss function at $\hutheta$ from true model at $\utheta^*.$

\begin{theorem}
 [Test error for logistic regression with metastable samples]
\label{thm:PL_close}
Given $M'$ independent samples $\usigma^{(1)}, \ldots, \usigma^{(M')}$, from an $\eta$-strongly metastable distribution $\nu$ of a reversible Markov chain, the true graphical model parameters are nearly optimal for PL in the following sense,
 \begin{align}
  \frac{1}{M'}\sum_{t,u}\lc_u(\utheta^*, \usigma^{(t)}) -  \lc_u(\hutheta, \usigma^{(t)}) \leq  \frac{2(1 + (|Q|-1) e^{2\gamma}) \eta}{\omega_P} + \nonumber  \\ \log(e^{2\gamma} (|Q|-1) )\sqrt{ \frac{ \log(\frac{1}{\delta})}{2M'}} .
\end{align}
 Here the quantities $\omega_P$ and $\gamma$ are related to the Markov chain and the prior in the PL estimation via conditions \ref{cond:bounded_flip} and \ref{cond:temperature_gen}.  
 \end{theorem}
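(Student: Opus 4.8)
The plan is to read the empirical pseudo-likelihood objective as a Monte-Carlo estimate of a population cross-entropy that decomposes into a $\utheta$-independent term plus an average Kullback--Leibler divergence between single-variable conditionals, and then to separately control (i) the \emph{bias} incurred because the data come from $\nu$ rather than $\mu$, using Corollary~\ref{cor:KLclose}, and (ii) the \emph{statistical fluctuation} of the empirical minimizer $\hutheta$, using a bounded-differences concentration argument that exploits the conditional bounds~\eqref{eq:condtional_bounds}. The point to keep in mind is that $\nu$ and $\mu$ can be arbitrarily far apart in global metrics, so every comparison between them must pass through the single-variable conditionals supplied by Theorem~\ref{thm:close_condtionals}.

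Write $\lc_u(\utheta,\usigma)=-\log p(\sigma_u\,|\,\usigma_{\su};\utheta)$ and set $\lc_\nu(\utheta):=\sum_{u=1}^n\Enu\lc_u(\utheta,\usigma)$. Conditioning the expectation first on $\usigma_{\su}$ and then summing over $\sigma_u$ gives
\begin{align}
\lc_\nu(\utheta)=\sum_{u=1}^n\Enu\Big[\,&H\big(\nu(\cdot|\usigma_{\su})\big) \nonumber\\
&+D_{KL}\big(\nu(\cdot|\usigma_{\su}) \,||\, p(\cdot|\usigma_{\su};\utheta)\big)\Big],
\end{align}
with $H$ the Shannon entropy; the first summand does not depend on $\utheta$. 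Since $E(\cdot,\utheta^*)=E^*$ we have $p(\cdot|\usigma_{\su};\utheta^*)=\mu(\cdot|\usigma_{\su})$, so dropping the nonnegative divergence at the population minimizer and invoking Corollary~\ref{cor:KLclose} together with the lower bound $\min_{u,\usigma}\mu(\sigma_u|\usigma_{\su})\ge(1+(|Q|-1)e^{2\gamma})^{-1}$ from~\eqref{eq:condtional_bounds} yields
\begin{align}
\lc_\nu(\utheta^*)-\min_{\utheta\in\Theta}\lc_\nu(\utheta)&\ \le\ \sum_{u=1}^n\Enu D_{KL}\big(\nu(\cdot|\usigma_{\su}) \,||\, \mu(\cdot|\usigma_{\su})\big) \nonumber\\
&\ \le\ \frac{2\,(1+(|Q|-1)e^{2\gamma})\,\eta}{\omega_P},
\end{align}
which is the first term in the claimed bound.

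For the statistical part, let $\widehat{\lc}(\utheta):=\frac{1}{M'}\sum_{t,u}\lc_u(\utheta,\usigma^{(t)})$, so that the left-hand side of the theorem is exactly $A:=\widehat{\lc}(\utheta^*)-\widehat{\lc}(\hutheta)=\widehat{\lc}(\utheta^*)-\min_{\utheta\in\Theta}\widehat{\lc}(\utheta)$, which I treat as a scalar function of the i.i.d. sample $\usigma^{(1)},\dots,\usigma^{(M')}$. By~\eqref{eq:condtional_bounds} each $\lc_u(\utheta,\usigma)$ takes values in an interval of width at most $\log\!\big(e^{2\gamma}(|Q|-1)\big)$, so replacing one sample perturbs both $\widehat{\lc}(\utheta^*)$ and $\min_\utheta\widehat{\lc}(\utheta)$ by a controlled amount and $A$ obeys a bounded-difference inequality; McDiarmid's inequality then gives, with probability at least $1-\delta$, $A\le\EX[A]+\log\!\big(e^{2\gamma}(|Q|-1)\big)\sqrt{\log(1/\delta)/(2M')}$. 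Finally $\EX[\widehat{\lc}(\utheta^*)]=\lc_\nu(\utheta^*)$ and $\EX[\min_\utheta\widehat{\lc}(\utheta)]\le\min_\utheta\EX[\widehat{\lc}(\utheta)]=\min_\utheta\lc_\nu(\utheta)$ by Jensen's inequality, so $\EX[A]\le\lc_\nu(\utheta^*)-\min_\utheta\lc_\nu(\utheta)$, which is precisely the bias bound of the previous paragraph. Summing the two contributions completes the proof.

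The main obstacle is step (ii): because $\hutheta$ is data-dependent, the naive route---splitting $A$ into the concentration of the fixed quantity $\widehat{\lc}(\utheta^*)$ plus $\sup_{\utheta\in\Theta}\big|\widehat{\lc}(\utheta)-\lc_\nu(\utheta)\big|$---would bring in the complexity (covering number) of $\Theta$, which is absent from the stated bound. The resolution is to concentrate the single scalar $A=\widehat{\lc}(\utheta^*)-\min_\utheta\widehat{\lc}(\utheta)$ directly and to bound its mean through the elementary inequality $\EX[\min]\le\min[\EX]$, so that only one Hoeffding-type deviation survives; pinning the bounded-difference constant down to the claimed $\log(e^{2\gamma}(|Q|-1))$ additionally requires using the cancellation between the two terms defining $A$ rather than bounding them separately. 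Everything else---the KL decomposition, the reverse-Pinsker step behind Corollary~\ref{cor:KLclose}, and the conditional bounds~\eqref{eq:condtional_bounds}---is routine once Theorem~\ref{thm:close_condtionals} is in hand.
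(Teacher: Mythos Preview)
Your bias analysis via the KL decomposition and Corollary~\ref{cor:KLclose} is exactly right and matches the paper. The fluctuation analysis, however, has a genuine error: the inequality $\EX[\min_\utheta\widehat{\lc}(\utheta)]\le\min_\utheta\lc_\nu(\utheta)$ that you invoke gives $\EX[A]\ge\lc_\nu(\utheta^*)-\min_\utheta\lc_\nu(\utheta)$, not $\le$. (Subtracting a smaller quantity makes the difference larger.) So your McDiarmid route concentrates $A$ around a mean that you can only \emph{lower}-bound by the bias, which is useless for the upper tail you need. The ``cancellation'' you allude to in the bounded-difference constant is also unsubstantiated; changing one sample moves $\widehat{\lc}(\utheta^*)$ and $\min_\utheta\widehat{\lc}(\utheta)$ in directions that need not cancel, and the naive bound on $|A-A'|$ picks up the full range of $\sum_u\lc_u$.

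The paper avoids this entirely because the theorem is a \emph{test}-error statement: $\hutheta$ is obtained from a separate training set of size $M$ (equation~\eqref{eq:PL_loss_gen}), and the $M'$ samples in Theorem~\ref{thm:PL_close} are fresh. With $\hutheta$ fixed relative to these samples, the paper first bounds the \emph{population} gap $\lc_\nu(\utheta^*)-\lc_\nu(\hutheta)\le 2\eta/(\rho\,\omega_P)$ directly from the non-negativity of $D_{KL}\!\big(\nu(\cdot|\usigma_{\su})\,\|\,p(\cdot|\usigma_{\su};\hutheta)\big)$ --- a step that works for \emph{any} $\utheta\in\Theta$, so no comparison with $\min_\utheta\lc_\nu$ is ever needed --- and then applies a single Hoeffding bound to the bounded random variable $\lc_u(\utheta^*,\usigma)-\lc_u(\hutheta,\usigma)$. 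This is why no complexity of $\Theta$ enters and why the fluctuation term carries only one factor of the range $\log((1-\rho)/\rho)$.
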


 The $O(1/\sqrt{M'})$ term here is the statistical error in the estimation of the loss function. Hence, if $M'$  is small enough this statistical error will swamp the $O(\eta/\omega_P)$ bias introduced by having bad data. Looking at the connection between $\eta$ and mixing times, we can see that for slow mixing Markov chains this bias can be exponentially small in $n$. Hence we can conclude that there is a family of strongly metastable distributions for which the PL estimate of the energy function ($\hutheta$) asymptotically $(n,M \rightarrow \infty)$ approaches the true model parameters ($\utheta^*$). This observation is somewhat surprising, as this tells us that the true model can be nearly reconstructed from ``bad'' data using the PL estimator.

 Theorem \ref{thm:PL_close} is a very general result that applies to any discrete distribution as long as the two conditions are satisfied. However, this result by itself does not guarantee that the estimated parameters from \eqref{eq:PL_loss_gen} will closely match that of the true distribution. To give such learning guarantees we have to show that the loss function has sufficiently large curvature near the optimal point.  This would then imply that the small changes in the loss values translate to small changes in the estimated parameters. \footnote{See Figure 2 of Reference \cite{neghaban2012high} for a visual explanation of this point}

\ca

\subsection{Learning Ising models from metastable states}

The result in Theorem \ref{thm:close_condtionals} and the ensuing implications can be applied to general distribution over discrete variables independent of the graphical model framework.

We define the ground-truth Ising model on a set of $n$ spins, $\sigma_u \in \{1,-1\}, u \in V$, where $V = [n]$ is the vertex set of this model. In the most general form,\cite{koller2009probabilistic},
\begin{equation}
    \mu(\usigma) =\dfrac{\exp \left( \sum_{k \in \kc} \theta_k^*  \usigma_k\right)}{Z}.
\end{equation}

Here $\kc \subseteq 2^V$ is the set of interactions present in the model and $\usigma_k \coloneqq \prod_{u \in k} \sigma_u.$  For example, the CW model fits into this form with $\kc$ being the set of all tuples of size less than three. The function in the exponent, $E(\usigma;\utheta^*) \coloneqq  \sum_{k \in \kc} \theta_k^*  \usigma_k $  is known as the \emph{energy function} of this model. Additionally, it is useful to define a \emph{local energy} function $E_u(\usigma)$ for every $u \in V.$ Let $\kc_u = \{k | k \in \kc, u \in k\},$ then the local energy function is  defined as,
\begin{equation}
    E_u(\usigma;\utheta^*) = \sum_{k  \in \kc_u} \theta^*_k \usigma_k
\end{equation}

The importance of the local energy comes from the fact that it determines the single-variable conditionals of the model,
\begin{equation}
    \mu(\sigma_u | \usigma_{\su }) = \frac{1}{1 + \exp \left( -2 E_u(\usigma; \utheta^*) \right)}.
\end{equation}

In the learning task considered in most of the prior works \cite{vuffray2019efficient}, we are given i.i.d. samples from $\mu$ and we attempt to reconstruct the $\utheta^*$ parameters from them. An important feature of the sample complexity of this problem is its fundamental dependence on the strength of the local interactions i.e. $\sum_{k \in \kc} | \theta^*_u |.$ This feature is observed both in the sample complexity of algorithms that solve this problem and also in the known information-theoretic lower bound \cite{santhanam2012information}.  An unbounded interaction strength can cause the sample complexity of the learning task to blow up. This is obvious as the parameters in the energy cannot be reconstructed in general if we only have access to the minimum energy states. To mitigate this intrinsic pathology, we follow prior literature and assume a finiteness condition on the interaction strengths.

\begin{condition} [Finite interaction strength]
\label{cond:temperature}
Define $\utheta^*_u := \{\utheta^*_k| k \in \kc_u\}$. Then the local interaction strengths are bounded as follows,
\begin{equation}
||\utheta^*_u||_1 = \sum_{k \in \kc_u} |\theta^*_k| \leq \gamma < \infty,~~ \forall~u \in V
\end{equation}
\end{condition}

Crucially this condition gives size independent upper and lower bounds on the conditional probabilities which ultimately control the sample complexity of learning algorithms like PL,
\begin{equation}
\label{eq:condtional_bounds}
 \frac{1}{1 + \exp \left(2 \gamma \right)}\leq   \mu(\sigma_u | \usigma_{\su }) \leq \frac{1}{1 + \exp \left( -2 \gamma \right)}.
\end{equation}

In the learning task we are interested in this paper, we assume that the samples come from an $\eta-$ strongly metastable distribution, $\nu$, of reversible Markov chain $P$. Moreover, this Markov chain has $\mu$ as its unique stationary state. As the Markov chain is reversible this implies that the detailed balance equations are satisfied,
\begin{equation}
    P(\usigma'|\usigma) \mu(\usigma) = P(\usigma| \usigma) \mu(\usigma').
\end{equation}

\subsubsection{Pseudo-likelihood method}

Going forward we will focus on PL and explore the consequences of Theorem  \ref{thm:close_condtionals} on this method.

Given samples $\usigma^{(1)}, \ldots, \usigma^{(M)}$ drawn from the metastable distribution $\nu$. The PL estimate of the parameters in the energy function can be written as the solution to the following $\ell_1$ constrained optimization problem  of minimizing a specific loss function\cite{ravikumar2010high},

\begin{equation} \label{eq:PL_loss}
   \hutheta_u  = \argmin{|| \utheta_u||_1 \leq \gamma~}   \frac{1}{M}\sum_{t = 1}^M ~ \lc(\utheta_u, \usigma).
\end{equation}
\begin{equation}
\lc(\utheta_u, \usigma) := \log( 1+  \exp(-2 \sum_{k \in \kc_u} \theta_{k} \usigma_k) ).
\end{equation}

The $\ell_1$ constraint reflects the prior information we have about the strength of the optimal parameters. But we do not assume assume that the $\gamma$ here is the most optimal $\gamma$ we can choose in Condition \ref{cond:temperature}.

Now in the $M \rightarrow \infty$ limit, one can show that this minimizes the average KL divergence between the conditionals of $\nu$ and the single-variable conditionals of the parametrized distribution, $p(\usigma;\utheta) \propto \exp(\sum_{k \in \kc} \theta_k \usigma_k ).$ To see this connection, notice that the single-variable conditionals of the parametric distribution take the form $p(\sigma_u|\usigma_{\su};\utheta) = \frac{1}{1 + \exp(-2 \sum_{k \in \kc_u} \theta_{k} \usigma_k) }.$ Then the above mentioned average KL divergence between conditionals takes the form,

\begin{align}
 \Enu D_{KL}(&\nu(.|\usigma_{\su}) || p(.|\usigma_{\su};\utheta)) =\\ & \sum_{\usigma_{\su}} \nu(\usigma_{\su}) \EXp{\sigma_u \sim \nu(.|\usigma_{\su})}\log\left(\frac{\nu(\sigma_u|\usigma_{\su})}{  p(\sigma_u|\usigma_{\su};\utheta)}\right) \nonumber
\end{align}

We can see that up to an unimportant constant, this is precisely the loss function in \eqref{eq:PL_loss} as $M \rightarrow \infty.$

Now using the results developed so far, we can bound the value of this average divergence when the hypothesis matches the true model, i.e. $\utheta = \utheta^*$. In this case, from Corollary \ref{cor:KLclose} and \eqref{eq:condtional_bounds}, we can see that the the average conditional KL divergence between the metastable state and the true distribution upper bounded by  $\frac{4 (1 + e^{2\gamma}) \eta}{\omega_P}$. In practically interesting models, $\gamma$ is usually a constant independent of $n$. In the previous sections, we showed constructions of strongly metastable distributions where $\eta$ scales as inverse of the mixing time.For such distributions associated with slow mixing Markov chains this upper bound decays exponentially fast with the number of spins in the model. 

Remember that the KL divergence is always non-negative, hence this quantity also upper bounds the difference in the PL loss value between $\theta^*$ and the true optimum $\hat{\theta}.$ From this we can conclude that there is a family of strongly metastable distributions for which the the pseudo-likelihood estimator asymptotically approaches the true model parameter in the limit of infinite learning samples drawn from the metastable distribution. This is an extremely interesting observation, as this tells us that the true model can be nearly reconstructed from `` bad '' data using the PL estimator.

In practice, for finite $M$, there will be an $O(1/\sqrt{M})$ statistical error in the estimation of the loss function. Hence, if $M$  is small enough this statistical error will swamp the $O(\eta/\omega_P)$ bias introduced by having bad data.

However, this result by itself does not guarantee that the estimated parameters from \eqref{eq:PL_loss} will closely match that of the true distribution. To give such learning guarantees we have to show that the loss function has sufficiently large curvature near the optimal point.  This would then imply that the small changes in the loss values translate to small changes in the estimated parameters. \footnote{See Figure 2 of Reference \cite{neghaban2012high} for a visual explanation of this point}
\cb
\subsection{Parameter learning guarantees for Ising models}
We can exploit the properties of single-variable conditionals of strongly metastable states to prove guarantees on parameter and structure recovery for models with up to pairwise interactions, i.e. Ising models with magnetic field terms. The true energy function then has the form,

\begin{equation}
    E^*(\usigma) = \sum_{i <j} \theta_{ij}^* \sigma_i \sigma_j + \sum_{i=1}^n \theta^*_i \sigma_i,~~\sigma_i \in \{-1,1\}
\end{equation}

The learning task here is to estimate the parameters $\utheta^*$ from data.  For this task we will choose the parametric family to be the most general quadratic energy function on spin variables, 
    $E(\usigma, \utheta) = \sum_{i <j} \theta_{ij} \sigma_i \sigma_j + \sum_{i=1}^n \theta_i \sigma_i.$  For these models, a sparse prior is imposed on the learning by imposing an $\ell_1$ constraint for each variable in the model \cite{ravikumar2010high}. Using this setup, a PL estimate for all the parameters connected to the variable at position $u$ can be estimated as follows,

\begin{equation} \label{eq:ising_PL}
   \hutheta_u  = \argmin{|| \utheta_u||_1 \leq \gamma~}   \frac{1}{M}\sum_{t = 1}^M ~\log( 1+  \exp(-2\sigma^{(t)}_u (\sum_{j \neq u} \theta_{uj} \sigma^{(t)}_j + \theta_u )) ).
\end{equation}

Here we use the shorthand $\utheta_u$ to denote all the coefficients connected to the variable at $u$, i.e. $\utheta_u = \{\theta_u, \theta_{u1}, \theta_{u2}, \ldots, \theta_{un}\}.$ Notice that compared to \ref{eq:PL_loss_gen} we have split the single optimization problem to $n$ smaller optimizations. This is mainly done to make the ensuing theoretical analysis easier. 

This type of estimator was first proposed by Besag \cite{besag1974spatial} as an efficient method for performing parametric estimation on high-dimensional data. The sample complexity of this method for the inverse Ising problem have been thoroughly investigated much more recently \cite{gaitonde2023unified, lokhov2018optimal, wu2018sparse, ravikumar2010high} in the setting where i.i.d. samples are available from the true model. 

Below we state the learning guarantee on the error between the pairwise parameters learned from metastable samples and the true parameters of the energy function.

\begin{theorem}[Learning pair-wise couplings] 
\label{thm:learning_l1} 
For $M = \left \lceil 2^{10} \frac{e^{ 8\gamma } \gamma^4}{ \varepsilon^4} \log(\frac{8 n}{\delta}) \right  \rceil $, samples drawn from an $\eta$ strongly metastable distribution, with probability greater than $1 - \delta$ the following guarantee holds for all $u \in [n]$ ,
\begin{equation}
  \max_{v \in [n], v\neq u}  | \utheta_{uv}^* - \underline{\hat{\theta}}_{uv}|   \leq~~ \varepsilon + 4 e^{2 \gamma}\sqrt{\frac{(1+\gamma) \eta}{\omega_P}}.
\end{equation}
   
\end{theorem}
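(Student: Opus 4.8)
The plan is to analyze the per-site program \eqref{eq:ising_PL} as an $\ell_1$-constrained logistic regression and to run the standard pseudo-likelihood argument, using Theorem~\ref{thm:close_condtionals} twice to neutralize the fact that the samples come from $\nu$ rather than from $\mu$. Fix a site $u$, write $h_u(\usigma;\utheta_u):=\theta_u+\sum_{v\neq u}\theta_{uv}\sigma_v$ for the effective local field (so $\lc(\utheta_u,\usigma)=\log(1+e^{-2\sigma_u h_u})$ and $\mu(\sigma_u\mid\usigma_{\su})=(1+e^{-2\sigma_u h_u(\usigma;\utheta^*_u)})^{-1}$), and let $R_u(\utheta_u):=\EX_{\usigma\sim\nu}[\lc(\utheta_u,\usigma)]$ denote the population per-site loss with $\hat R_u$ its empirical average over the $M$ samples. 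Because $\utheta^*_u$ is feasible by Condition~\ref{cond:temperature} and $\hutheta_u$ minimizes $\hat R_u$, we have $\hat R_u(\hutheta_u)\le\hat R_u(\utheta^*_u)$. A uniform deviation bound over $\{\|\utheta_u\|_1\le\gamma\}$ --- from the Rademacher complexity of $\ell_1$-bounded linear predictors with $\pm1$ features, or a covering/Hoeffding argument of the kind behind Theorem~\ref{thm:PL_close}, together with a union bound over the $n$ sites --- gives, with probability at least $1-\delta$, $\sup_{\|\utheta_u\|_1\le\gamma}|\hat R_u-R_u|\le\epsilon_{\mathrm{stat}}$ for all $u$, with $\epsilon_{\mathrm{stat}}=O\!\big(\gamma\sqrt{\log(8n/\delta)/M}\big)$. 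Hence the population excess risk of $\hutheta_u$ relative to the truth satisfies $R_u(\hutheta_u)-R_u(\utheta^*_u)\le 2\epsilon_{\mathrm{stat}}$.

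Next I convert this into control of $\EX_{\usigma\sim\nu}\big[(\Delta h_u)^2\big]$, where $\Delta h_u(\usigma):=h_u(\usigma;\hutheta_u)-h_u(\usigma;\utheta^*_u)$, an affine function of $\usigma_{\su}$ whose $\sigma_v$-coefficient is $\hat{\theta}_{uv}-\theta^*_{uv}$. Two facts do the work. First, the population gradient at the truth no longer vanishes (as it would under i.i.d.\ data from $\mu$), but it is small: differentiating $\lc$ and applying the tower rule over $\sigma_u$ given $\usigma_{\su}$ gives $\partial_{\theta_{uv}}R_u(\utheta^*_u)=-2\,\EX_{\usigma_{\su}\sim\nu}\!\big[\sigma_v\big(\nu(\sigma_u{=}1\mid\usigma_{\su})-\mu(\sigma_u{=}1\mid\usigma_{\su})\big)\big]$ (and the same with the $\sigma_v$ factor dropped for the field coordinate), so that $\|\nabla R_u(\utheta^*_u)\|_\infty\le 2\,\EX_{\usigma\sim\nu}\big|\nu(\cdot\mid\usigma_{\su})-\mu(\cdot\mid\usigma_{\su})\big|_{TV}\le 2\eta/\omega_P$ by Theorem~\ref{thm:close_condtionals}. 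Second, as a function of the local field, $\lc$ is $e^{-2\gamma}$-strongly convex on $|h_u|\le\gamma$, a range respected on the whole segment $[\utheta^*_u,\hutheta_u]$ since both endpoints lie in the $\ell_1$-ball of radius $\gamma$; because $h_u$ is linear in $\utheta_u$, the Bregman divergence of $R_u$ at $\utheta^*_u$ evaluated at $\hutheta_u$ is at least $\tfrac{1}{2}e^{-2\gamma}\,\EX_{\usigma\sim\nu}[(\Delta h_u)^2]$. Writing the excess risk as that Bregman divergence plus $\langle\nabla R_u(\utheta^*_u),\hutheta_u-\utheta^*_u\rangle$, bounding the linear term via $|\langle\nabla R_u(\utheta^*_u),\hutheta_u-\utheta^*_u\rangle|\le(2\eta/\omega_P)\|\hutheta_u-\utheta^*_u\|_1\le 4\gamma\eta/\omega_P$, and combining with the previous paragraph yields $\EX_{\usigma\sim\nu}[(\Delta h_u)^2]\le 2e^{2\gamma}\big(2\epsilon_{\mathrm{stat}}+4\gamma\eta/\omega_P\big)$.

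The step I expect to be the real obstacle is the last one: extracting a \emph{per-coordinate} bound on $|\hat{\theta}_{uv}-\theta^*_{uv}|$ from this $\ell_2(\nu)$-type control of $\Delta h_u$, given that $\nu$ may be concentrated on an exponentially small subset of $\{\pm1\}^n$ and supplies no incoherence or minimum-eigenvalue guarantee on the feature covariance. I would use the device from the incoherence-free analyses of sparse logistic regression \cite{wu2018sparse, Klivans2017}: conditioning on all spins other than $\sigma_v$, the quantity $\Delta h_u$ is affine in $\sigma_v$ with slope $\hat{\theta}_{uv}-\theta^*_{uv}$, which gives $\EX_{\usigma\sim\nu}[(\Delta h_u)^2]\ge 2\,\EX_{\usigma\sim\nu}\!\big[\min_{s\in\{\pm1\}}\nu(\sigma_v{=}s\mid\usigma_{\setminus\{u,v\}})\big]\,(\hat{\theta}_{uv}-\theta^*_{uv})^2$. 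The surviving factor --- the $\nu$-average conditional variance of the feature spin $\sigma_v$ --- is exactly what a concentrated $\nu$ could annihilate, and this is where Theorem~\ref{thm:close_condtionals} is used a second time: its guarantee that the single-variable conditionals $\nu(\cdot\mid\usigma_{\setminus v})$ are on $\nu$-average $O(\eta/\omega_P)$-close to $\mu(\cdot\mid\usigma_{\setminus v})$, together with the bound $\mu(\sigma_v\mid\usigma_{\setminus v})\in\big[(1+e^{2\gamma})^{-1},(1+e^{-2\gamma})^{-1}\big]$ from Condition~\ref{cond:temperature} and \eqref{eq:condtional_bounds}, forces $\EX_{\usigma\sim\nu}\big[\min_{s}\nu(\sigma_v{=}s\mid\usigma_{\setminus\{u,v\}})\big]=\Omega(e^{-2\gamma})$ whenever $\eta/\omega_P$ is small compared with $e^{-2\gamma}$ (and the claimed bound is vacuous otherwise, exceeding $2\gamma$). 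Combining the last two displays gives $(\hat{\theta}_{uv}-\theta^*_{uv})^2=O\!\big(e^{4\gamma}\epsilon_{\mathrm{stat}}+e^{4\gamma}\gamma\,\eta/\omega_P\big)$ uniformly over $v\neq u$; substituting $M=\lceil 2^{10}e^{8\gamma}\gamma^4\varepsilon^{-4}\log(8n/\delta)\rceil$ drives the statistical contribution below $\varepsilon$ and leaves the metastability bias $4e^{2\gamma}\sqrt{(1+\gamma)\eta/\omega_P}$, which is the asserted bound (the factor $1+\gamma$ in place of $\gamma$ absorbing the constants in the small-$\gamma$ regime).
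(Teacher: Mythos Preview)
Your proposal is essentially correct and follows the same skeleton as the paper's proof: bound the (population) gradient of the per-site loss at $\utheta^*_u$ via Theorem~\ref{thm:close_condtionals}; use strong convexity of $x\mapsto\log(1+e^{-2x})$ on $|x|\le\gamma$ to convert excess loss into $\EX_\nu[(\Delta h_u)^2]$; and then extract a per-coordinate bound by the law of total variance together with a second use of Theorem~\ref{thm:close_condtionals} to lower-bound the conditional variance of the feature spin. The paper packages these as Lemma~\ref{lem:grad_small}/\ref{lem:grad_small_stoc}, Lemma~\ref{lem:strong_convex_f}, Lemma~\ref{lemma:rsc_ple}, and Lemma~\ref{lem:variance_close}, with the same punchline $\|\Delta_{\su}\|_\infty^2 \lesssim e^{4\gamma}(1+\gamma)\eta/\omega_P + \varepsilon^2$.

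The one substantive difference is in how the empirical--population transfer is handled. The paper never invokes a uniform deviation bound: it works directly with the empirical curvature $\delta\lc_M(\Delta,\utheta^*_u)$, bounds it above by $2\gamma\|\nabla\lc_M(\utheta^*_u)\|_\infty$ via optimality of $\hutheta_u$, and then concentrates the empirical gradient coordinatewise (Lemma~\ref{lem:grad_small_stoc}) and the empirical curvature as a scalar (Lemma~\ref{lem:rsc_stoc}) by Hoeffding. Your route via a uniform Rademacher bound over $\{\|\utheta_u\|_1\le\gamma\}$ is equally valid and arguably cleaner, since it sidesteps any concern about concentrating $\delta\lc_M$ at the data-dependent point $\hutheta_u$; the paper's pointwise route, on the other hand, avoids the $\log n$ factor hidden in the $\ell_1$-Rademacher bound and keeps the dependence on $n$ coming solely from the union bound over sites.

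One small slip to fix: in your conditioning step you write $\nu(\sigma_v\mid\usigma_{\setminus\{u,v\}})$, but Theorem~\ref{thm:close_condtionals} controls $\nu(\cdot\mid\usigma_{\setminus v})$, i.e.\ conditioning on \emph{all} other spins including $\sigma_u$. Since $(\Delta h_u)^2$ does not depend on $\sigma_u$, you should apply the law of total variance under the full $\nu$ with conditioning on $\usigma_{\setminus v}$ (as you correctly say in words), after which Lemma~\ref{lem:variance_close} and the bound $\Var{\mu}[\sigma_v\mid\usigma_{\setminus v}]\ge e^{-2\gamma}$ apply directly. This is exactly how the paper's Lemma~\ref{lemma:rsc_ple} proceeds.
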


The proof of this result is provided in the \SupI~ \ref{appendix:proof_thm1}. This theorem can be seen as the metastable version of the learning guarantee for PL proved in \cite{lokhov2018optimal}.  Just like in the setting where we have samples from the true distribution, we see that the sample complexity is exponential in the $\gamma$ parameter.  But unlike these settings, there is an unavoidable bias in the error that does not go to zero as the number of samples increases. We can see from the constructions of strongly metastable states discussed before, this bias decays with the size of the system. 

For Glauber dynamics, remember that $\omega_P = 1/n$. So any metastable state with $\eta = o(\frac{1}{n})$ gives error guarantees with a bias that decays with system size.

An important problem associated with learning graphical models is \emph{structure learning} \cite{drton2017structure}, which refers to the reconstruction of the underlying graph structure of the model. To this end define the edge set $E = \{(u,v) \in [n] \times [n] \ | \ \theta_{u,v} \neq 0\}.$ Then this edge set can be recovered from the learned couplings with high probability using a simple thresholding method.

\ca
\begin{corollary}[Learning Ising models from metastable distributions]Given $M \geq\left \lceil 2^9 \frac{e^{8 \gamma } \gamma^4}{ \varepsilon^2} \log(\frac{8 n^2 }{\delta}) \right  \rceil$, from an $\eta-$ strongly metastable distribution; PL estimator recovers the true parameters of the model with the following guarantee holds for all $u \in [n]$ with probability at least $1-\delta$,
\begin{equation}
    \max_{j \neq u} ||\hat{J}_{uj} - J^*_{uj}|| \leq  \varepsilon ~ + ~8\eta \frac{ \gamma e^{4 \gamma }}{\omega_P} \left(1 + 2 \min\left(2, \tanh^{-2}(\gamma) \right)  \right) 
\end{equation}
\end{corollary}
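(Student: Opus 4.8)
\emph{Proof idea.} The statement is a node-wise parameter (and, via thresholding, structure) recovery guarantee for the pseudo-likelihood estimator \eqref{eq:ising_PL}, so the plan is to run the restricted-strong-convexity analysis of $\ell_1$-constrained $M$-estimation \cite{neghaban2012high,ravikumar2010high,lokhov2018optimal}, carefully tracking how the metastable sampling law $\nu$ (rather than the true Gibbs law $\mu$) enters the two ingredients of that analysis. Fix a node $u$, let $\mathcal{L}^{\mathrm{emp}}(\theta_u)=\frac1M\sum_t \log\!\big(1+e^{-2\sigma^{(t)}_u\sum_{k\in\kc_u}\theta_k\sigma^{(t)}_k}\big)$, put $\Delta=\hat\theta_u-\theta_u^*$, and write $\mathcal{L}^{\nu}$ for its population ($M\to\infty$) version. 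Since $\theta_u^*$ and $\hat\theta_u$ both lie in the $\ell_1$-ball of radius $\gamma$ (Condition \ref{cond:temperature}), KKT optimality together with a second-order expansion along $[\theta_u^*,\hat\theta_u]$ gives the basic inequality
\[
\tfrac12\,\Delta^{\!\top}\nabla^2\mathcal{L}^{\mathrm{emp}}(\xi)\,\Delta \;\le\; \|\nabla\mathcal{L}^{\mathrm{emp}}(\theta_u^*)\|_\infty\,\|\Delta\|_1 ,
\]
together with the cone bound $\|\Delta\|_1\lesssim\sqrt{|\mathrm{supp}(\theta_u^*)|}\,\|\Delta\|_2$ forced by feasibility. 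Everything thus reduces to (i) an $\ell_\infty$ bound on $\nabla\mathcal{L}^{\mathrm{emp}}(\theta_u^*)$ and (ii) a restricted lower bound on $\Delta^{\!\top}\nabla^2\mathcal{L}^{\mathrm{emp}}(\xi)\,\Delta$; dividing the former by the latter controls $\|\Delta\|_2$, reading off coordinates gives $\max_v|\hat\theta_{uv}-\theta_{uv}^*|$, and a union bound over the $n$ nodes and their $\le\binom{n}{2}$ incident couplings produces the $\log(8n^2/\delta)$ and the uniform statement $\max_{j\neq u}\|\hat J_{uj}-J^*_{uj}\|\le\cdots$; thresholding this at the stated accuracy then recovers the edge set $E$ exactly.

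For (i), split $\nabla\mathcal{L}^{\mathrm{emp}}(\theta_u^*)=\nabla\mathcal{L}^{\nu}(\theta_u^*)+\big(\nabla\mathcal{L}^{\mathrm{emp}}-\nabla\mathcal{L}^{\nu}\big)(\theta_u^*)$. The fluctuation term is a coordinatewise average of i.i.d.\ quantities bounded by $2$, so Hoeffding and a union bound give $\|(\nabla\mathcal{L}^{\mathrm{emp}}-\nabla\mathcal{L}^{\nu})(\theta_u^*)\|_\infty=O\!\big(\sqrt{\log(n^2/\delta)/M}\big)$, which for $M\ge 2^9 e^{8\gamma}\gamma^4\varepsilon^{-2}\log(8n^2/\delta)$ is at most $\varepsilon$ times the curvature constant from step (ii). The population gradient is where metastability enters: a direct computation gives $\partial_{\theta_{uv}}\mathcal{L}^{\nu}(\theta_u^*)=-2\,\mathbb{E}_{\sigma_{\su}\sim\nu}\big[\sigma_v\big(\mathbb{E}_\nu[\sigma_u\mid\sigma_{\su}]-\tanh E_u^*(\sigma)\big)\big]$ (the same identity makes $\theta_u^*$ the exact minimizer when the data law is $\mu$, since then $\mathbb{E}[\sigma_u\mid\sigma_{\su}]=\tanh E_u^*$). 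Each summand is bounded by $2\,|\nu(\cdot\mid\sigma_{\su})-\mu(\cdot\mid\sigma_{\su})|_{TV}$, so $\|\nabla\mathcal{L}^{\nu}(\theta_u^*)\|_\infty\le 4\,\mathbb{E}_{\sigma_{\su}\sim\nu}|\nu(\cdot\mid\sigma_{\su})-\mu(\cdot\mid\sigma_{\su})|_{TV}$, and Theorem \ref{thm:close_condtionals} — which bounds exactly the sum over $u$ of these averaged total-variation distances by $\eta/\omega_P$ — yields $\|\nabla\mathcal{L}^{\nu}(\theta_u^*)\|_\infty\le 4\eta/\omega_P$ uniformly in $u$.

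For (ii), write $\nabla^2\mathcal{L}^{\mathrm{emp}}(\xi)=\frac1M\sum_t w_t\,\sigma^{(t)}_{\su}(\sigma^{(t)}_{\su})^{\!\top}$ with weights $w_t\in[\mathrm{sech}^2\gamma,1]$ controlled distribution-independently by the conditional bounds of Condition \ref{cond:temperature}, and run the single-site conditioning argument of \cite{lokhov2018optimal}: conditioning on all spins but $\sigma_v$, the $v$-direction contributes at least $\Delta_v^2\,w\,(1-\mathbb{E}[\sigma_v\mid\mathrm{rest}]^2)$. The deviation from the true-sample case is that $\mathbb{E}_\nu[\sigma_v\mid\cdot]\neq\tanh E_v^*$ and can drive this conditional variance to zero on individual configurations; but averaging over $\sigma_{\su}\sim\nu$ (and concentrating the empirical average) gives $\mathbb{E}_{\sigma_{\su}\sim\nu}[1-\mathbb{E}_\nu[\sigma_v\mid\cdot]^2]\ge(1-\tanh^2 E_v^*)-4\,\mathbb{E}_{\sigma_{\su}\sim\nu}|\nu(\cdot\mid\sigma_{\su})-\mu(\cdot\mid\sigma_{\su})|_{TV}\ge\mathrm{sech}^2\gamma-4\eta/\omega_P$, again by Theorem \ref{thm:close_condtionals}. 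Taking the better of the resulting $\mathrm{sech}$-type bound and the crude bound capping the reciprocal curvature by $2$ yields a restricted eigenvalue of order $e^{-4\gamma}/\big(\gamma\,(1+2\min(2,\tanh^{-2}\gamma))\big)$; dividing the $O(\eta/\omega_P)$ population gradient by it produces the bias $8\eta\gamma e^{4\gamma}\omega_P^{-1}(1+2\min(2,\tanh^{-2}\gamma))$, while the $e^{8\gamma}\gamma^4$ in $M$ is the square of the $e^{4\gamma}\gamma^2$ in this curvature constant, exactly as in the i.i.d.-from-$\mu$ analysis.

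The main obstacle is step (ii): unlike in \cite{ravikumar2010high,lokhov2018optimal}, restricted strong convexity of the PL objective cannot be imported from properties of the true distribution, since the Hessian is an expectation under the metastable law $\nu$, which may be concentrated on configurations where single-spin conditional variances collapse — naively destroying the curvature. The resolution is that the very closeness-of-conditionals estimate (Theorem \ref{thm:close_condtionals}) that bounds the gradient in (i) also controls, in the averaged sense the cone argument uses, the degradation of the curvature, so that provided $\eta/\omega_P$ is small relative to $\mathrm{sech}^2\gamma$ the restricted cone argument survives with only an additive $O(\eta/\omega_P)$ loss. A secondary point is that, unlike Theorem \ref{thm:learning_l1}, one must avoid the weak-convexity/loss-gap route — which loses a square root and produces that theorem's $\sqrt{\eta/\omega_P}$ bias and $\varepsilon^{-4}$ rate — and instead run the first-order/restricted-strong-convexity argument above, which is what buys both the linear $\eta/\omega_P$ bias and the improved $\varepsilon^{-2}$ sample size stated in the corollary.
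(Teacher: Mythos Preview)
Your two key ingredients --- the gradient bound $\|\nabla\mathcal{L}^{\nu}(\theta_u^*)\|_\infty\le 4\eta/\omega_P$ via Theorem~\ref{thm:close_condtionals}, and a curvature lower bound obtained by single-site conditioning with the metastable correction controlled by the same theorem --- are exactly the ingredients the paper uses (Lemmas~\ref{lem:grad_small_stoc} and~\ref{lem:rsc_stoc}). Where your argument diverges is in how you combine them, and that is where the gap lies.

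The step that fails is the passage from curvature to a \emph{linear}-in-$\eta$ bias and an $\varepsilon^{-2}$ sample size. You invoke the cone bound $\|\Delta\|_1\lesssim\sqrt{|\mathrm{supp}(\theta_u^*)|}\,\|\Delta\|_2$ ``forced by feasibility,'' but feasibility of both $\theta_u^*$ and $\hat\theta_u$ in the $\ell_1$-ball of radius $\gamma$ gives only $\|\Delta\|_1\le 2\gamma$; the cone relation requires an $\ell_1$ \emph{penalty} with $\lambda\ge 2\|\nabla\mathcal{L}\|_\infty$ (Lemma~\ref{lem:sparsity_err} in the appendix, itself inside a commented-out block), and even then a sparsity parameter $d$ would have to appear in the final bound, which it does not. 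Second, the single-site conditioning argument you describe --- condition on all spins but $\sigma_v$ and pick up $\Delta_v^2$ times a conditional variance --- yields $\Delta^{\!\top}\nabla^2\mathcal{L}\,\Delta\gtrsim \|\Delta_{\setminus u}\|_\infty^2$ (exactly Lemma~\ref{lemma:rsc_ple}), not $\|\Delta\|_2^2$. With only $\|\Delta\|_\infty^2$ on the left and $\|\Delta\|_1\le 2\gamma$ on the right, the basic inequality becomes
\[
\tfrac{e^{-4\gamma}}{2}\|\Delta_{\setminus u}\|_\infty^2 \;\le\; 2\gamma\,\|\nabla\mathcal{L}_M(\theta_u^*)\|_\infty + O(\eta/\omega_P),
\]
which after a square root gives bias $O\big(e^{2\gamma}\sqrt{\gamma\eta/\omega_P}\big)$ and forces $M\asymp\varepsilon^{-4}$ --- precisely the paper's Theorem~\ref{thm:learning_l1}, not the sharper corollary you are aiming for. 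Indeed, this corollary sits inside a \verb|\ca ... \cb| block in the source and was replaced by Theorem~\ref{thm:learning_l1}; the paper does not prove it, and your final paragraph correctly identifies that avoiding the square-root loss is the whole difficulty --- but the restricted-strong-convexity route you sketch does not actually avoid it without a genuine $\|\Delta\|_2^2$ curvature bound (i.e., control of the smallest restricted eigenvalue of the $\nu$-covariance of $\sigma_{\setminus u}$), which Theorem~\ref{thm:close_condtionals} alone does not supply.
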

\cb

    \begin{theorem}[Structure learning]\label{thm:learning_sparsity} Consider an Ising model  with  $|\theta_{uv}^*| > \alpha$ for every $(u,v) \in E.$     Let  $\hutheta$ be estimated using $M= \left \lceil 2^{26} \frac{e^{ 8\gamma } \gamma^4}{ \alpha^4} \log(\frac{8 n}{\delta}) \right \rceil $ samples from a $\eta-$strongly metastable state by solving the optimization problem in \eqref{eq:ising_PL}.  Then if  $\alpha > 16 e^{2 \gamma} \sqrt{\frac{(1 + \gamma)\eta}{\omega_P}}$,  the estimated edge set,
   \begin{equation}
    \hat{E} = \{(u,v) \in [n] \times [n]~~ |~~ \max( |\hat{\theta}_{uv}|, |\hat{\theta}_{vu}|) > \alpha/2  \}
\end{equation} 
will match $E$ with probability greater than $1-\delta.$
    \end{theorem}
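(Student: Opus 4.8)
The plan is to reduce structure recovery to the parameter-recovery guarantee already established in Theorem \ref{thm:learning_l1}, combined with a standard thresholding argument. First I would invoke Theorem \ref{thm:learning_l1} with a target accuracy $\varepsilon$ chosen as a suitable fraction of the minimum edge weight $\alpha$ — concretely $\varepsilon = \alpha/4$ — so that the stated sample size $M = \lceil 2^{26} e^{8\gamma}\gamma^4 \alpha^{-4} \log(8n/\delta)\rceil$ matches (up to the constant, $2^{26} = 2^{10}\cdot 4^4\cdot 4^4$, absorbing the $\varepsilon^{-4}$ rescaling) the requirement of that theorem. This yields, with probability at least $1-\delta$, the uniform bound $\max_{v\neq u}|\theta_{uv}^* - \hat\theta_{uv}| \le \alpha/4 + 4e^{2\gamma}\sqrt{(1+\gamma)\eta/\omega_P}$ for every $u$.

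Next I would use the hypothesis $\alpha > 16 e^{2\gamma}\sqrt{(1+\gamma)\eta/\omega_P}$, which forces the bias term $4e^{2\gamma}\sqrt{(1+\gamma)\eta/\omega_P} < \alpha/4$, so that the total estimation error is strictly less than $\alpha/2$: that is, $|\theta_{uv}^* - \hat\theta_{uv}| < \alpha/2$ for all pairs $(u,v)$ simultaneously. From here the thresholding argument is routine in two cases. If $(u,v)\in E$, then $|\theta_{uv}^*| > \alpha$, so $|\hat\theta_{uv}| > \alpha - \alpha/2 = \alpha/2$, hence $\max(|\hat\theta_{uv}|,|\hat\theta_{vu}|) > \alpha/2$ and the edge is included in $\hat E$. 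Conversely, if $(u,v)\notin E$, then $\theta_{uv}^* = 0$, so $|\hat\theta_{uv}| < \alpha/2$ and likewise $|\hat\theta_{vu}| < \alpha/2$, hence $\max(|\hat\theta_{uv}|,|\hat\theta_{vu}|) < \alpha/2$ and the pair is excluded. Therefore $\hat E = E$ on the same high-probability event, which completes the proof.

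The only mild subtlety — and the place I would be most careful — is bookkeeping the constants and the union bound: Theorem \ref{thm:learning_l1} already delivers its guarantee uniformly over all $u\in[n]$ with a single $\log(8n/\delta)$ factor, so no additional union bound over the $O(n^2)$ pairs is needed beyond what is baked in; I just need to verify that plugging $\varepsilon = \alpha/4$ into the sample complexity $2^{10} e^{8\gamma}\gamma^4 \varepsilon^{-4}\log(8n/\delta)$ gives exactly (or is upper-bounded by) $2^{26} e^{8\gamma}\gamma^4\alpha^{-4}\log(8n/\delta)$, since $2^{10}\cdot 4^4 = 2^{10}\cdot 2^8 = 2^{18}$, which is comfortably below $2^{26}$, leaving slack. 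I do not anticipate a genuine obstacle here — the content is entirely inherited from Theorem \ref{thm:learning_l1}, and structure recovery is the standard corollary of an $\ell_\infty$ parameter bound with a margin assumption.
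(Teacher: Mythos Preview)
Your proposal is correct and follows essentially the same approach as the paper: set $\varepsilon = \alpha/4$, invoke Theorem~\ref{thm:learning_l1} to get the uniform $\ell_\infty$ bound, use the hypothesis on $\alpha$ to force the total error below $\alpha/2$, and then threshold. The paper's proof is exactly this, with the same choice $\varepsilon = \alpha/4$ and the same two-case thresholding; your observation that $2^{10}\cdot 4^4 = 2^{18} \le 2^{26}$ (so the stated $M$ is more than sufficient) is the only place you are slightly more careful than the paper, which simply writes the substitution as an equality.
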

The assumed  $\eta$ dependent lower bound on $\alpha$ is necessary to prevent the inherent bias in the learning from swamping very small non-zero couplings in the model. Even in cases where the bias is larger than the smallest coupling, this type of thresholding will correctly find the edges with larger weights.
        
Finally we will show how the magnetic field can be provably recovered for $d-$sparse models.  We achieve this by doing a second round of optimization after structure learning. This is done by estimating the second order terms first as in Theorem \ref{thm:learning_l1}, reconstructing the structure and then optimizing the PL loss for a second round with only the magnetic field terms as the variable. 

\begin{theorem}[Recovering magnetic fields]
Suppose that we are given the edge set $E$ and estimates for all the pairwise parameters in the model such that $|| \utheta_{\su}^* - \underline{\hat{\theta}}_{\su}||_{\infty} \leq \varepsilon$. Further assume that the underlying graph has max degree of $d$ and that $|\theta^*_u| \leq h_{max}$. Then given  samples from an $\eta-$strongly metastable state we can estimate the magnetic fields for each $u$ as follows;
\begin{align}
\hat{\theta}_u = \argmin{| \theta_u| \leq h_{max}~} ~&\frac{1}{M}\sum_{t = 1}^M ~\log(\\1 + &\exp(-2\sigma^{(t)}_u (\sum_{v: (u,v) \in E}\hutheta_{u,v}, \usigma^{(t)}_v \rangle + \theta_u) ).
\end{align}

For $M= \left \lceil \frac{2^7 h^4_{max} e^{4 h_{max}}}{\varepsilon^4_h} \log(\frac{4}{\delta})\right \rceil$ we can guarantee with probability at least $1-\delta$ that the error in this estimate will be bounded as,
$|\hat{\theta}_u - \theta^*_u |\leq \varepsilon_h + 4 \sqrt{d \varepsilon h_{max}} e^{h_{max}} +4 \sqrt{\frac{ \eta h_{max}}{\omega_P}}e^{h_{max}}.$

\end{theorem}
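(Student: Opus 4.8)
The plan is to rerun the two-step pseudo-likelihood argument of Theorem~\ref{thm:learning_l1}, but on the \emph{scalar} optimization obtained by freezing the neighbour couplings at their estimates. Write $\hat f_u(\usigma)\coloneqq\sum_{v:(u,v)\in E}\hutheta_{uv}\sigma_v$ and $f_u^*(\usigma)\coloneqq\sum_{v:(u,v)\in E}\theta^*_{uv}\sigma_v$; since the graph has max degree $d$ and $\|\hutheta_{\su}-\utheta^*_{\su}\|_\infty\le\varepsilon$, we have $|\hat f_u(\usigma)-f_u^*(\usigma)|\le d\varepsilon$ for every $\usigma$. Introduce the empirical objective $\widehat{\lc}_u(\theta_u)$ minimized in the theorem and its metastable population version $\lc_u^\nu(\theta_u)\coloneqq\mathbb{E}_{\usigma\sim\nu}\log\bigl(1+e^{-2\sigma_u(\hat f_u(\usigma)+\theta_u)}\bigr)$, both convex in the scalar $\theta_u\in[-h_{max},h_{max}]$. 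Since $\mu(\sigma_u\mid\usigma_{\su})$ is exactly the model conditional at $(f_u^*,\theta^*_u)$, the value $\theta^*_u$ (feasible because $|\theta^*_u|\le h_{max}$) would be the exact minimizer of a pseudo-likelihood loss built from true-data samples with exact neighbour couplings; the proof measures the three departures from that idealization---finite samples, metastable rather than true data, and estimated rather than exact neighbour couplings---and converts them into a parameter error through strong convexity.

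The step I expect to be the main obstacle is a \emph{uniform curvature lower bound}. For any data law and any feasible $\theta_u$, $\tfrac{d^2}{d\theta_u^2}\lc_u^\nu(\theta_u)=4\,\mathbb{E}_{\usigma\sim\nu}\bigl[q(1-q)\bigr]$ with $q=\bigl(1+e^{-2(\hat f_u(\usigma)+\theta_u)}\bigr)^{-1}$, so I need the effective field $\hat f_u(\usigma)+\theta_u$ bounded in absolute value by a model-dependent constant $F$; then the logistic variance $q(1-q)$ is bounded below, $\lc_u^\nu$ (and $\widehat\lc_u$) is $\rho_0$-strongly convex with $\rho_0\gtrsim e^{-O(F+h_{max})}$, and a small gap in loss value forces a small gap in $\theta_u$. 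The delicate point is keeping $F$ (hence the exponent in $\rho_0$) controlled in terms of $h_{max}$ rather than letting the coupling strength blow it up; this uses $|\theta_u|\le h_{max}$ together with the $\ell_1$ bound on the neighbour couplings inherited from the first-stage estimator and the degree assumption.

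Next I would bound three error sources separately. \textbf{(i) Statistical error.} The per-sample loss is bounded by $O(h_{max})$ and $2$-Lipschitz in $\theta_u$, so an $\epsilon$-net of $[-h_{max},h_{max}]$, Hoeffding, and a union bound give $\sup_{\theta_u}|\widehat\lc_u(\theta_u)-\lc_u^\nu(\theta_u)|\le\epsilon_{\mathrm{stat}}$ with probability $\ge1-\delta$, and $M=\lceil 2^{7}h_{max}^{4}e^{4h_{max}}\varepsilon_h^{-4}\log(4/\delta)\rceil$ is tuned so that $\epsilon_{\mathrm{stat}}/\rho_0=O(\varepsilon_h^2)$. \textbf{(ii) Plug-in error.} From $|\hat f_u-f_u^*|\le d\varepsilon$ and $2$-Lipschitzness of $x\mapsto\log(1+e^{-2\sigma_u x})$, one gets $|\lc_u^\nu(\theta_u)-\mathbb{E}_{\usigma\sim\nu}\log(1+e^{-2\sigma_u(f_u^*(\usigma)+\theta_u)})|\le 2d\varepsilon$ uniformly in $\theta_u$. \textbf{(iii) Metastability error.} With $p_{\theta_u}(\sigma_u\mid\usigma_{\su})$ the conditional built from exact neighbour couplings, the identity $\mathbb{E}_\nu[-\log p_{\theta^*_u}]-\mathbb{E}_\nu[-\log p_{\theta_u}]=\mathbb{E}_{\usigma\sim\nu}D_{KL}\bigl(\nu(\cdot\mid\usigma_{\su})\,\|\,\mu(\cdot\mid\usigma_{\su})\bigr)-\mathbb{E}_{\usigma\sim\nu}D_{KL}\bigl(\nu(\cdot\mid\usigma_{\su})\,\|\,p_{\theta_u}(\cdot\mid\usigma_{\su})\bigr)$ and nonnegativity of the KL divergence show $\theta^*_u$ is within $\kappa_u\coloneqq\mathbb{E}_{\usigma\sim\nu}D_{KL}(\nu(\cdot\mid\usigma_{\su})\,\|\,\mu(\cdot\mid\usigma_{\su}))$ of the minimizer of the exact-neighbour metastable population loss, and Corollary~\ref{cor:KLclose} (via Theorem~\ref{thm:close_condtionals}) gives $\kappa_u=O(\eta/\omega_P)$.

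Finally I would assemble. Let $\bar\theta_u=\argmin{|\theta_u|\le h_{max}}\lc_u^\nu$. Chaining (i)--(iii) and using $\widehat\lc_u(\hutheta_u)\le\widehat\lc_u(\theta^*_u)$ gives $\lc_u^\nu(\hutheta_u)-\lc_u^\nu(\bar\theta_u)\le 2\epsilon_{\mathrm{stat}}$ and $\lc_u^\nu(\theta^*_u)-\lc_u^\nu(\bar\theta_u)\le 4d\varepsilon+\kappa_u$; applying $\rho_0$-strong convexity to each and the triangle inequality yields $|\hutheta_u-\theta^*_u|\le 2\sqrt{\epsilon_{\mathrm{stat}}/\rho_0}+\sqrt{2(4d\varepsilon+\kappa_u)/\rho_0}$. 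Substituting $\epsilon_{\mathrm{stat}}/\rho_0=O(\varepsilon_h^2)$, $\kappa_u=O(\eta/\omega_P)$, and $1/\rho_0=e^{O(h_{max})}$ reproduces the stated bound $|\hat\theta_u-\theta^*_u|\le\varepsilon_h+4\sqrt{d\varepsilon h_{max}}\,e^{h_{max}}+4\sqrt{\eta h_{max}/\omega_P}\,e^{h_{max}}$ after the constant bookkeeping. The lone loose end---$\theta^*_u$ or $\bar\theta_u$ hitting $\pm h_{max}$---is handled by the standard convex-projection argument, since the feasible set is an interval.
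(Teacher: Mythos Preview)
Your outline is a valid alternative, but it differs structurally from the paper's proof, and the difference matters for the constants. The paper does \emph{not} go through uniform loss concentration or the KL bound of Corollary~\ref{cor:KLclose}. Instead it works at the level of the \emph{gradient at the truth}: from optimality of $\hat\theta_u$ and convexity one gets $\delta\mathcal{H}_M(\Delta_u,\theta^*_u)\le 2h_{max}\,|\partial_{\theta_u}\mathcal{H}_M(\theta^*_u)|$, and then bounds the population gradient $\partial_{\theta_u}\mathcal{H}(\theta^*_u)=-2\,\mathbb{E}_\nu\bigl[\sigma_u\,S(\langle\hutheta_{E_u},\usigma_{E_u}\rangle+\theta^*_u)\bigr]$ directly. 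The metastability contribution is handled by adding and subtracting $S(\langle\utheta^*_{E_u},\usigma_{E_u}\rangle+\theta^*_u)=\mu(-\sigma_u\mid\usigma_{\su})$ and invoking the TV bound of Theorem~\ref{thm:close_condtionals} (exactly as in Lemma~\ref{lem:grad_small}), while the plug-in error is a single sigmoid-Lipschitz step giving the $d\varepsilon$ term. Hoeffding is then applied \emph{pointwise}: once to the gradient (range $\le 2$, independent of any energy scale) and once to the scalar curvature $\delta f$ (range $\le h_{max}^2/2$ from $|\Delta_u|\le 2h_{max}$). This is why every Hoeffding range, and hence the sample complexity, depends only on $h_{max}$.

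Your route, by contrast, passes through a uniform bound on $|\widehat\lc_u(\theta_u)-\lc_u^\nu(\theta_u)|$ over an $\epsilon$-net. The per-sample loss $\log(1+e^{-2\sigma_u(\hat f_u+\theta_u)})$ is \emph{not} $O(h_{max})$ as you wrote: its range is $O(h_{max}+\|\hutheta_{E_u}\|_1)$, so your Hoeffding application picks up the coupling scale $\gamma$ (or $d$ times the max coupling) in the constants, which the theorem statement avoids. You can repair this by concentrating loss \emph{differences} $\lc_u(\theta_u,\usigma)-\lc_u(\theta^*_u,\usigma)$, which are $2|\theta_u-\theta^*_u|$-bounded, but at that point you are essentially redoing the paper's gradient/curvature decomposition. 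On the metastability side your KL argument via Corollary~\ref{cor:KLclose} is correct but loses a factor of $1/\rho$ (the minimum conditional) relative to the paper's direct TV bound on the gradient. Both proofs share the delicate point you flagged: the strong-convexity constant used for the curvature lower bound should really carry the neighbour-coupling contribution to the effective field, yet the stated bound depends only on $h_{max}$; the paper simply invokes Lemma~\ref{lem:strong_convex_f} with constant $e^{-2h_{max}}$ here.
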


Once again we see that there is a bias term in the error guarantee that has its source in the metastability of the data distribution.
 The three theorems in this section together imply that every parameter in a $d$-sparse Ising model can be recovered with a small bias with sample complexity scaling as $O(\log(n))$, generalizing the high-dimensional results from prior works to the case of metastable samples.

\section{Numerical Experiments}
\label{sec:numerics}

\subsection{Strongly metastable distributions in the Curie-Weiss model}
\label{sec:cw_model_meta}

In this section, we illustrate some of the abstract results and constructions from the earlier sections to the specific case of the Curie-Weiss (CW) model. This model is defined by the following energy function on $n$ binary spin variables ($\sigma_i \in \{1, -1\}$) parameterized by a coupling constant $J$ and a magnetic field $h$,

\begin{equation}
    E(\usigma) = \frac{J}{n}\sum_{i=1}^n\sum_{j= i+1 }^n \sigma_i \sigma_j - h \sum_i \sigma_i.
\end{equation}

This model is widely studied in statistical physics where it serves as a canonical model  for understanding symmetry-breaking phase transitions. The all-to-all connectivity of this model makes it an ideal candidate for theoretical and numerical explorations \cite{levin2010glauber, kochmanski2013curie}.
We can  construct explicit examples of strongly metastable states with small $\eta$ in the CW model. 
The CW model has permutation invariance i.e. the probability of observing a certain configuration of spins in this model only depends on their total sum. Define the \emph{magnetization} of a spin configuration as $m(\usigma) = \sum_i \sigma_i/n$.
Then the probability of observing a configuration with magnetization $m$ is proportional to $e^{-n\Psi(m)}$, where the \emph{free energy} 
$\Psi(m)$ of the model  
\begin{equation}
\label{eq:fe_cw}
    \Psi(m) = \frac{-J}{2} m^2 + hm -  S(m),
\end{equation}
dictates the essential statistical properties of this system.
Here $S(m) =  \frac1n\log{\binom{n}{\frac{n(1 + m)}{2}}}$ is the entropy term. The minima of this free energy function also correspond to regions in the state space from which Glauber dynamics struggles to escape \cite{griffiths1966relaxation}. 
    
The approach taken here follows a classical paradigm from mean-field statistical mechanics and phase-transition theory: one studies the free energy
as a function of the relevant order parameter and analyzes its local expansion near its minima. For the Curie-Weiss model, the magnetization is the natural
order parameter, and the minima of $\Psi(m)$ correspond to the thermodynamic
phases of the model. A local expansion of $\Psi(m)$ around these minima
therefore describes the local concentration of the Gibbs measure, making it a natural starting point for the
construction of metastable states. This is exactly the perspective underlying
Landau-type descriptions of phase transitions and standard analyses of the
Curie-Weiss free energy landscape
\cite{LandauLifshitz1980, ellis1978statistics}.

Inspired by the statistical physics picture,  we look for candidate metastable states by analyzing  $\Psi(m)$. We motivate this method of construction of metastable states further in \SupI~\ref{app:cw_meta} by matching the conditionals of candidate metastable states with that of the true distribution. There we find that the metastable states that closely match the conditionals of the true distribution are centered around the minima of the free energy. Now, define $m_0$ to be the positive value where the gradient of the free energy vanishes,
\begin{equation}
    -Jm_0 + h - S'(m_0) = 0, ~~m_0 \geq 0.
\end{equation}

We can construct candidate metastable states by using a $K$-order Taylor approximation of the free energy around this minimum point. Notice that the first order terms vanish and the constant term can be ignored as we are working with energies. These candidate metastable states can be defined by a free energy function fixed by the order of the expansion as follows, 

\begin{equation}
    \Phi^{(K)}(m) \equiv \sum_{k = 2}^K  \Psi^{(k)}(m_0) \frac{(m-m_0)^k}{k!}
\end{equation}

Here $\Psi^{(k)}$ is the $k-$th derivative of the free energy. This metastable free energy defines a metastable distribution as follows,
\begin{equation}
    \nu^{(K)}(\usigma) \propto \frac{e^{-n \Phi^{(K)} (m(\usigma))}}{\binom{n}{\frac{n(1 + m(\usigma))}{2}}}.
\end{equation}

Notice that these metastable distributions are also permutation invariant. The violation of the detailed balance condition used in \eqref{eq:strong_def} is difficult to compute for general distributions. However, for permutation invariant distributions this quantity can be efficiently computed by first mapping the problem entirely onto the magnetization space. We give the details of this mapping in the  \SupI~\ref{app:cw_meta}. Now by numerically evaluating the detailed balance violation we conclude that the $\nu^{(4)}$ distribution is $O(\frac{1}{n^2})$ strongly metastable for an $n-$spin Curie-Weiss model for $J>1$ and $0<h<0.05$.  We also find that truncating the free energy exactly around $m_0$ with  a well-chosen width leads to $e^{-O(n)}-$ strongly metastable states, akin to the construction in \emph{Subsection} \
\ref{sec:cheeger_metastable}. Results of these experiments are given in \figurename~\ref{fig:meta_numerics}.

\begin{figure*}
    \centering
\begin{subfigure}[b]{0.31\textwidth}
    \includegraphics[width=\textwidth]{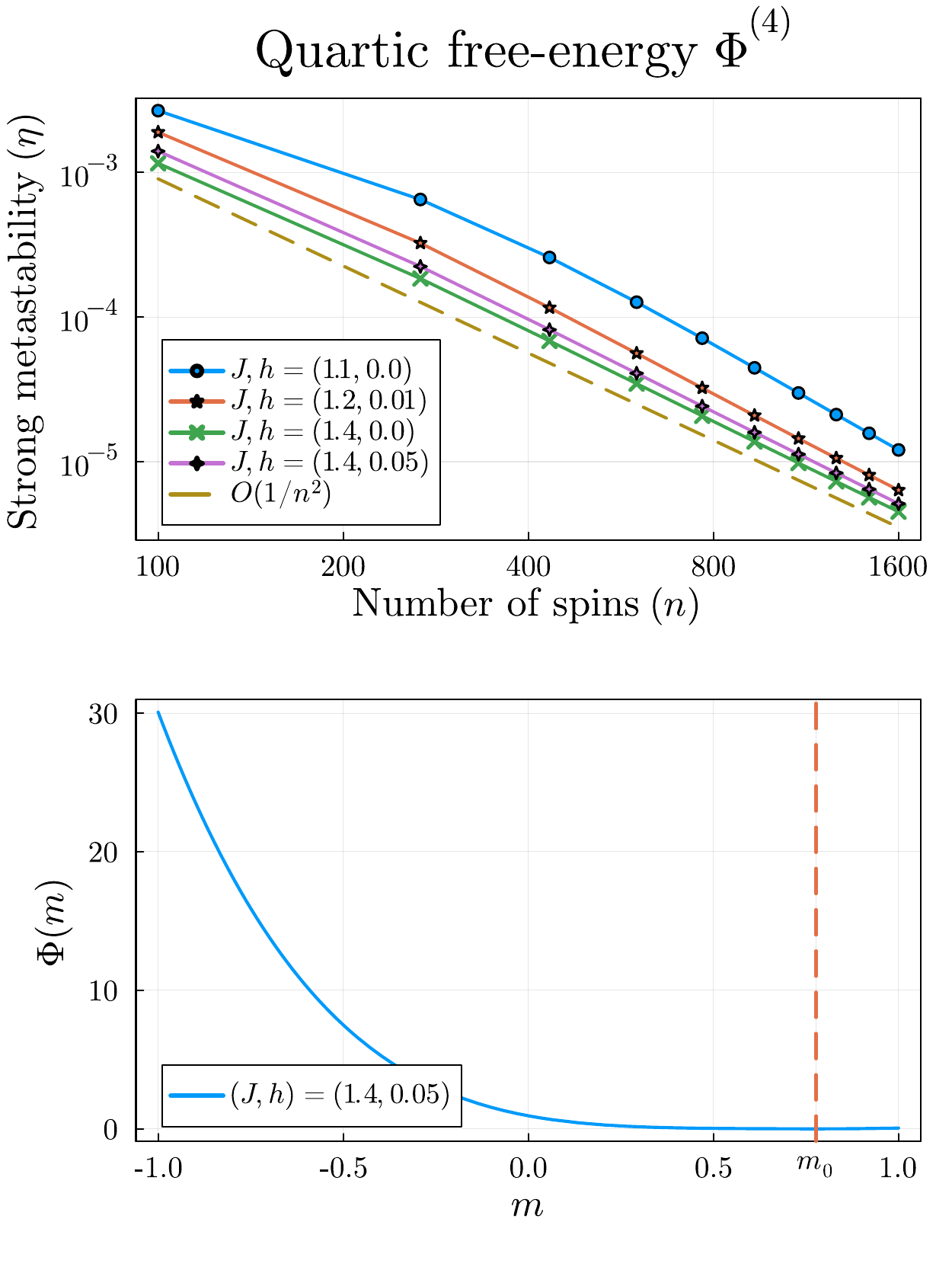}
    \caption{}
    \label{fig:cw_meta_1}
    \end{subfigure}
    \begin{subfigure}[b]{0.31\textwidth}
    \includegraphics[width=\textwidth]{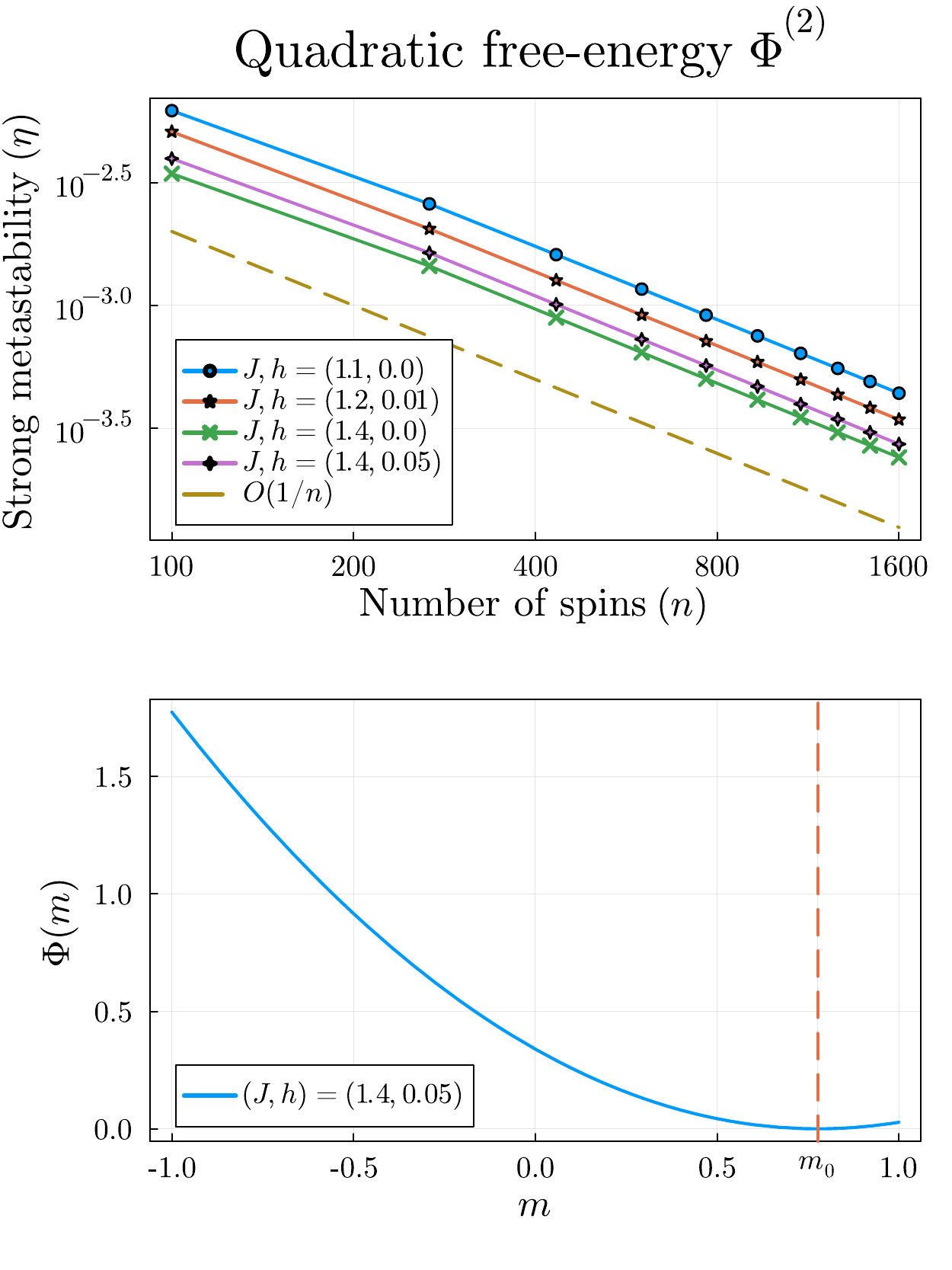}
    \caption{}
    \label{fig:cw_meta_2}
    \end{subfigure}
    \begin{subfigure}[b]{0.31\textwidth}
    \includegraphics[width=\textwidth]{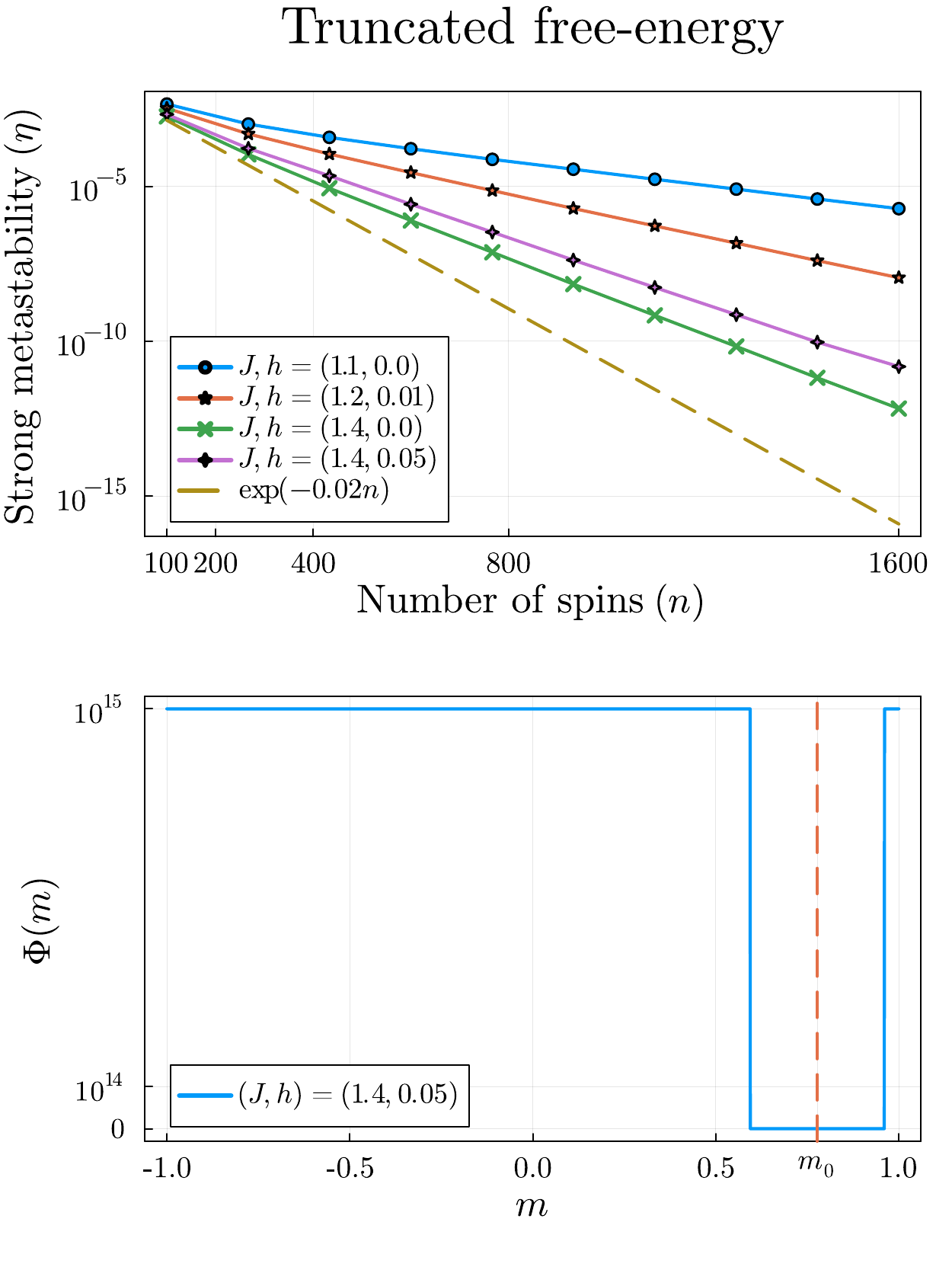}
    \caption{}
    \label{fig:cw_meta_3}
    \end{subfigure}
    \caption{ Strongly metastable states in the CW model. Here we plot the violation in detailed balance condition as defined in \eqref{eq:strong_def} computed by projecting to the magnetization space as in \eqref{eq:mag_reduction}.  (a) Fourth-order approximation to the free energy at $m_0$ (b) Second-order approximation (c) Truncated free energy  as defined in \eqref{eq:truncated_def}}
    \label{fig:meta_numerics}
\end{figure*}


\subsection{Numerical experiments on learning the Curie-Weiss model}

In this section, we numerically confirm that the $J$ and $h$ parameters of the CW model can be learned given samples  coming from a metastable state of a Markov chain.

We sample from this model using the Markov chain method known as Glauber dynamics \cite{glauber1963time} and then attempt to learn the parameters $J$ and $h$ using the pseudo-likelihood method. The CW  model is well suited for this exploration for two reasons. Firstly, in the right parameter range, this model develops two metastable distributions \cite{griffiths1966relaxation, levin2010glauber}. By tuning the magnetic field $h$ we can also change the relative size (in terms of probability) of these two states. Secondly, because of the all-to-all, uniform couplings, we can run Glauber dynamics on very large system sizes. 


In this experiment, we choose the system parameters such that the distribution is bimodal but heavily lopsided with most of the probability in states that have negative magnetization. From \figurename~\ref{fig:cw_2}, we see that if i.i.d. samples were produced, the chance of observing a sample with positive magnetization $(\sum_i \sigma_i > 0)$ will essentially be zero. But if we use Glauber dynamics to sample from this model and start the Markov chain from the all-one state $(\sigma_i = 1 \forall i )$ then the dynamics will get stuck in the positive magnetization part of the distribution. This is a well-studied property of CW models that can be rigorously established \cite{levin2010glauber}. We corroborate this by plotting the empirical probabilities produced by an exact sampler and Glauber dynamics 
in \figurename~\ref{fig:cw_2}. This shows that Glauber dynamics is stuck around the minima with positive magnetization. At this point, we assume that the distribution of the samples  generated by Glauber dynamics is close to a metastable distribution with positive magnetization. Notice that by any global metric (TV, KL Divergence, etc.) this distribution is very far from the Gibbs distribution  of the true CW model. This implies that a technique like Maximum Likelihood Estimation (MLE) will not succeed in recovering the right model parameters ($J$ and $h$) as it empirically minimizes the KL-divergence between the samples and the parametric model we are trying to learn. 

However, we find that the original model parameters can be recovered from these samples using the PL estimator. The results of learning using PL are shown in Figure~\ref{fig:cw_1}, which shows that $J$ and $h$ can be learned from samples from the metastable distributions with remarkable accuracy.
Given these samples, the correct values of $J,h$ cannot be predicted by the maximum likelihood method. We demonstrate
 this in Figure \ref{fig:loss_1}  by plotting the negative log-likelihood in the $(J,h)$  space.  We see that the minimum of this function lies far away from the true model.  Figure \ref{fig:loss_2} shows the PL loss function. We see from these plots that the PL loss function has its minimum close to the true model, while MLE predicts the wrong sign for the magnetic field in the model.


\begin{figure*}
    \centering
\begin{subfigure}[b]{0.45\textwidth}
    \includegraphics[width=\textwidth]{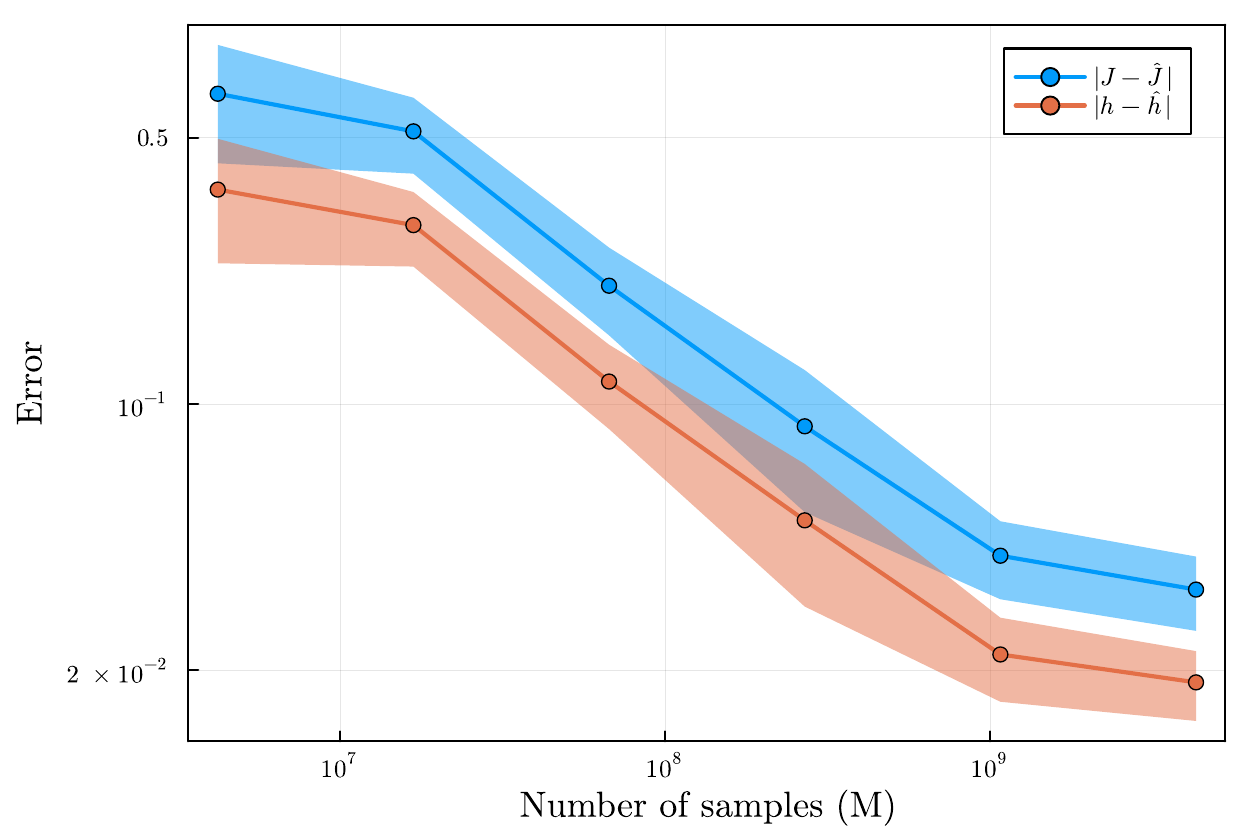}
    \caption{}
    \label{fig:cw_1}
    \end{subfigure}
    \begin{subfigure}[b]{0.45\textwidth}
    \includegraphics[width=\textwidth]{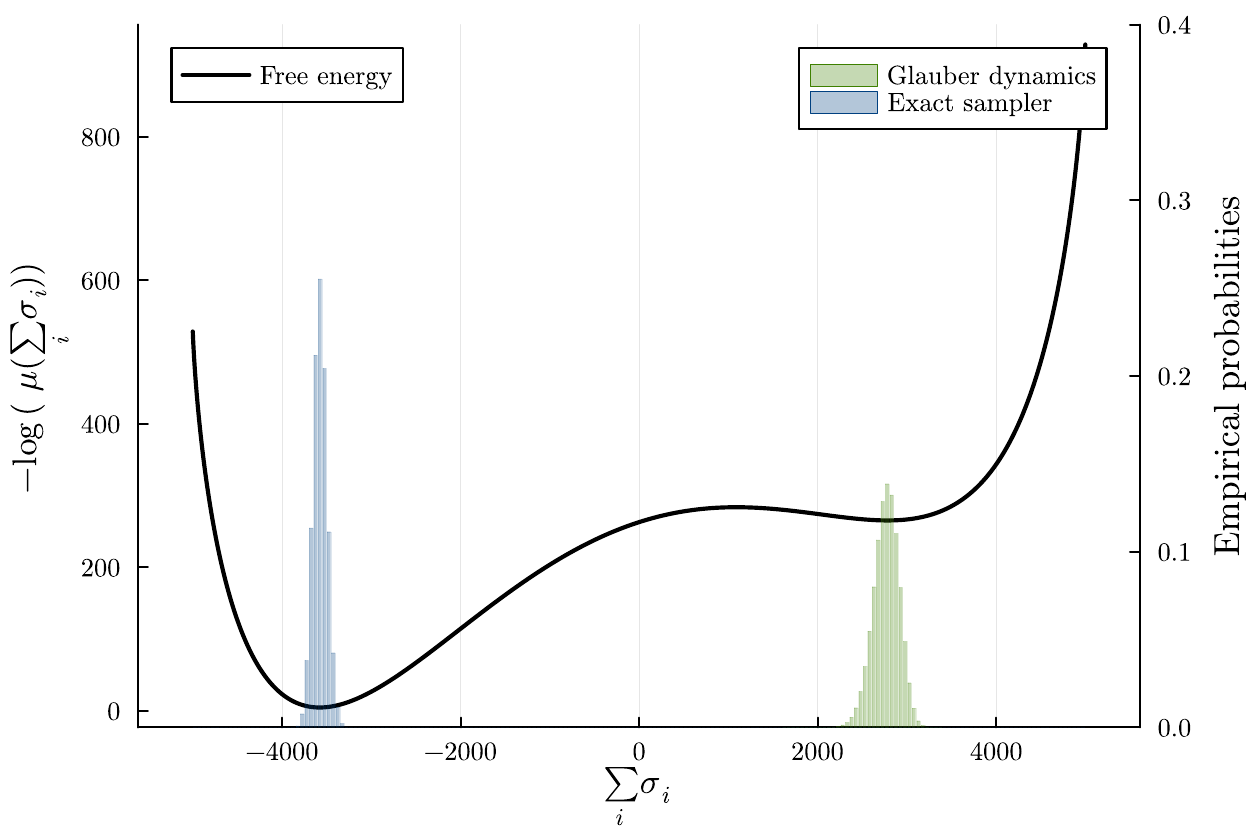}
    \caption{}
    \label{fig:cw_2}
    \end{subfigure}
    \caption{(a) Error in learning the Curie-Weiss model on 5000 spins. Samples here are produced by Glauber dynamics ``stuck'' at the positive minima of the free energy. True parameters here are $J = 1.2, h = 0.04.$ (b) The true distribution is highly biased towards  negative magnetization as seen by free energy curve. There is a metastable distribution with positive magnetization that is highly suppressed in terms of probability. The empirical distributions of samples ($M=4\times10^9$) drawn by an exact sampler and Glauber dynamics is overlaid on top of the free energy. This shows that the Markov chain is effectively stuck around the positive minima.}
\end{figure*}

\begin{figure*}
    \centering
\begin{subfigure}[b]{0.45\textwidth}
    \includegraphics[width=\textwidth]{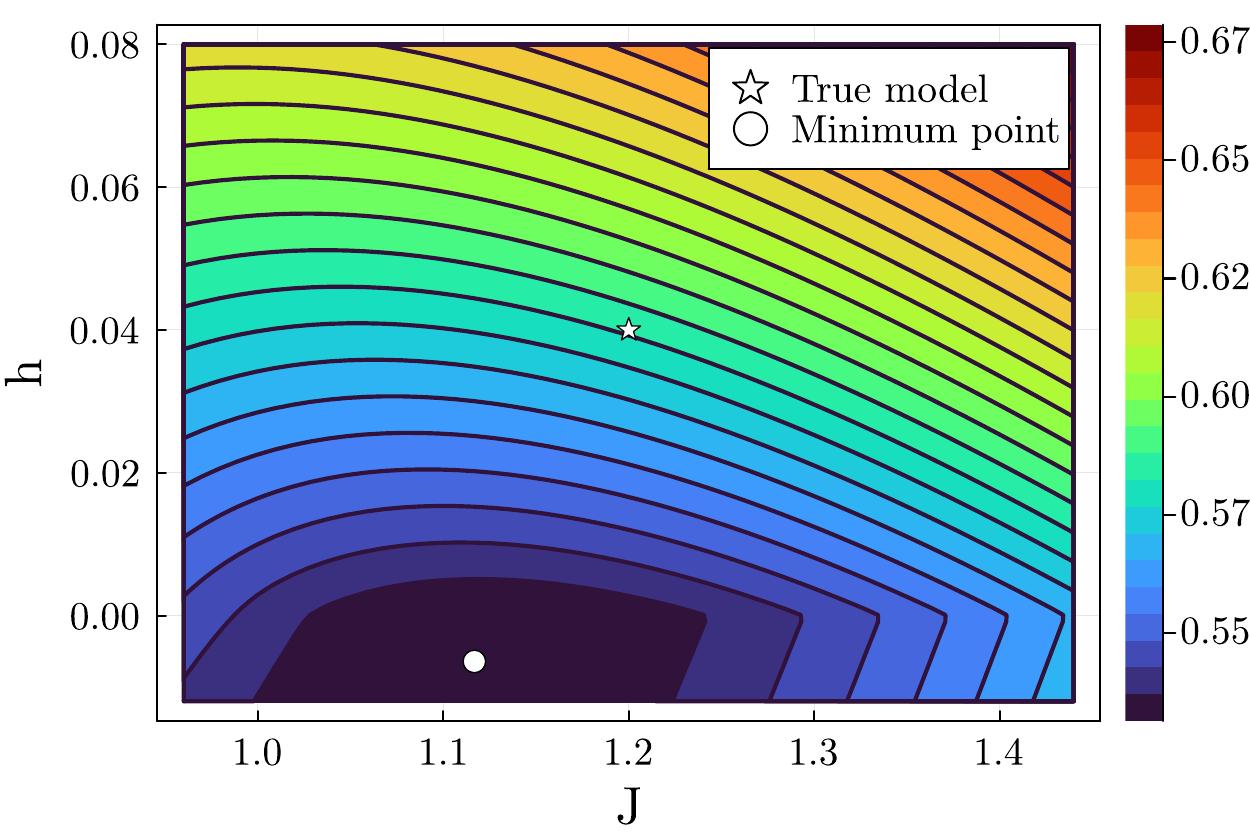}
    \caption{Maximum likelihood loss}
    \label{fig:loss_1}
    \end{subfigure}
    \begin{subfigure}[b]{0.45\textwidth}
    \includegraphics[width=\textwidth]{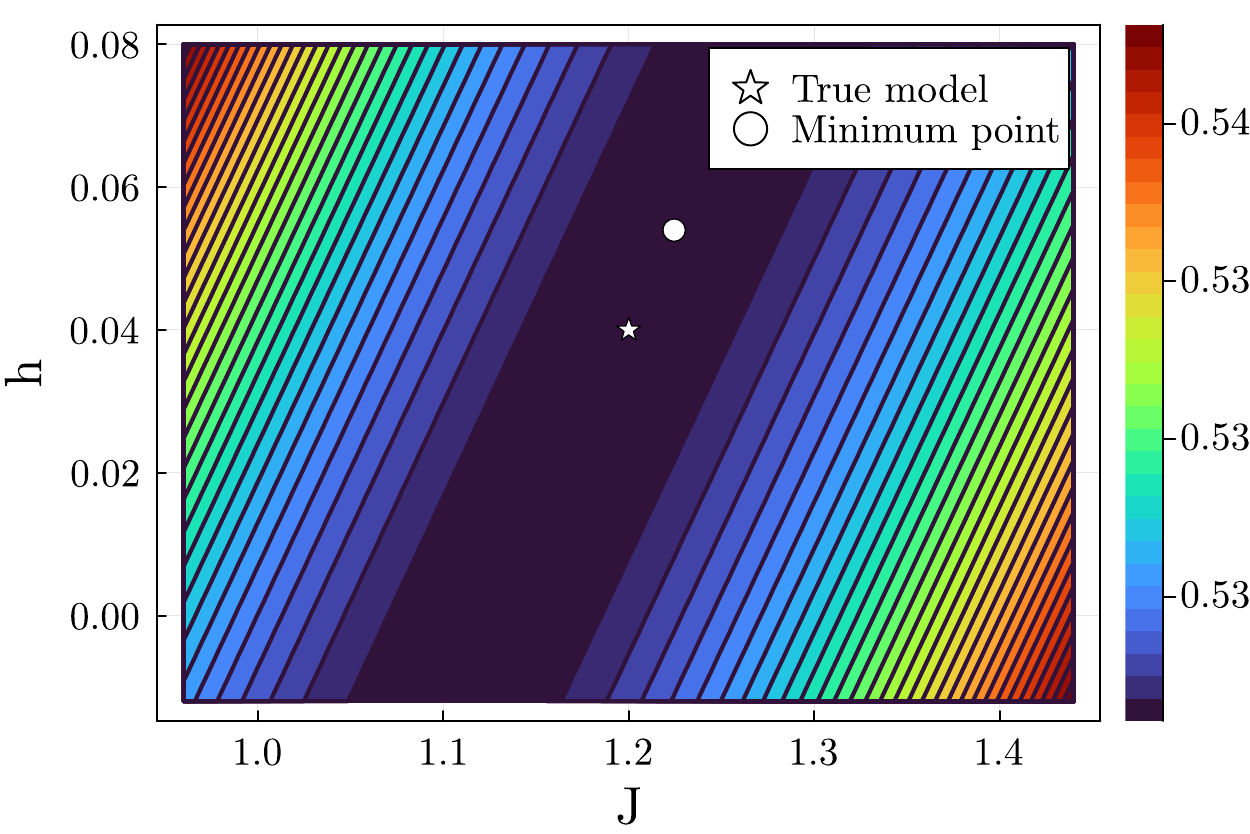}
    \caption{Psuedo-likelihood loss}
    \label{fig:loss_2}
    \end{subfigure}
    \caption{ Comparison of the loss function landscape for the CW model with true parameters $J = 1.2, h = 0.04.$ These are plotted with $M= 2^{32}$ samples produced by Glauber dynamics ``stuck'' at the positive minima of the free energy. (a) Negative log-likelihood computed from this data clearly has it's minimum far from the true model. The sign of the magnetization is opposite of the true model. This is expected as maximum likelihood tries to match the sufficient statistics of the data to the model. (b) PL loss function has the minima close to the true model and learns the magnetic field with the right sign.}
\end{figure*}

\subsection{Learning spin glass models}

To demonstrate learning from metastable samples in the most general setting, we study this learning problem on a 
model with $q>2$, higher-order interactions and glassy behavior.
To this end, we choose our energy function as the sum of third-order terms over a hypergraph specified by the edge set $\texttt{E}$,
\begin{equation}
\label{eq:three_body_ferro}
    E(\usigma) = -\beta \sum_{(i,j,k) \in \texttt{E}}  ~ W(\sigma_i, \sigma_j, \sigma_k).
\end{equation}

 Here each variable, $\sigma_i$, can take values from $\{1,\dots,q\}$.  The tensor $W$ is a $q\times q \times q$ tensor that determines how each hyperedge contributes to the total energy. 
 
 The tensor $W$ is defined in the following way: Let $a,b,c\in\{1,\dots,q\}$ denote $q$-state variables. We define 
$W(a,b,c)\in\{+1,-1\}$  by
\[
W(a,b,c)=
\begin{cases}
+1, & \text{for } (a,b,c)\in\{(1,r,r),(r,1,r),(r,r,1)\}_{r=1}^q,\\
-1, & \text{otherwise.}
\end{cases}
\]
Equivalently, the only favored local configurations are those with one index equal to
the distinguished state $1$ and the remaining two indices equal.

To understand the behavior of this model, first let us look at the special case of $q=2$. We can switch to the spin representation (\{1,-1\} alphabet) and equivalently write this energy function as, $E(\usigma) = -\beta \sum_{(i,j,k) \in \texttt{E}}  ~\sigma_i \sigma_j\sigma_k$. This shows that the model is not frustrated in the traditional sense, it has a ground state of all ones. However, this form of a three-body ferromagnetic interaction has been studied  \cite{franz2001ferromagnet, newman1999glassy}, as an example of a system showing glassy behavior despite not being frustrated in the traditional sense. When this model is sampled using Glauber dynamics from a random initial state for high enough $\beta$ on a hypergraph of sufficient edge density, this dynamics is seen to get stuck at a higher energy compared to the equilibrium distribution \cite{vuffray2014cavity, newman1999glassy}. As pointed out in \cite{franz2001ferromagnet}, this is caused by the presence of a large number of metastable states that prevents the mixing of the dynamics to the stationary distribution of the Markov chain.

To demonstrate the effectiveness of learning from metastable distributions in the most general setting, we study the parameter and structure learning problem on these models at $q=3$. For the underlying graph, we randomly choose a third order hypergraph with number of edges $|\texttt{E}| = 3 n/2$. 

Similar to the $q=2$ case studied in literature, we observe that for the $q=3$ model, as $\beta$ increases, Glauber dynamics starting from a random initial state is seen to get stuck at a higher average energy than that is given by the exact sampler. In \figureautorefname~\ref{fig:potts_1}, we study the average energy as a function of $\beta$ and observe this onset of metastability. Since the all one state is a ground state, starting Glauber dynamics from this state is seen to match the results of the exact sampler at $n=12$. Since exact sampling at $n=24$ is not viable, we use the fact that the randomly initialized Glauber being stuck at a higher energy than the minimum initialized Glauber as an indication of metastability.
Now to test our learning algorithm, we use the samples generated from the $n=24$ Glauber chain with random initialization as samples. The chain is restarted $16$ times for this experiment, and only every tenth sample is recorded from the chain to ensure that the dynamical effects of the sampler are reduced in the dataset fed to the learning algorithm. In the PLE loss we use a hypothesis energy function of the form,

\begin{equation}
\label{eq:three_body_ferro_hypothesis}
    E(\usigma, \utheta) = - \sum_{(i,j,k) \in \texttt{E}'}  ~ \theta_{i,j,k} \ W(\sigma_i, \sigma_j, \sigma_k),
\end{equation}
such that this hypergraph has twice the number of edges as the true model and is its strict superset, i.e., $|\texttt{E}'| = 3n$ and $\texttt{E} \subset \texttt{E}'$. We do not use the prior information that all the edges have uniform couplings, instead the learning is done with an $\ell_1$ constraint that will not enforce any uniformity. The results of the parameter and structure learning experiments are given in \figureautorefname~\ref{fig:pott_2}, averaged over $5$ different random instantiations of the hypergraph underlying the model. These show that both these tasks succeed given metastable samples. \figureautorefname~\ref{fig:pott_3} shows the histogram of the couplings learned from these metastable states. This shows that with enough samples the learning algorithm is clearly able to classify the hyperedges present in the energy function from the non-edges.

\begin{figure*}
    \centering
\begin{subfigure}[b]{0.31\textwidth}
    \includegraphics[width=\textwidth]{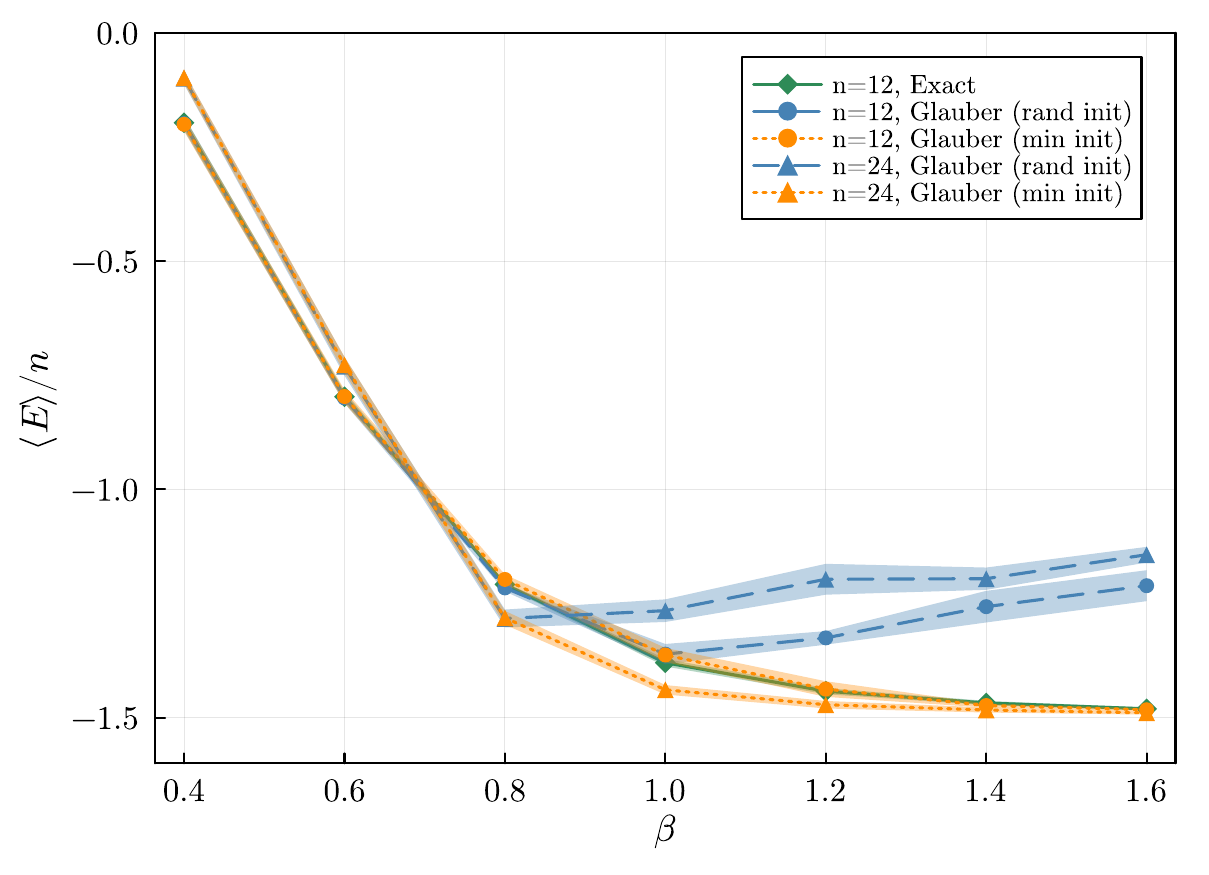}
    \caption{}
    \label{fig:potts_1}
    \end{subfigure}
    \begin{subfigure}[b]{0.31\textwidth}
    \includegraphics[width=\textwidth]{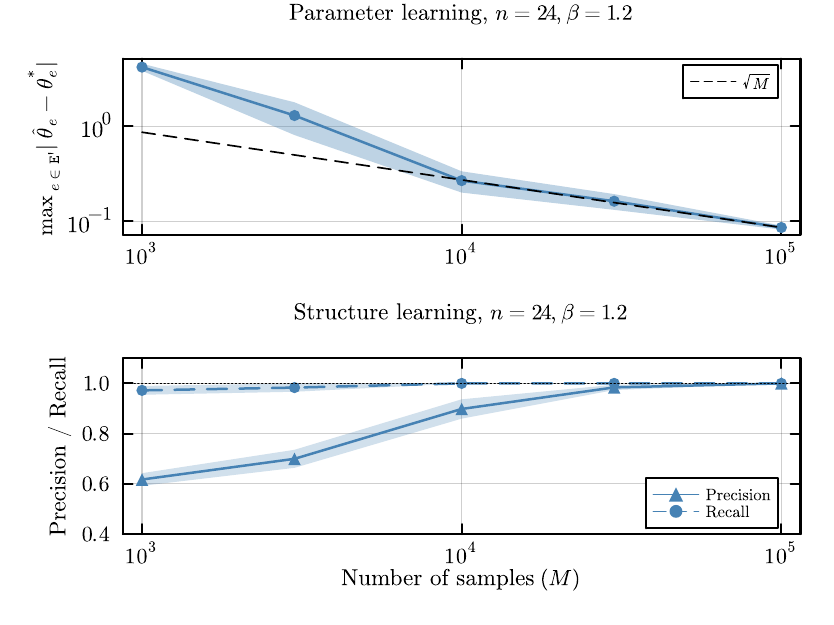}
    \caption{}
    \label{fig:pott_2}
    \end{subfigure}
    \begin{subfigure}[b]{0.31\textwidth}
    \includegraphics[width=\textwidth]{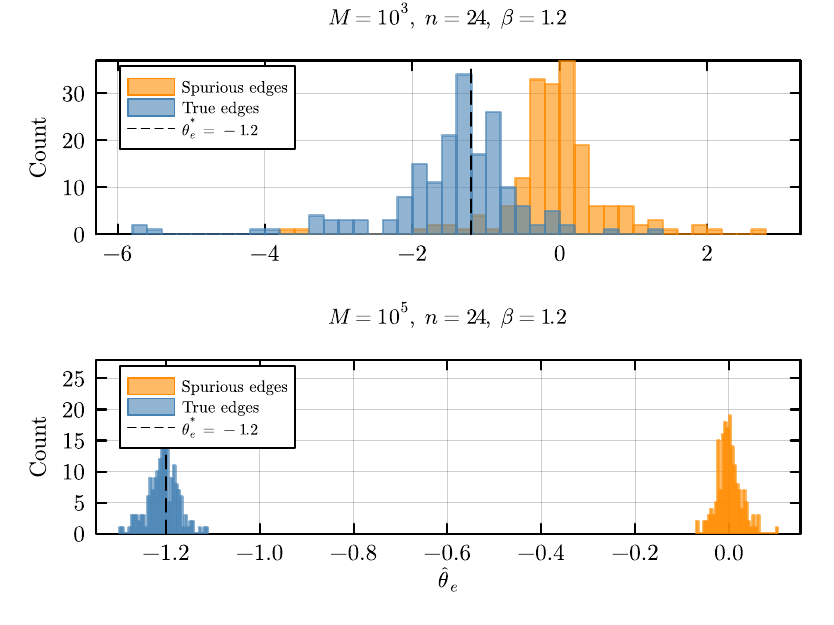}
    \caption{}
    \label{fig:pott_3}
    \end{subfigure}
    \caption{  Learning from metastable samples for a spin glass model with $q =3$. All datapoints are averaged over $5$ different instantiations of the random hypergraph as specified in \eqref{eq:three_body_ferro} (a) Metastability in this model can be observed by comparing the average energy of the samples produced by different samplers \cite{franz2001ferromagnet, vuffray2014cavity}. Glauber dynamics initialized from a random configuration is seen to get stuck at higher energy than the exact sampler and the same Glauber dynamics sampler started from the unfrustrated ground state of the model. For these experiments $M=10^5$. (b) Learning results from metastable samples collected from $n=24, \ \beta = 1.2$ model. For structure learning, via thresholding we choose to declare as edges all terms with a learned coupling greater than $0.12$ in absolute value as the edges. The precision and recall of predicting a hyperedge using this threshold is reported in the bottom panel  (c) Shows the histogram of learned couplings for different values of $M$.  All chains in these experiments are restarted $16$ times during the sampling with only the tenth sample kept till the desired number of samples are collected. Each restart is initialized with a burn-in of $10^4$ steps.}
\end{figure*}


\section{Discussion}

Metastability of stochastic dynamics is an important idea that lies at the heart of many interesting phenomena studied in statistical physics. In this work, we have shown that metastability has important effects on learning from discrete data.
Our work opens up several interesting directions for exploration. For instance, it is natural to ask if there are large separations in the two measures of metastability defined in this work. In \SupI \ref{appendix:meta_separation}, we show that for a given $P$, the largest separation between these measures is at least as big as the diameter of the underlying graph of transitions. We conjecture that this is not the optimal separation between these two measures of metastability. If this separation is indeed tight then that implies only an $O(n)$ difference between the two measures of measures for Glauber dynamics Markov chain. This is because the underlying graph of transitions for Glauber dynamics is the $n$-bit hypercube, which has a diameter of $n$.  This would then imply non-trivial learning guarantees for all  metastable states.  The connections between these notions and electrical networks used in the proof of Proposition \ref{prop:strong_weak} might be useful for these explorations. 
The closeness of conditionals implied by strong metastability in Theorem \ref{thm:close_condtionals}, is akin to the Dobrushin–Lanford–Ruelle (DLR) condition used in the definition of infinite-volume Gibbs measures \cite{friedli2017statistical}. However, the family of distributions defined by a strongly metastability condition, with an appropriately vanishing $\eta$ with system size, is richer than what is defined by the DLR condition.

There is also a natural question of whether strong metastability, as defined in this paper, is the right model for describing samples collected from an unmixed Markov chain. A more refined question is whether one can guarantee, under suitable conditions, that the Markov chain samples from a strongly metastable state when run for a length of time much smaller that the mixing time. These are relevant questions not only for Gibbs distribution learning, but also for analyzing performance of a broad class of algorithms, and hence has been a subject of interest in recent literature. For instance, Liu et al.~\cite[Problem 1.23]{liu2024locally} posed this question as an open problem for a natural class of strongly metastable states. The recent work of Alman et al.~\cite[Theorem 5]{alman2025dnf} rigorously formalizes a conditional version of this phenomenon for random walks on graphs: the walk can mix locally, after conditioning on a non-negligible-probability event, in regimes where the unconditioned walk need not yet be globally mixed. Extending these explorations to general reversible Markov chains would be directly relevant to the results presented here.

Another point concerns how samples are collected in a typical setting. A discerning reader might argue that an i.i.d.\ sample from a metastable state is never produced while evolving the chain in a strict sense, since the distribution continues to evolve toward equilibrium. There are two responses to this concern. First, when the chain is trapped in a low-conductance set, this evolution is very slow and hence the samples collected from the evolving chain are well modeled by independent samples from a metastable distribution. Second, if the sampler can be reset, as is possible in many settings, one can reset the state, run the chain for the same amount of time, and obtain an independent sample from the same distribution. Moreover, as shown by the results in Section~II.B, strong metastability is fairly robust to errors in time of collection and mixing with samples from well-separated modes. This robustness is also reflected in our experiments, where we collect a large number of samples while restarting the chain only 16 times. This setting is more natural than the one considered in previous Markov-chain learning papers, since we do not require samples to be collected at every time step \cite{dutt2021exponential}.

 The learning guarantees for the Ising case presented here can be generalized to more general models as done in prior works on learning graphical models \cite{Klivans2017, vuffray2019efficient}. Notice that while we have focused on the pseudo-likelihood method, the same arguments presented here would be applicable to any method that exploits conditionals for learning, like Interaction screening \cite{vuffray2016interaction} or Sparsitron \cite{Klivans2017}. In fact, we believe that similar ideas will even carry over to the case of continuous alphabets.

 Improvements on our sample complexity results may be possible focusing on a specific class of models, like Sherrington-Kirkpatrick models \cite{chandrasekaran2025learning} or Low-rank Ising models \cite{koehler2024efficiently}. Following these works that deal with i.i.d samples from the stationary distribution, for these specific families, exponential dependence on the $\ell_1$ width for metastable learning may be improved. However in general, this dependence is unavoidable from the information theoretic lower bound \cite{santhanam2012information}. Extension of our results to continuous variable graphical models is also an interesting direction to consider. However, the state of sample complexity results in this setting is not as mature as that for the discrete setting even when the samples come from the stationary distribution\cite{pabbaraju2023provable}.

 Viewing the results of this work from a higher level, we have established that generalization is possible in the context of learning discrete distributions even if the data is constrained to come from some specific regions of state space. This generalization is made possible by the use of methods that learn the conditionals and by priors imposed during learning (in terms of the low-degree form of the energy assumed during learning and the constraints used in the optimization). This demonstrates an interplay between algorithms and prior which allow learning to succeed even in the presence of the ``wrong'' data. The conditional based methods used here are not limited to graphical models, they can also be used for instance to learn energy based models with a neural net parametrization of the energy \cite{abhijith2020learning}. The exploration of the ideas presented here in the context of more general energy-based models is expected to shed light on the observed generalization properties of such models.
~

\bibliography{main}
\bibliographystyle{acm}
\paragraph*{\bf Competing interests \\} The authors declare no competing interests
 \paragraph*{\bf Funding acknowledgments \\} The authors acknowledge support from the U.S. Department of Energy/Office of Science Advanced Scientific Computing Research Program and from the Laboratory Directed Research and Development program of Los Alamos National Laboratory under Project No.\ 20230338ER. Los Alamos National Laboratory is operated by Triad National Security, LLC, for the National Nuclear Security Administration of U.S.\ Department of Energy (Contract No.\ 89233218CNA000001)
 
\paragraph*{\bf Code availability \\} Codes  used for learning experiments can also be found here \url{https://github.com/abhijithjlanl/metastable-learning}. The documentation in this repository contains detailed information on how to recreate the experiments.

\onecolumngrid
\newpage
\appendix

\section{Proof of Proposition \ref{prop:slow_mix}} \label{app:slow_mix}

For any Markov chain, $P$ and any vector, $v$ the following data-processing inequality holds,

\begin{equation}
    ||v P||_1 = \sum_i |\sum_j P(i|j) v(j) | \leq  \sum_i \sum_j P(i|j) |v(j) | = || v||_1
\end{equation}

Now give an $\eta$ metastable distribution $\nu$, using the above data-processing inequality repeatedly on $v = \nu P - \nu $ gives us,
\begin{equation}
    |\nu P^{k+1}  - \nu P^{k}| \leq \eta
\end{equation}

Now suppose that after $T$ steps starting from $\nu$ the Markov chain gets $\epsilon$ close in TV to the equilibrium distribution $\mu$. From this,
\begin{align}
    |\nu - \mu|_{TV} &\leq  |\nu - \nu P|_{TV}  + |\nu P - \mu|_{TV}, \\
    &\leq |\nu - \nu P|_{TV} + |\nu P^2 - \nu P|_{TV}    \ldots |\nu P^{T} - \nu P^{T-1}|_{TV} + |\nu P^T  - \mu|_{TV},\\
    & \leq \eta T  +  |\nu P^T  - \mu|_{TV}.
\end{align}

This gives $|\nu P^T  - \mu|_{TV} \geq |\nu - \mu|_{TV} - \eta T.$

Now  $|\nu P^T  - \mu|_{TV} \leq \epsilon,$ gives us $T \geq \frac{|\nu - \mu|_{TV} - \epsilon}{\eta}.$

\section{Separations between metastability and strong metastabililty}\label{appendix:meta_separation}





The two definitions of metastability raises an interesting question: \emph{For a reversible Markov chain does $\eta-$ metastability imply $\eta-$ strong metastability?}
We can construct a simple counter-example showing that is not true. For an even integer $L$, let $\scc = [L]$ and  $P$ be the Markov chain corresponding to the random walk on a cycle graph on $L$ elements. Take $\nu$ to be the following `tent' distribution,
\begin{equation}
\nu(i)  = \frac{1}{Z} 
\begin{cases}
i  L ,~~ &i \leq L/2, \\
-i  L + L^2 ,~~ &i > L/2, \\
\end{cases}
\end{equation}

The $Z$ here ensures normalization. Now by direct computation, we can verify that this state is $\frac{2L}{Z}$ metastable while being $\frac{L^2}{2Z}$ strongly metastable. This shows that there can be $\Theta(|\scc|)$ factor difference between the two measures metastability.  This construction can be generalized to general reversible Markov chains using by mapping them to an electrical network and analyzing the current flows.

\begin{proposition}\label{prop:strong_weak}
Let $G = (\scc,E)$ be a graph defined on the state space with the edge set $E = \{(i,j) | P(i|j) \neq 0 \}.$  Let $diam(G)$ be the diameter of this graph. Then the chain $P$ has a $\eta$- metastable distribution that is $\Omega(diam(G) \eta)-$strongly metastable.
\end{proposition}
We conjecture that this construction does not give the optimal separation between metastable distributions and strong metastable distributions. The reason is that the diameter is a topological property of the chain and it can be changed drastically by adding a few well-chosen very small but non-zero transitions to the chain.

While the relationship between these two notions of metastability is an interesting question, it is not the main focus of this paper.  We leave further exploration of this for future work.
\subsubsection{Proof of Proposition \ref{prop:strong_weak}}
\paragraph*{Mapping the problem to an electric network:}
To prove this statement we first map the Markov chain to an electric network using the well-known mapping between these two problems\cite{nash1959random, doyle1984random}. The conductance \footnote{Not to be mistaken for the conductance of the Markov chain that is defined in context of probability flows. This conductance is the inverse of the electrical Resistance.} between two state $i,j$ will be given by,
\begin{equation}
    c_{ij} = \mu(i)P(j|i) = \mu(j)P(i|j).
\end{equation}

We also associate a voltage with each node, 
\begin{equation}
    V(i) = \frac{\nu(i)}{\mu(i)}.
\end{equation}

Let $E$ be the edges with non-zero conductance, $E = \{(i,j) \in \scc \times \scc|~ c_{ij} \neq 0\}.$

We also associate a current with every edge according to Ohm's law,
\begin{equation}
    I(i,j) = (V(i) - V(j)) c_{ij} ~~\forall (i,j) \in E
\end{equation}

Note that this current is antisymmetric, that is, $I(i,j) = -I(i,j).$

Now for two nodes $s$ and $t$ in the graph, pick an $\eta-$ metastable distribution such that $\nu(I-P) = \frac{\eta}{2}(\vec{e}_s - \vec{e}_t).$  In the last part of the proof we will show that such a state always exists for some $\eta > 0$. We leave the choice of $s$ and $t$ free for now, but later we will fix it to get optimal bounds.  We can show that this metastable distribution maps to an electrical network that has current $\eta/2$ injected at $s$ and extracted at $t.$ Consider the following relations,

\begin{equation} \label{eq:kirchoff}
    \nu(i) - \sum_{j} \nu(j) P(i|j) =  \sum_{j} P(j|i)\nu(i) - \nu(j) P(i|j) = \sum_{j} c_{ij}(V(i) - V(j)) = \sum_{j: (i,j) \in E} I(i,j)
\end{equation}

The above expression is just the total current flowing into $i$. From the metastability condition on $\nu$ we see that total current flow is conserved at all nodes except $s$ and $t$. At these nodes current of $\eta/2$ is injected/extracted.

Now the strong metastability of the state can be shown to be measured by the total absolute value current flowing in the system,

\begin{equation} \label{eq:tot_I}
    \sum_{i,j} |P(j|i)\nu(i) - \nu(j) P(i|j)| =  \sum_{i,j} c_{ij} |V(i) - V(j)| = \sum_{i,j \in E} |I(i,j)|
\end{equation}
To bound this quantity we consider spheres of increasing size centred around the source and consider the current flow through their boundaries. We define these spheres and their boundaries using the graph distance $d_E(i,j)$ which we take to be the shortest path length that connects node $i$ and $j$ in the undirected graph $G(\scc,E)$.
\begin{align}
 \partial B(s,r) &= \{(i,j) \in E~|~ d_E(s,i) = r,~ d_E(s,j) = r+1\}, ~~~~~{\textit{(Boundary sets)}}  \\   B(s,r) &= \{j \in \scc~|~ d_E(s,j) = r\}.~~~~{\textit{(Sphere sets)}}
\end{align}

Notice that the boundary sets are directed. They do not contain both $(i,j)$ and $(j,i)$ edges.

$s$ and $t$ nodes are unspecified as of now. We take these to be the maximally separated nodes in the graph $G(\scc, E)$, that is $d_E(s,t) = diam(G)$
Using this observation in \eqref{eq:tot_I},

\begin{equation} \label{eq:flow_bound}
    \sum_{i,j} |P(j|i)\nu(i) - \nu(j) P(i|j)| \geq  \sum_{r = 0}^{diam(G)-1}\sum_{(i,j) \in \partial B(s,r)} |I(i,j)| \geq \sum_{r = 0}^{diam(G)-1}\left|\sum_{(i,j) \in \partial B(s,r)} I(i,j)\right|
\end{equation}

Now intuitively we expect that the total current flow through every boundary set must be the same due to current conservation law in \eqref{eq:kirchoff}.   We show this rigorously below. From the conservation law, for any $0<r<diam(G)$ (avoiding the source and sink)

\begin{align}
   0 = \sum_{i \in B(s,r)} \sum_{j:(i,j) \in E} I(i,j) 
\end{align}

In the above equation the edges connecting nodes inside $B(s,r)$ will not contribute as both $I(i,j)$ and $I(j,i)$  will exist in the sum which cancel due to antisymmetry. The remaining edges either lie in $\partial B(s,r)$ or $\partial B(s,r-1)$. Hence,

\begin{align} \label{eq:gausslaw}
   0 = \sum_{i,j \in \partial B(s,r)} I(i,j)  + \sum_{(j,i) \in \partial B(s,r-1)   }I(i,j)
\end{align}

Now the total flow out of the source $\sum_{(i,j) \in \partial B(s,0)} I(i,j) = \sum_{(s,j) \in E} I(s,j) = \eta/2$. Using this relation iteratively in \eqref{eq:gausslaw}  we get that the current flow through every boundary is the same,

\begin{equation}
   \left| \sum_{(i,j) \in \partial B(s,0)} I(i,j) \right| =   \left| \sum_{(i,j) \in \partial B(s,1)} I(i,j) \right| = \ldots  =\left| \sum_{(i,j) \in \partial B(s,diam(G)-1)} I(i,j) \right| = \eta/2
\end{equation}

Using these relation in \eqref{eq:flow_bound}, we get,

\begin{equation}
    \sum_{i,j} |P(j|i)\nu(i) - \nu(j) P(i|j)| \geq  \sum_{r = 0}^{diam(G)-1}\sum_{(i,j) \in \partial B(s,r)} |I(i,j)| \geq diam(G) \eta/2
\end{equation}

\paragraph*{Existence of s-t  $\eta-$ metastable distribution}

The question remains whether there exists a distribution such that,  $\nu(I-P) = \frac{\eta}{2}(\vec{e}_s - \vec{e}_t).$ We will now show that such a distribution must always exist with some non-zero $\eta.$ \newline
First, observe that if $P$ is an irreducible chain then the equilibrium distribution $\mu$ is the unique zero left-eigenvector of $I-P$. Moreover, in this case the  by Perron–Frobenius theorem (refer Chapter 8, \cite{horn2012matrix}) $\mu$ is a strictly positive vector.

For some $\eta' > 0$, consider the system of equations    $\nu'(I-P) = \frac{\eta'}{2}(\vec{e}_s - \vec{e}_t).$ The matrix $I-P$ is not full rank, but its null space is only one dimensional as irreducible chain has a unique stationary state. The only zero right eigenvector of $I-P$ is the vector of all ones. $\frac{\eta'}{2}(\vec{e}_s - \vec{e}_t)$, is orthogonal to this vector. Hence this system will have a solution.

For any $\nu'$ satisfying this system and for any real $\alpha$, the vector $\nu' + \alpha*\mu$ any is also a solution to the above linear system. Now we can take $\alpha$ to be a large enough value such that we hit a positive solution. Thus for any $\eta'$, there always exists a positive vector $\nu' > 0$ such that $\nu'(I-P) = \frac{\eta'}{2}(\vec{e}_s - \vec{e}_t).$  Now given such an $\eta'$ and $\nu'$ we can define,
\begin{align}
    \nu(i) &= \nu'(i)/ \sum_i \nu'(i) \\
    \eta &= \eta'/ \sum_i \nu'(i)
\end{align}
We see that $\nu$ is a probability distribution which will be an $\eta-$metastable distribution, satisfying $\nu(I-P) = \frac{\eta}{2}(\vec{e}_s - \vec{e}_t).$  Since $\eta' > 0$ and $\nu'$ is positive, we have $\eta > 0.$

\section{Metastable distributions of Curie-Weiss model  }\label{app:cw_meta}

Consider the  order parameter magnetization per spin, $m(\usigma) = \frac{\sum_i \sigma_i}{n}$.

The Gibbs distribution associated with the Curie-Weiss model can be expressed as,
\begin{equation}
    \mu(\usigma) = \sum_{m\in\{-1, -1+2/n, \ldots, 1\}} \mu(\usigma|m) \frac{\exp(-n \Psi(m))}{Z_\Psi} 
\end{equation}

Where $\mu(\usigma|m)$ is the probability of observing the configuration $\usigma$ given a fixed magnetization. For CW-type models with permutation symmetry, the probability of observing a certain configuration only depends on the magnetization and hence this is purely an entropic quantity, $\mu(\usigma|m) = \delta_{m(\usigma),m}\frac{1}{\binom{n}{\frac{n(1 + m)}{2}}}.$ The free energy function, $\Psi(m)$ fixes the distribution over the order parameters values. By direct inspection of the CW model Hamiltonian we can see that,
\begin{equation}
    \Psi(m) = -\frac{J}{2} m^2 + hm -  S(m).
\end{equation}

Here $S(m) =  \frac1n\log{\binom{n}{\frac{n(1 + m)}{2}}}$ is the entropy term. This free energy is a well-studied quantity in statistical physics and defines the canonical model for the study of phase transitions \cite{kochmanski2013curie}. It is also well-established that the Glauber dynamics for this model gets stuck at metastable distributions which are defined by minima of this free energy \cite{levin2010glauber}. Now we will show that the minima of the free energy also corresponds to distributions that have the same single-variable conditionals as $\mu.$ 

To this end, let us explore the following question; \emph{Which other permutation invariant distributions will have the same single-variable conditional as $\mu$? \label{qu:1} }

The relevance of this question comes from the fact that learning algorithms like PL learn the single-variable conditionals, the existence of such a distribution $\nu$ that matches $\mu$ approximately in single-variable conditionals implies that $\mu$ can be learned even if the parameters come from $\nu.$

From the permutation invariance, we define $\nu$ using the following general form,
\begin{equation}
    \nu(\usigma) = \sum_m \mu(\usigma|m) \frac{\exp(-n \Phi(m))}{Z_\Phi}
\end{equation}

Now use the following relations, $\mu(\usigma|m) = \frac{\mu(\usigma,m)}{\mu(m)} = \mu(\usigma) \delta_{m(\usigma),m} \exp(n \Psi(m)) Z_\Psi,$

\begin{equation}
    \nu(\usigma) =  \mu(\usigma)\frac{Z_\Psi}{Z_\Phi}  e^{-n(\Phi(m) - \Psi(m))} ~~\delta_{m(\usigma),m}.
\end{equation}

Now remember that $\usigma_{\neg i}$ is just $\usigma$ with the $i-$th spin flipped. The single-variable conditionals of any distribution is simply $\nu(\sigma_i| \usigma_{\setminus i}) = \frac{1}{1 + \frac{\nu(\usigma_{\neg i})}{\nu(\usigma)}}.$ Hence these conditionals are fixed completely by the ratio of probabilities in the denominator. Let us now compute this ratio for $\nu$. Without loss of generality we assume that $\sigma_i = 1$, 
\begin{equation}
    \frac{\nu(\usigma)}{\nu(\usigma_{\neg i})} =    \frac{\mu(\usigma)}{\mu(\usigma_{\neg i})}e^{-n(\Phi(m) - \Phi(m - 2/n) - (\Psi(m) - \Psi(m -2/n))} ~~\delta_{m(\usigma),m}.
\end{equation}

Now for the conditionals of these two distributions to approximately match the term in the exponent has to be close to zero. Enforcing this condition will give us the form of $\Phi$ which is the only unknown in the RHS. 

To make the calculations easier, define the following finite difference operator of a function,
\begin{equation}
    \Delta_n f(m) = f(m) - f(m-2/n)
\end{equation}

Then,
\begin{align}
&\Delta_n m  = \frac{2}{n},\\
&\Delta_n m^2  = \frac{4m}{n} - \frac{4}{n^2}, \\
&\Delta_n m^k  =  O(\frac{m^{k-1}}{n}).
\end{align}

Now we make the assume the following quadratic ansatz for $\Phi(m)$, for some $a > 0$ and $m_0 \in [-1,1]$,

\[ \label{eq:meta_gauss} \Phi(m) = \frac{-a (m-m_0)^2}{2} \]

We have to determine $a$ and $m_0$ such that the single-variable conditionals match as much as possible.

\begin{equation}\label{eq:phi_diff}
 \Delta_n \Phi(m) = \frac{-2a(m-m_0)}{n} + \frac{2a}{n^2}.
\end{equation}

Now for $\Psi,$  we expand the free energy up to second order around the (as of yet unknown) point $m_0$,
\begin{equation}
    \Psi(m) = -\frac{J}{2} m^2 + h m - S(m_0) - (m-m_0)S'(m_0) - \frac{(m-m_0)^2}{2} S''(m_0) + O( (m-m_0)^3)
\end{equation}

This gives, 
\begin{equation}\label{eq:psi_diff} 
    \Delta_n\Psi(m) \approx -J( \frac{2m}{n} - \frac{2}{n^2})+ h \frac{2}{n}  - S'(m_0)\frac{2}{n} - \frac{2S''(m_0)(m-m_0)}{n} + \frac{2S''(m_0)}{n^2}.
\end{equation}

Now equating terms of order $\frac{1}{n^2}$ from \eqref{eq:phi_diff} and \eqref{eq:psi_diff} gives us the following relation for finding $a$,
\begin{align} \label{eq:fe_second}
   a  &= -J -  S''(m_0)
\end{align}

Substituting this back in the previous expression and matching terms of $1/n$ gives us the following expression for $m_0$
\begin{equation}\label{eq:fe_first}
    -Jm_0 + h - S'(m_0) = 0.
\end{equation}

Now this above expression is precisely the first-order stationarity condition for the free energy of CW model. And from $a>0$ we find that $\Phi''(m_0) = J - S''(m_0) > 0$ which tells us that $m_0$ lies at the minimum of the free energy.

This calculation establishes the following important observation: \emph{Minima of the CW free energy defines distributions whose single-variable conditionals match those of the CW distribution}

Notice also that $\Phi(m)$ defined as in \eqref{eq:meta_gauss} is just the second order Taylor expansion of the free energy $\Psi$ at $m_0$.
Now starting from this observation we can define a class of potentially interesting metastable states by expanding further around the positive minima of $\Psi$.

\begin{equation}
    \Phi^{(K)}(m) = \sum_{k = 2}^K  \Psi^{(k)}(m_0) \frac{(m-m_0)^k}{k!}
\end{equation}

\begin{equation}\label{eq:poly_metastable}
    \nu^{(K)}(\usigma) = \frac{e^{-n \Phi^{(K)} (m(\usigma))}}{\binom{n}{\frac{n(1 + m)}{2}}}.
\end{equation}

\subsection{Numerical evaluation of strong metastability violation in Curie-Weiss models }

For distributions which are permuation invariant, i.e. where $\nu(\usigma)$ is only a function of $m(\usigma)$, the quantity in \eqref{eq:strong_def} can be computed efficiently my mapping the problem on to the magnetization space first. 

First let us fix the Markov chain as Glauber dynamics defined by the Curie-Weiss model in \eqref{eq:fe_cw}. This gives us,
\begin{equation}
    Pr(\usigma_{\neg i} | \usigma) = \frac{1}{n} \mu(-\sigma_i|\usigma_{\setminus i})
\end{equation}

Now using this the violation in the detailed balance condition as defined in \eqref{eq:strong_def} can be rewritten as,

\begin{align}
\frac12\sum_{\usigma, \usigma' } |P(\usigma'|\usigma)\nu(\usigma) - P(\usigma|\usigma')\nu(\usigma')| &= \frac{1}{2n} \sum_{i = 1}^n \sum_{\usigma} |\mu(-\sigma_i|\usigma_{\setminus i}) \nu(\usigma) - \mu(\sigma_i|\usigma_{\setminus i}) \nu(\usigma_{\neg i})  | \\
&= \frac{1}{2n} \sum_{i = 1}^n \sum_{\usigma} |(1 - \mu(\sigma_i|\usigma_{\setminus i}) )\nu(\usigma) - \mu(\sigma_i|\usigma_{\setminus i}) \nu(\usigma_{\neg i})  | \\
&= \frac{1}{2n} \sum_{i = 1}^n \sum_{\usigma} |\nu(\usigma) - \mu(\sigma_i|\usigma_{\setminus i})( \nu(\usigma_{\neg i}) + \nu(\usigma) )  | \\
&=\frac{1}{2n} \sum_{i = 1}^n \sum_{\usigma} \nu(\usigma_{\setminus i}) | \nu(\sigma_i|\usigma_{\setminus i}) - \mu(\sigma_i|\usigma_{\setminus i})  |
\end{align}
Where $\nu(\usigma_{\setminus i}) = \nu(\usigma_{\neg i}) + \nu(\usigma)$ is the marginal distribution of all spins except $\sigma_i$

Now since  $| \nu(\sigma_i|\usigma_{\setminus i}) - \mu(\sigma_i|\usigma_{\setminus i})  | = | \nu(-\sigma_i|\usigma_{\setminus i}) - \mu(-\sigma_i|\usigma_{\setminus i})  |$. We can deduce that $\sum_{\usigma} \nu(\usigma_{\setminus i}) | \nu(\sigma_i|\usigma_{\setminus i}) - \mu(\sigma_i|\usigma_{\setminus i})  | = \sum_{\usigma} \nu(\usigma) | \nu(\sigma_i|\usigma_{\setminus i}) - \mu(\sigma_i|\usigma_{\setminus i})  |$.  This gives us the final expression,

\begin{equation}\label{eq:cond_diff}
     \frac12\sum_{\usigma, \usigma' } |P(\usigma'|\usigma)\nu(\usigma) - P(\usigma|\usigma')\nu(\usigma')|= \frac{1}{2n} \sum_{i = 1}^n \sum_{\usigma} \nu(\usigma) | \nu(\sigma_i|\usigma_{\setminus i}) - \mu(\sigma_i|\usigma_{\setminus i})  |
\end{equation}

For the Curie-Weiss model, we can easily see that the single-variable conditionals have the following expression that depends only on $\sigma_i$ and the magnetization of the state,

\begin{equation}\label{eq:cw_conditionals}
   \mu(\sigma_i|\usigma_{\setminus i}) = \frac12 (1 - \tanh(\sigma_i(J m(\usigma) - h) + \frac{J}{n}) \equiv  f(\sigma_i, m(\usigma)).
\end{equation}
Now we assume a general permutation invariant form for $\nu$,

\begin{equation}
    \nu(\usigma) = \frac{e^{-n\left(\Phi(m(\usigma)) + S(m(\usigma))\right)} }{Z_{\Phi}} \equiv   \frac{e^{-nH(m(\usigma))} }{Z_{\Phi}}
\end{equation}

From this we can get the conditional as,
\begin{equation}
    \nu(\sigma_i|\usigma_{\setminus i}) = \frac{1}{ 1 + \frac{\nu(\usigma_{\neg i})}{\nu(\usigma)}} = \frac{1}{1 + \exp(n(H(m(\usigma) - H(m(\usigma_{\neg i})))} \equiv g(\sigma_i, m(\usigma))
\end{equation}

We can use these expressions for conditionals in \eqref{eq:cond_diff}  and write this completely in the magnetization space. The number of states with a fixed per spin magnetization $m$  is simply given by $\exp(n S(m))$.  Now for a given $i$, a $\frac{1+m}{2}$ fraction of these will have $\sigma_i = 1.$ Using these we can write,
\begin{align}
    &\frac{1}{2n} \sum_{i = 1} \sum_{\usigma} \nu(\usigma) | \nu(\sigma_i|\usigma_{\setminus i}) - \mu(\sigma_i|\usigma_{\setminus i})  |\\
    &=  \frac{1}{2Z_{\Phi}n} \sum_{i = 1}^n \sum_{m \in \{-1,-1+\frac{2}{n}, \ldots, 1\}} e^{-n\Phi(m)} \left(\frac{1+m}{2}| g(1,m) - f(1,m) | + \frac{1-m}{2}|g(-1,m) -f(-1,m)|  \right),\\
    &=  \frac{1}{2Z_{\Phi}}\sum_{m \in \{-1,-1+\frac{2}{n}, \ldots, 1\}} e^{-n\Phi(m)}| g(1,m) - f(1,m) | \label{eq:mag_reduction}
\end{align}

Where in the last line we have use the normalization condition on the conditionals, $g(1,m) + g(-1,m) = f(1,m) +f(-1,m) = 1.$
Using this reformulation we can compute the strong metastability of the state in essentially $O(n)$ time. The normalization constant can also be computed efficiently using the formula, $Z_{\phi} = \sum_\sigma \exp(-n H(m(\usigma))) = \sum_{m} \exp(-n \Phi(m))$.

The details of numerical evaluation of three different families metastable distributions for the CW model is shown in \figurename \ref{fig:meta_numerics}. First we look at the quartic $(K=4)$ and quadratic $(K=2)$ approximations to the CW free energy as defined in \eqref{eq:poly_metastable}. We perform the expansion around the positive minimum of the CW free energy ($m_0 > 0$). As this corresponds to the mode of the distribution with suppressed probability when $h > 0$ (refer \figurename \ref{fig:cw_2}). For the third case, we truncate the free energy around $m_0$ with a certain width,
\begin{equation} \label{eq:truncated_def}
    \Phi(m) = \begin{cases}
       &\Psi(m),~~\text{if}~~ |m-m_0| < \frac{m_0}{4\sqrt{a}}\\
       & \infty,~~\text{otherwise}
    \end{cases}
\end{equation}

This truncation corresponds to the type of metastable states defined in Section \ref{sec:cheeger_metastable}. We see numerically that these metastable states have an $\eta$ value that is exponentially small in the system size.

\ca

\subsection{Proof of Proposition \ref{prop:sm2}}
To prove this statement we first map the Markov chain to an electric network using the well-known mapping between these two problems\cite{nash1959random, doyle1984random}. The conductance between two state $i,j$ will be given by,
\begin{equation}
    c_{ij} = \mu(i)P(j|i) = \mu(j)P(i|j).
\end{equation}

We also associate a voltage with each node, 
\begin{equation}
    V(i) = \frac{\nu(i)}{\mu(i)}.
\end{equation}

We know that $\nu$ satisfies the relation $\nu(I-P) = \frac{\eta}{2}(\vec{e}_s - \vec{e}_t).$ We can show that this maps to an electrical network that has $\eta/2$ injected at $s$ and extracted at $t.$

\begin{equation}
    \nu(i) - \sum_{j} \nu(j) P(i|j) =  \sum_{j} P(j|i)\nu(i) - \nu(j) P(i|j) = \sum_{j} c_{ij}(V(i) - V(j)).
\end{equation}

The above expression is just the total current flowing into $i$. From the condition on $\nu$ we see that total current flow is conserved at all nodes except $s$ and $t$. At these nodes current of $\eta/2$ is injected/extracted.

Let $c^*$ be the smallest non-zero conductance in this network. Now using this mapping we see that,
\begin{align}
\sum_{i,j} |P(j|i)\nu(i) - \nu(j) P(i|j)| &=  \sum_{i,j} c_{ij} |V(i) - V(j)|,\\
&\geq \sqrt{\sum_{i,j} c^2_{ij} (V(i) - V(j))^2},\\
&\geq \sqrt{ c^*\sum_{i,j} c_{ij} (V(i) - V(j))^2},\\
&=  \sqrt{ 2c^*\sum_{i,j} c_{ij} (V^2(i) - V(i)V(j))},\\
&=  \sqrt{ 2c^*\sum_{i}V_i \sum_{j} c_{ij} (V(i) - V(j))},\\
&= \sqrt{ \eta c^* (V(s) - V(t)) }.
\end{align}

Now this potential difference $V(s) - V(t)$ is related to the injected current via the effective resistance of the network $V_s - V_t = \frac{\eta}{2}R^{s,t}_{eff}$

Now the effective resistance of a network and escape probabilities are related as follows (see Proposition 9.5 \cite{levin2017markov}),
\begin{equation}
\frac{1}{p^{s,t}_{escape} \sum_{j} c_{sj}} =  \frac{1}{p^{s,t}_{escape} \mu(s) } = R^{s,t}_{eff}
\end{equation}

Now choose $s$ such that $c_{si} = c^*$ for some $i$. 
\cb

\section{Robustness of strong metastability: proofs}\label{appendix:robustness_proofs}

\subsection{Proof of Proposition \ref{prop:robust_mix}}
\begin{proof}
\begin{align}
    \frac12\sum_{i,j \in \scc } | P(i|j)\nu(j) - P(j|i)\nu(i)| &=  \frac12\sum_{i,j \in \scc } |\sum_k p_k (P(i|j)\nu_k(j) - P(j|i)\nu_k(i))|, \\
    &\leq \frac12\sum_{i,j \in \scc } \sum_k p_k | (P(i|j)\nu_k(j) - P(j|i)\nu_k(i))| \leq \sum_k p_k \eta_k.
\end{align}
\end{proof}

\subsection{Proof of Proposition \ref{prop:robust_dynamics}}
\begin{proof}
Fix an integer $t \ge 1$, and define $\nu' = \nu P^t$. Also define $\Delta(i) \coloneqq \nu'(i) - \nu(i)$. We bound the detailed-balance violation of $\nu'$ as
\begin{align}
 \frac12\sum_{i,j \in \scc } | P(i|j)\nu'(j) - P(j|i)\nu'(i)|  &\leq \frac12\sum_{i,j \in \scc } | P(i|j)\nu(j) - P(j|i)\nu(i)|  + | P(i|j) \Delta(j) - P(j|i)\Delta(i)|  \\
 &\leq  \eta + \frac12 \sum_j \left(\sum_{i} P(i|j)\right) \ |\Delta(j)| + \frac12 \sum_i \left(\sum_{j} P(j|i)\right) \ |\Delta(i)| \\
  &\leq \eta + \|\Delta \|_1 = \eta + 2 | \nu - \nu P^t|_{TV}.
\end{align}

Now via triangle inequality,
\begin{equation}
|\nu - \nu P^t|_{TV} \le |\nu - \nu P|_{TV} + |\nu P - \nu P^2|_{TV} + \ldots +  |\nu P^{t-1} - \nu P^t|_{TV}  .
\end{equation}
Since strong metastability implies metastability, $\nu$ is $\eta$-metastable. By the data processing inequality, $\nu P^t$ is also $\eta$-metastable, as are the intermediate states. 
\begin{equation}
|\nu - \nu P^t|_{TV} \le t \eta,
\end{equation}
and substituting above yields
\begin{equation}
\frac12\sum_{i,j \in \scc } | P(i|j)\nu'(j) - P(j|i)\nu'(i)| \le (1 + 2t)\eta.
\end{equation}
\end{proof}

\section{Proof of Theorem \ref{thm:close_condtionals}} \label{appendix:proof_thm1}
\begin{proof}
The $\eta-$strong metastability condition gives us,
\begin{equation}
    \frac12 \sum_{\usigma, \usigma' \in \scc} \left| P(\usigma'|\usigma) \nu(\usigma) - P(\usigma|\usigma') \nu(\usigma')\right| \leq \eta
\end{equation}

Now let us introduce the notation $\usigma_{i \rightarrow \alpha}$ to represent $\usigma$ with the $i$-th variable replaced by the alphabet $\alpha$.

Now if we consider only one variable transitions of the chain we get,

\begin{equation}
\label{eq:strong_1}
    \frac12 \sum_{\usigma \in \scc}\sum_{u = 1}^n \sum_{\alpha \in Q, \alpha \neq \sigma_i}| P(\usigma_{u \rightarrow \alpha}|\usigma) \nu(\usigma) - P(\usigma|\usigma_{u \rightarrow \alpha}) \nu(\usigma_{u \rightarrow \alpha})| \leq \eta
\end{equation}

Since $P$ is a reversible Markov chain, the following relations can be derived from the detailed balance conditions,
\begin{equation}
    \frac{ P(\usigma_{u \rightarrow \alpha}|\usigma)}{ P(\usigma|\usigma_{u \rightarrow \alpha})} = \frac{ \mu(\usigma_{u \rightarrow \alpha})}{ \mu(\usigma)}  = \frac{  \mu(\alpha| \usigma_{\su})} {  \mu(\sigma_u| \usigma_{\su})}.
\end{equation}

After plugging these into \eqref{eq:strong_1} and a bit of algebra gives us,
\begin{equation}
   \frac12 \sum_{\usigma \in \scc}\sum_{u = 1}^n \sum_{\alpha \in Q, \alpha \neq \sigma_i}  \frac{ P(\usigma_{u \rightarrow \alpha}|\usigma)}{ \mu(\alpha| \usigma_{\su})} \Big| \mu(\alpha| \usigma_{\su})  \nu(\usigma)  -  \mu(\sigma_u| \usigma_{\su}) \nu(\usigma_{u \rightarrow \alpha}) \Big| \leq \eta.
\end{equation}

Now we can take the marginal probability $\nu(\usigma_{\setminus u})$ out of the expression within the summations in the LHS. Then we can use the definition of conditionals given by: $\nu(\sigma_i| \sigma_{\setminus i}) =\frac{\nu(\usigma)}{\nu(\usigma_{\setminus i})}$,    $\nu(\alpha| \sigma_{\setminus i}) =\frac{\nu(\usigma_{u \rightarrow \alpha})}{\nu(\usigma_{\setminus i})}$,  to rewrite it as follows,

\begin{equation}
   \frac12 \sum_{\usigma \in \scc}\sum_{u = 1}^n \sum_{\alpha \in Q, \alpha \neq \sigma_i}  \frac{ P(\usigma_{u \rightarrow \alpha}|\usigma)}{ \mu(\alpha| \usigma_{\su})} \nu(\usigma_{\setminus u}) \Big| \mu(\alpha| \usigma_{\su})  \nu(\sigma_u|\usigma_{\setminus u})  -  \mu(\sigma_u| \usigma_{\su}) \nu(\alpha| \usigma_{\setminus u}) \Big| \leq \eta.
\end{equation}

Now applying Condition \ref{cond:bounded_flip} we can write this as,

\begin{equation} \label{eq:disc_temp}
   \frac12 \sum_{\usigma \in \scc}\sum_{u = 1}^n   \nu(\usigma_{\setminus u})\sum_{\alpha \in Q, \alpha \neq \sigma_i}  \Big| \mu(\alpha| \usigma_{\su})  \nu(\sigma_u|\usigma_{\setminus u})  -  \mu(\sigma_u| \usigma_{\su}) \nu(\alpha| \usigma_{\setminus u}) \Big| \leq \frac{\eta}{\omega_P}.
\end{equation}

Now we assert that the expression  $\sum_{\alpha \in [Q], \alpha \neq \sigma_i}  \Big| \mu(\alpha| \usigma_{\su})  \nu(\sigma_u|\usigma_{\setminus u})  -  \mu(\sigma_u| \usigma_{\su}) \nu(\alpha| \usigma_{\setminus u}) \Big|$ is lower bounded simply by $|\nu(\sigma_u | \usigma_{\setminus u}) - \mu(\sigma_u | \usigma_{\setminus u})|$.

To show this consider two vectors $a, b \in \mathbb{R}^{|Q|}$, such that they sum to one, $\sum_i a_i = \sum_i b_i = 1$. It is simple to see that,

\begin{equation}
|a_k - b_k| = | a_k \sum_{j \in Q} b_j - b_k\sum_{j \in Q}a_j | =| \sum_{j \in Q, j\neq k} a_k b_j - a_j b_k| \leq \sum_{j \in Q, j\neq k} |a_k b_j - a_j b_k|
\end{equation}

Since the conditionals sum to one, as a consequence of the above inequality we can see that,
$\sum_{\alpha \in Q, \alpha \neq \sigma_i}  \Big| \mu(\alpha| \usigma_{\su})  \nu(\sigma_u|\usigma_{\setminus u})  -  \mu(\sigma_u| \usigma_{\su}) \nu(\alpha| \usigma_{\setminus u}) \Big| \geq |\nu(\sigma_u | \usigma_{\setminus u}) - \mu(\sigma_u | \usigma_{\setminus u})|$. Putting this into \eqref{eq:disc_temp}, we get,

\begin{equation} 
   \frac12 \sum_{u = 1}^n  \sum_{\usigma \in \scc}  \nu(\usigma_{\setminus u})\Big|\nu(\sigma_u | \usigma_{\setminus u}) - \mu(\sigma_u | \usigma_{\setminus u})\Big| \leq \frac{\eta}{\omega_P}.
\end{equation}

Now by splitting the inner sum over $\usigma$ as one going over $\sigma_u$ and another over $\usigma_{\setminus u}$, we can easily deduce that,

\begin{equation} 
    \sum_{u = 1}^n  \sum_{\usigma \in \scc}  \nu(\usigma)\Big|\nu(. | \usigma_{\setminus u}) - \mu(.| \usigma_{\setminus u}))\Big|_{TV} \leq \frac{\eta}{\omega_P}.
\end{equation}

\end{proof}

\section{Proof of corollary \ref{thm:PL_close}}
\begin{proof}
 For a fixed $u \in [n]$  and parametric conditionals are given by $
    p(q|\usigma_{\setminus u}; \utheta) = \frac{1}{1 +  \sum_{p \in Q, p\neq q }e^{ E(\usigma_{u \rightarrow p}, \utheta ) - E(\usigma_{u \rightarrow q}, \utheta)}}.$ And the corresponding log-likelihoods are $\lc_u(\utheta, \usigma) = -\log( p(\sigma_u|\usigma_{\setminus u}; \utheta) )$ The average KL divergence between the conditionals of $\nu$ and the parametric hypothesis can be written as,

\begin{align}
   \EXp{\usigma_{\su} \sim \nu_{\su}} D_{KL}\left(\nu(.|\usigma_{\su} || p(.|\usigma_{\su}; \utheta_u) \right)  = \EXp{\usigma \sim \nu} \log \left(\nu(\sigma_u|\usigma_{\su }) ( 1 +  \sum_{p \in Q, p\neq q }e^{ E(\usigma_{u \rightarrow p}, \utheta ) - E(\usigma_{u \rightarrow q}, \utheta)} ) \right) \\
   = \EXp{\usigma \sim \nu} \log \left(\nu(\sigma_u|\usigma_{\su })\right) + \EXp{\usigma \sim \nu} \lc_u(
    \utheta, \usigma) 
\end{align}

For convenience, let us define $\rho$ as the lower bound on the conditionals of  the parametric distribution, which is assumed to also include the true distribution, i.e.  $\rho \equiv \min_{\sigma, \utheta, u} p(\sigma_u| \usigma_{\setminus u }; \utheta).$

From the reverse Pinsker's inequality (Lemma 4.1 in \cite{10.1214/19-EJP338}) and the lower bound on the conditionals we see that,
$D_{KL}(\nu(.|\sigma_{\su})|| \mu(.|\sigma_{\su})) \leq \frac{2}{\rho}  \left| \nu(.|\usigma_{\su}) - \mu(.|\usigma_{\su})\right|_{TV}.$ Using this bound in Theorem \ref{thm:close_condtionals} gives us,
\begin{equation}
    \EXp{\usigma \sim \nu} \log( \nu(\sigma_u| \sigma_{\su}))  +     \EXp{\usigma \sim \nu} \lc(
    \utheta_u^*, \usigma) \leq \frac{2 \eta}{\rho~ \omega_P}.
\end{equation}

Now from the positivity of KL divergence,  $    \EXp{\usigma \sim \nu} \log( \nu(\sigma_u| \sigma_{\su})) \geq  \EXp{\usigma \sim \nu} \log( p(\sigma_u| \sigma_{\su}; \hutheta_u)) = -\EXp{\usigma \sim \nu}  \lc(\hutheta_u, \usigma).$
This gives us,
\begin{equation}
   \EXp{\usigma \sim \nu}  \lc(\utheta^*_u, \usigma) - \EXp{\usigma \sim \nu}  \lc(\hutheta_u, \usigma)  \leq \frac{2 \eta}{\rho~\omega_P}
\end{equation}
Now the bounds on the parametric conditionals can be used to bound the likelihood  $\lc(\theta,\usigma)$   as follows,
$ \log(\frac{1}{1-\rho}) \leq \lc(\theta, \usigma) \leq \log(\frac{1}{\rho}).$
This establishes $\lc(\theta,\sigma)$ as a bounded random variable. Now we can use Hoeffding's inequality (Lemma ef{lem:hoeff}) to show that with probability $1 -\delta,$
\begin{align}
  \frac{1}{M'}\left(\sum_{t = 1}^{M'} ~ \lc(\utheta_u^*, \usigma^{(t)}) -  \sum_{t = 1}^{M'} ~ \lc(\hutheta_u, \usigma^{(t)})\right)   &\leq    \EXp{\usigma \sim \nu}  \lc(\utheta^*_u, \usigma) - \EXp{\usigma \sim \nu}  \lc(\hutheta_u, \usigma)  + \log\left(\frac{1-\rho}{\rho}\right)\sqrt{ \frac{\log(1/\delta)}{2 M'}},\\
  &\leq\frac{2\eta}{\rho \omega_P} +  \log\left(\frac{1-\rho}{\rho}\right)\sqrt{ \frac{\log(1/\delta)}{2 M'}}.
\end{align}

Now from \eqref{eq:condtional_bounds} we see that $\rho \geq \frac{1}{1 +  (|Q|-1)e^{2  \gamma}}.$ Plugging this into the above equation gives us the desired result.

\end{proof}

\section{Proofs of Learning guarantees}

For Ising models, the exact PL estimator and the one computed from samples have the following form

\begin{equation}
    \lc(\utheta_{u}) = \EXp{\usigma \sim \nu} \lc(\utheta_{ u}, \usigma) =  \Enu ~\log( 1+ \exp(-2 (\sum_{k \neq u} \theta_{u,k} \sigma_u \sigma_k + \theta_u)) ) 
\end{equation}
\begin{equation}
    \lc_M(\utheta_{u}) = \frac{1}{M} \sum_{t = 1}^M \lc(\utheta_{u}, \usigma^{(t)}) = \frac{1}{M} \sum_{t = l}^M \log( 1+ \exp(-2 \sigma^{(t)}_u(\sum_{k \neq u} \theta_{u,k}  \sigma^{(t)}_k  + \theta_u) ) )
\end{equation}

\begin{equation}
    \hutheta_{u} \coloneqq \argmin{||\utheta_{u}||_1 \leq  \gamma } \lc_M(\utheta_{u}).
\end{equation}

A key quantity we will use through out this proof is the following measure of curvature of the loss around $\utheta^*_u$

\begin{equation}\label{eq:curv}
\delta\lc_M(\Delta,\utheta_u^*) \coloneqq \lc_M(\hutheta_u) -  \lc_M(\utheta^*_u) - \langle \Delta, \nabla \lc_M(\utheta^*_u) \rangle 
\end{equation}

\subsubsection{Proof of Theorem \ref{thm:learning_l1}}

\begin{proof}
First let us define two error vectors that respectively capture the error in all parameters connected to a variable at $u$ and only the error in the pairwise terms connected to $u.$
\begin{align}
    \Delta &\coloneqq \hutheta_{u} - \utheta^*_{u} \in \mathbb{R}^n, \\
    \Delta_{\su} &\coloneqq \hutheta_{\su} - \utheta^*_{\su} \in \mathbb{R}^{n-1}.
\end{align}
That is, these two error vectors only differ by $\hat{\theta}_u - \theta^*_u$.

Now from the definition of the curvature function in \eqref{eq:curv},
\begin{align}
\lc_M(\hutheta_{u}) -  \lc_M(\utheta^*_{u}) &=\delta\lc_M(\Delta,\utheta^*_{u}) +  \langle \Delta, \nabla \lc_M(\utheta^*_{u}) \rangle,\\
&\geq  \delta\lc_M(\Delta,\utheta_{u}^*) -  \left|\langle \Delta, \nabla \lc_M(\utheta_{u}^*) \rangle\right|,\\
&\geq  \delta\lc_M(\Delta,\utheta^*_{u}) -  ||\Delta||_1 || \nabla \lc_M(\utheta^*_{u})||_\infty
\end{align}

Now to avoid a contradiction with the definition of $\hutheta$ as the minimizer of $\lc_M$, the quantity on the left must be negative. This then implies that $\delta\lc_M(\Delta,\utheta^*_{u}) \leq   ||\Delta||_1 || \nabla \lc_M(\utheta^*_{u})||_\infty$.
Due to the $\ell_1$ constraint in the problem we have $|| \Delta ||_1 \leq 2 \gamma.$ This gives us,
\begin{equation}
\label{eq:bound_1}
\delta\lc_M(\Delta,\utheta^*_{u}) \leq   2 \gamma || \nabla \lc_M(\utheta^*_{u})||_\infty.
\end{equation}

Now we have two technical lemmas that give relevant bounds that can turn the above inequality in to a learning guarantee.

From Lemma \ref{lem:grad_small_stoc}, when $M \geq  \frac{8}{ \varepsilon^2_a} \log(\frac{4 n^2}{ \delta})$ the following statement holds with probability $1 - \frac{\delta }{2n}$,
\begin{equation}
||\nabla \lc_M(\utheta^*_{u})||_{\infty} \leq  \frac{4 \eta}{\omega_P}  +  \varepsilon_a
\end{equation}

From Lemma \ref{lem:rsc_stoc}, when $M \geq \frac{2 \gamma^4}{\varepsilon^2_b}\log(\frac{2n}{\delta})$, the  following statement holds with probability $1 - \frac{\delta}{2n}$, 
\begin{equation}
\delta\lc_M(\Delta, \utheta^*_{u})\geq   \frac{ e^{-4 \gamma}}{2}||\Delta_{\su}||_{\infty}^2  -  \frac{4 e^{-2 \gamma} \eta}{ \omega_P}   - \varepsilon_b
\end{equation}

Using these bounds in \eqref{eq:bound_1}  gives us the following bound with probability $1 - \delta/n$,

\begin{equation}
    ||\Delta_{\su}||_{\infty}^2  -  \frac{8 \eta  e^{2 \gamma }}{ \omega_P}    - 2 e^{4 \gamma }\varepsilon_b~ \leq~4 e^{4 \gamma } \gamma \left( \frac{4 \eta}{\omega_P} + \varepsilon_a \right).
\end{equation}

Now make the following choices; choose $\varepsilon_a$ and $\varepsilon_b$ such that $4 e^{4 \gamma } \gamma \varepsilon_a = 2 e^{4 \gamma }\varepsilon_b = \varepsilon^2/2$,  and choose $M = \left \lceil 2^{9} \frac{e^{ 8\gamma } \gamma^4}{ \varepsilon^4} \log(\frac{8 n^2}{\delta}) \right  \rceil \geq \max\left(\frac{2 \gamma^4}{\varepsilon^2_b}\log(\frac{2n}{\delta}), \frac{8}{ \varepsilon^2_a} \log(\frac{8 n^2}{ \delta}) \right).$  Plugging these choices in the expression above gives us the following bound with probability $1- \delta/n.$
\begin{align}
   ||\Delta_{\su}||_{\infty}^2  &\leq  \frac{8 \eta e^{2 \gamma }}{ \omega_P} + \frac{16 e^{4 \gamma } \gamma \eta}{\omega_P} + \varepsilon^2 ,\\
   &\leq \frac{16 \eta (1  + \gamma) e^{4 \gamma} }{\omega_P} + \varepsilon^2. \\
\end{align}

This implies that with the same probability,
\begin{equation}
    || \utheta_{\su}^* - \hutheta_{\su}||_{\infty} \leq ~~\sqrt{ \varepsilon^2 +  \frac{16 \eta (1  + \gamma) e^{4 \gamma} }{\omega_P}}  \leq~~ \varepsilon + 4 e^{2 \gamma}\sqrt{\frac{(1+\gamma) \eta}{\omega_P}}
\end{equation}

Now a union bound over each $u \in [n]$ gives us the desired bound in the theorem statement.

\end{proof}

\subsection{Technical lemmas for PL estimator without sparsity}

\subsubsection{Gradient concentration}
\begin{lemma}
\label{lem:grad_small}
For an $\eta$-strongly metastable $||\nabla \lc(\utheta_{u}^*)||_{\infty} \leq  \frac{4 \eta}{\omega_P} $
\end{lemma}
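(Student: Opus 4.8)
The plan is to compute the gradient of the population pseudo-likelihood loss $\lc(\utheta_u) = \Enu \lc(\utheta_u,\usigma)$ in closed form, evaluate it at the truth $\utheta_u^*$, and show that every coordinate is controlled (up to a factor of $2$) by the average total-variation gap between the single-variable conditionals of $\nu$ and $\mu$ at site $u$, which Theorem~\ref{thm:close_condtionals} bounds by $\eta/\omega_P$.

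First I would differentiate. Writing the effective field as $h_u(\usigma;\utheta_u) = \sum_{k \neq u}\theta_{u,k}\sigma_k + \theta_u$, we have $\lc(\utheta_u,\usigma) = \log(1 + e^{-2\sigma_u h_u(\usigma;\utheta_u)})$, so a routine computation gives, for a pairwise coordinate, $\partial_{\theta_{u,j}}\lc(\utheta_u,\usigma) = -2\sigma_u\sigma_j\, p(-\sigma_u\mid\usigma_{\su};\utheta_u)$, and for the field coordinate, $\partial_{\theta_u}\lc(\utheta_u,\usigma) = -2\sigma_u\, p(-\sigma_u\mid\usigma_{\su};\utheta_u)$, where $p(\cdot\mid\cdot;\utheta_u)$ is the parametric single-variable conditional. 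Since $p(\cdot\mid\usigma_{\su};\utheta_u^*) = \mu(\cdot\mid\usigma_{\su})$ by assumption, at $\utheta_u^*$ each coordinate of $\nabla\lc(\utheta_u^*)$ is, up to sign, $2\,\Enu[\sigma_u\, g(\usigma_{\su})\,\mu(-\sigma_u\mid\usigma_{\su})]$ with $g = \sigma_j$ for a pairwise coordinate and $g\equiv 1$ for the field coordinate.

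Next I would take the expectation by conditioning on $\usigma_{\su}$ and use the elementary identity
\[
\Enu\bigl[\sigma_u\, g(\usigma_{\su})\,\mu(-\sigma_u\mid\usigma_{\su})\bigr] = \sum_{\usigma_{\su}}\nu(\usigma_{\su})\, g(\usigma_{\su})\,\bigl(\nu(1\mid\usigma_{\su}) - \mu(1\mid\usigma_{\su})\bigr),
\]
which follows by expanding the inner sum over $\sigma_u\in\{1,-1\}$ and cancelling the cross term $\nu(1\mid\usigma_{\su})\mu(1\mid\usigma_{\su})$, using that $\nu$ and $\mu$ conditionals each sum to one. Taking absolute values, using $|\sigma_j| = 1$, and the binary-alphabet identity $|\nu(1\mid\usigma_{\su}) - \mu(1\mid\usigma_{\su})| = |\nu(\cdot\mid\usigma_{\su}) - \mu(\cdot\mid\usigma_{\su})|_{TV}$, every coordinate of the gradient is bounded by $2\,\Enu|\nu(\cdot\mid\usigma_{\su}) - \mu(\cdot\mid\usigma_{\su})|_{TV}$. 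Since all $n$ summands in Theorem~\ref{thm:close_condtionals} are nonnegative, the single site-$u$ term satisfies $\Enu|\nu(\cdot\mid\usigma_{\su}) - \mu(\cdot\mid\usigma_{\su})|_{TV} \le \eta/\omega_P$, so $\|\nabla\lc(\utheta_u^*)\|_\infty \le 2\eta/\omega_P \le 4\eta/\omega_P$.

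The only real obstacle is bookkeeping: getting the sign and the $\sigma_u \mapsto -\sigma_u$ substitution in the logistic derivative correct, and verifying the conditional-expectation identity carefully (in particular the cancellation of the cross term). Once that identity is in hand, the conclusion is a one-line application of Theorem~\ref{thm:close_condtionals}, with the stated constant $4$ being slack that leaves room for the cruder intermediate estimates.
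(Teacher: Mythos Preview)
Your proposal is correct and essentially matches the paper's proof: both compute the gradient as $-2\,\Enu[\sigma_u\sigma_k\,\mu(-\sigma_u\mid\usigma_{\su})]$, then reduce it to an average of $|\nu(\cdot\mid\usigma_{\su})-\mu(\cdot\mid\usigma_{\su})|$ and apply Theorem~\ref{thm:close_condtionals}. The only cosmetic difference is that the paper phrases your conditional-expectation identity as ``add and subtract the zero term $\Enu[\sigma_u\sigma_k\,\nu(-\sigma_u\mid\usigma_{\su})]$'' rather than computing the inner sum over $\sigma_u$ directly; your version actually yields the slightly sharper constant $2\eta/\omega_P$.
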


\begin{proof}
\begin{align}
\dfrac{\partial \lc(\utheta_{u}^*)}{\partial \theta_{uk}} &= \Enu ~\frac{ - 2\sigma_u \sigma_k}{ 1+ \exp(2 \sigma_u (\sum_{k' \neq u} \theta^*_{u,k'} \sigma_{k'} + \theta^*_u) ) }, \\
&=  - 2 \Enu ~\mu(-\sigma_u | \sigma_{\su}) \sigma_u \sigma_{k}, \\
&= -2\Enu ~\left(\mu(-\sigma_u | \sigma_{\su}) - \nu(-\sigma_u|\sigma_{\su})\right) \sigma_{k}\sigma_u.
\end{align}

In the last line, we have used following relation that holds for all $k \neq u$, 
$$\Enu \nu (-\sigma_u | \sigma_{\su}) \sigma_u\sigma_k = \sum_{\usigma} \nu(\usigma_{\su}) \nu(\sigma_u|\usigma_{\su})\nu(-\sigma_u|\usigma_{\su}) \sigma_u \sigma_k =  \sum_{\usigma_{\su}} \nu(\usigma_{\su}) \underbrace{\nu(\sigma_u|\usigma_{\su})\nu(-\sigma_u|\usigma_{\su})}_{\text{this is independent of $\sigma_u.$}} \sigma_k  \sum_u \sigma_u  = 0.$$ 

Now we can bound this gradient directly from Theorem \ref{thm:close_condtionals},

\begin{align}
\left|  \dfrac{\partial \lc(\utheta_{u}^*)}{\partial \theta_{u,k}}\right| &\leq 2 \Enu ~|\left(\mu(-\sigma_u | \sigma_{\su}) - \nu(-\sigma_u|\sigma_{\su})\right)|\\
&\leq \frac{4 \eta}{\omega_P}.
\end{align}

Similarly one can also show that,
\begin{align}
\left|  \dfrac{\partial \lc(\utheta_{u}^*)}{\partial \theta_{u}}\right| \leq \frac{4 \eta}{\omega_P}.
\end{align}

\end{proof}

\begin{lemma}
\label{lem:grad_small_stoc}
For a $\eta-$strongly metastable distribution and $\varepsilon_a, \delta_a > 0$. Given $M  \geq \frac{8}{ \varepsilon^2_a} \log(\frac{2 n}{ \delta_a})$ i.i.d. samples guarantees  with probability at least $1 - \delta_a$ that,
\begin{equation}
||\nabla \lc_M(\utheta_u^*)||_{\infty} \leq  \frac{4 \eta}{\omega_P}  +  \varepsilon_a
\end{equation}
\end{lemma}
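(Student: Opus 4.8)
The plan is to derive this empirical bound from its population counterpart, Lemma~\ref{lem:grad_small}, by a routine Hoeffding plus union bound argument. First I would observe that each coordinate of the empirical gradient $\nabla\lc_M(\utheta_u^*)$ is an empirical average of $M$ i.i.d.\ terms whose common mean is exactly the corresponding coordinate of the population gradient $\nabla\lc(\utheta_u^*)$. Concretely, reusing the computation from the proof of Lemma~\ref{lem:grad_small}, for the coordinate attached to $\theta_{uk}$ one has
\begin{equation}
  \frac{\partial \lc_M(\utheta_u^*)}{\partial\theta_{uk}} = \frac1M\sum_{t=1}^M X_k^{(t)},\qquad
  X_k^{(t)} := \frac{-2\,\sigma_u^{(t)}\sigma_k^{(t)}}{1+\exp\!\big(2\sigma_u^{(t)}(\sum_{k'\neq u}\theta^*_{uk'}\sigma^{(t)}_{k'}+\theta^*_u)\big)},
\end{equation}
and similarly for the field coordinate $\theta_u$. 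Since the logistic factor lies in $(0,1)$, each $X_k^{(t)}\in[-2,2]$, so it is a bounded random variable of range at most $4$, and $\Enu X_k$ equals $\partial\lc(\utheta_u^*)/\partial\theta_{uk}$.

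Next I would apply Hoeffding's inequality (Lemma~\ref{lem:hoeff}) coordinate by coordinate: for any fixed index $k$,
\begin{equation}
  \Pr\!\Big[\,\big|\tfrac1M\textstyle\sum_t X_k^{(t)}-\Enu X_k\big|>\varepsilon_a\,\Big]\;\le\;2\exp\!\Big(-\tfrac{2M\varepsilon_a^2}{16}\Big)=2\exp\!\Big(-\tfrac{M\varepsilon_a^2}{8}\Big).
\end{equation}
Taking $M\ge \frac{8}{\varepsilon_a^2}\log\!\frac{2n}{\delta_a}$ makes the right-hand side at most $\delta_a/n$. A union bound over the $n$ coordinates of the gradient vector attached to site $u$ (the $n-1$ couplings together with the field) then gives, with probability at least $1-\delta_a$, the $\ell_\infty$ deviation bound $\|\nabla\lc_M(\utheta_u^*)-\nabla\lc(\utheta_u^*)\|_\infty\le\varepsilon_a$.

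Finally I would combine this with the deterministic bound $\|\nabla\lc(\utheta_u^*)\|_\infty\le 4\eta/\omega_P$ from Lemma~\ref{lem:grad_small} via the $\ell_\infty$ triangle inequality, obtaining $\|\nabla\lc_M(\utheta_u^*)\|_\infty\le 4\eta/\omega_P+\varepsilon_a$ on the same event. I do not expect any genuine obstacle here; the only points needing care are verifying that the summands lie in $[-2,2]$ so that the Hoeffding exponent carries the factor $1/8$, and making sure the union bound ranges over all $n$ parameters incident to $u$, which is precisely what produces the $\log(2n/\delta_a)$ factor in the stated sample size.
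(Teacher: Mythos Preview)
Your proposal is correct and matches the paper's proof essentially line for line: both write each gradient coordinate as an empirical average of bounded summands in $[-2,2]$, apply Hoeffding to get the per-coordinate tail $2\exp(-M\varepsilon_a^2/8)$, union bound over the $n$ coordinates (couplings plus field), and combine with Lemma~\ref{lem:grad_small} via the triangle inequality.
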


\begin{proof}
By direct calculation as done in the proof of Lemma \ref{lem:grad_small},
\begin{align}
\dfrac{\partial \lc(\utheta_{u}^*)}{\partial \theta_{u,k}}
&=  - 2 \Enu ~\mu(-\sigma_u | \sigma_{\su}) \sigma_u \sigma_{k}.
\end{align}

Now the variable in the expectation can be easily bounded,

\begin{equation}
 -1\leq   \sigma_u \sigma_k \mu(-\sigma_u|\usigma_{\su}) \leq 1.
\end{equation}

This implies from Hoeffding's inequality that,
\begin{equation}
   Pr\left(  \left|\dfrac{\partial \lc(\utheta^*_u)}{\partial \theta_{u,k}} -  \dfrac{\partial \lc_M(\utheta^*_u)}{\partial \theta_{u,k}} \right|  \geq \varepsilon_a \right)  \leq  2 \exp \left( \frac{- M  \varepsilon^2_a}{8} \right).
\end{equation}

Repeating the same steps we can show that a similar tail bound also holds for the derivative w.r.t $\theta_u.$
 Now choosing $M \geq \frac{8}{\varepsilon^2_a} \log(\frac{2 n }{ \delta_a})$ gives us the desired results by the union bound.
\ca
\begin{equation}
   Pr\left( \left|\dfrac{\partial \lc(\utheta^*_u)}{\partial \theta_{u,k}} -  \dfrac{\partial \lc_M(\utheta^*_u)}{\partial \theta_{u,k}} \right|  \geq \varepsilon_a \right)  \leq  \frac{\delta_a}{n}.
\end{equation}
Now using the union bound here along with Lemma \ref{lem:grad_small} gives us the desired result.
\cb
\end{proof}

\subsubsection{Strong convexity-bounds on curvature}
It is useful to define the following function on reals,
\begin{equation} \label{eq:f_fn_def}
    f(x) \coloneqq \log(1 + \exp(-2x)).
\end{equation}

\begin{lemma}(Smoothnes and Strong convexity of $f$) \\
\label{lem:strong_convex_f}
    Let $f$  be as defined in \eqref{eq:f_fn_def}, and let $\delta f (x,\varepsilon) \coloneqq f(x +\varepsilon) - (f(x) + \varepsilon f'(x)). $ Then if $\max(|x|,|x + \varepsilon|) \leq \gamma,$
    \begin{equation}
    \frac{\varepsilon^2}{2}\geq  \delta f( x,\varepsilon) \geq  \frac{ \exp(-2\gamma) \varepsilon^2}{2} 
    \end{equation}
\end{lemma}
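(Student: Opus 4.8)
The statement is a standard one-dimensional calculus fact: $\delta f(x,\varepsilon)$ is the Taylor remainder of $f$ at $x$ evaluated at $x+\varepsilon$, so by Taylor's theorem with the Lagrange form of the remainder there exists a point $\xi$ between $x$ and $x+\varepsilon$ such that $\delta f(x,\varepsilon) = \tfrac{1}{2} f''(\xi)\,\varepsilon^2$. Hence the whole lemma reduces to two-sided bounds on $f''$ on the interval $[-\gamma,\gamma]$ (which contains $\xi$ by hypothesis, since both $x$ and $x+\varepsilon$ lie in $[-\gamma,\gamma]$). The plan is therefore: (i) compute $f''$ explicitly; (ii) show $0 \le f''(t) \le 1$ for all real $t$, giving the upper bound $\delta f(x,\varepsilon) \le \varepsilon^2/2$; (iii) show $f''(t) \ge e^{-2\gamma}$ for $|t|\le\gamma$, giving the lower bound.

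For step (i), writing $f(x) = \log(1+e^{-2x})$ one gets $f'(x) = -2\,\frac{e^{-2x}}{1+e^{-2x}} = -2\,\frac{1}{1+e^{2x}} = -2\,s(-2x)$ where $s(z) = 1/(1+e^{-z})$ is the logistic sigmoid, and then $f''(x) = 4\,s(2x)\bigl(1 - s(2x)\bigr) = \frac{4 e^{2x}}{(1+e^{2x})^2} = \frac{1}{\cosh^2(x)}$. The last form, $f''(x) = \operatorname{sech}^2(x)$, is the most convenient. For step (ii), $\cosh(x)\ge 1$ for all $x$, so $0 < f''(x) = 1/\cosh^2(x) \le 1$; plugging the upper bound into $\delta f(x,\varepsilon) = \tfrac12 f''(\xi)\varepsilon^2$ gives $\delta f(x,\varepsilon)\le \varepsilon^2/2$. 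For step (iii), on $|\xi|\le\gamma$ we have $\cosh(\xi)\le\cosh(\gamma)\le e^{\gamma}$ (using $\cosh(\gamma) = (e^\gamma + e^{-\gamma})/2 \le e^\gamma$), hence $f''(\xi) = 1/\cosh^2(\xi) \ge e^{-2\gamma}$, which yields $\delta f(x,\varepsilon)\ge \tfrac12 e^{-2\gamma}\varepsilon^2$.

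There is no real obstacle here; the only thing to be slightly careful about is that $\xi$ produced by the mean value / Taylor remainder theorem genuinely lies in $[-\gamma,\gamma]$, which is immediate because $\xi$ is between $x$ and $x+\varepsilon$ and both endpoints are assumed to have absolute value at most $\gamma$, and $[-\gamma,\gamma]$ is an interval. One could alternatively avoid invoking the Lagrange remainder by writing $\delta f(x,\varepsilon) = \int_0^\varepsilon (\varepsilon - t) f''(x+t)\,dt$ and bounding $f''$ pointwise on the segment, together with $\int_0^\varepsilon(\varepsilon-t)\,dt = \varepsilon^2/2$; this integral representation is arguably cleaner since it handles the sign of $\varepsilon$ uniformly. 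Either route finishes the proof in a few lines.
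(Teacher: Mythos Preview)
Your proof is correct and follows essentially the same approach as the paper: both reduce the lemma to the two-sided bound $e^{-2\gamma}\le f''(t)\le 1$ on $[-\gamma,\gamma]$ (the paper writes $f''(y)=\tfrac{2}{1+\cosh(2y)}$, which is the same as your $\operatorname{sech}^2(y)$). The only cosmetic difference is the conversion step: you invoke the Lagrange form of the Taylor remainder $\delta f(x,\varepsilon)=\tfrac12 f''(\xi)\varepsilon^2$, whereas the paper phrases it via strong convexity/smoothness, showing that $g(y)=f(y)-\tfrac{e^{-2\gamma}}{2}y^2$ and $h(y)=\tfrac{y^2}{2}-f(y)$ are convex on $[-\gamma,\gamma]$ and then applying the first-order convexity inequality; both routes are standard and equivalent.
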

\begin{proof}
This lemma can be shown using standard arguments from the strong convexity of $f$ \cite{zhou2018fenchel}.
We can see that for any $|y| \leq \gamma$ we have,
\begin{equation}
\label{eq:Cgamma_def}
    f''(y)  = \frac{2}{1 + \cosh(2y)} \geq  \frac{2}{1 + \cosh(2\gamma)}  \geq \exp(-2\gamma).
\end{equation}

From this, it is clear that $f $ is convex in the domain $[-\gamma, \gamma]$. Moreover, the above expression gives that, $g(y) \coloneqq f(y) - \frac{exp(-2 \gamma)}{2} y^2$ is also a convex function in the same domain. Now from the first-order convexity condition, $g(x+\varepsilon) \geq g(x) + \varepsilon g'(x) $, we can find the lower bound on $\delta f.$

Similarly, we can see that $f''(y) < 1$. This implies that $h(y) \coloneqq \frac{y^2}{2} -f(y)$ is also a convex function. From the relation $h(x+\varepsilon) \geq h(x) + \varepsilon h'(x)$ we get the upper bound on the $\delta f.$
\end{proof}

\begin{lemma} \label{lemma:rsc_ple}  
   $\delta\lc(\Delta, \utheta_u^*) \geq  \frac{ e^{-4\gamma }}{2}||\Delta_{\su}||_{\infty}^2  -  \frac{8 e^{-2 \gamma} \eta}{\omega_P}$
\end{lemma}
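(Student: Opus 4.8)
The plan is to reduce the bound to a one-dimensional strong-convexity estimate for $f(x)=\log(1+e^{-2x})$ together with an averaged lower bound on the conditional variance of a single spin under $\nu$. Write the per-configuration PL loss as $\lc(\utheta_u,\usigma)=f\big(z(\usigma,\utheta_u)\big)$ with $z(\usigma,\utheta_u)\coloneqq\sigma_u\big(\sum_{k\neq u}\theta_{u,k}\sigma_k+\theta_u\big)$, which is affine in $\utheta_u$. Because $z$ is affine, the curvature functional telescopes exactly: setting $\epsilon(\usigma)\coloneqq z(\usigma,\utheta_u^*+\Delta)-z(\usigma,\utheta_u^*)=\sigma_u\big(\sum_{k\neq u}\Delta_{u,k}\sigma_k+\Delta_u\big)$, one has $\langle\Delta,\nabla\lc(\utheta_u^*)\rangle=\Enu\big[f'(z(\usigma,\utheta_u^*))\,\epsilon(\usigma)\big]$, and hence $\delta\lc(\Delta,\utheta_u^*)=\Enu\big[\delta f\big(z(\usigma,\utheta_u^*),\epsilon(\usigma)\big)\big]$, where $\delta f$ is the quantity controlled by Lemma \ref{lem:strong_convex_f}. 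Since $\utheta_u^*$ and $\hutheta_u=\utheta_u^*+\Delta$ both satisfy the $\ell_1$ constraint of radius $\gamma$ and $\sigma\in\{\pm1\}^n$, we get $|z(\usigma,\utheta_u^*)|\leq\gamma$ and $|z(\usigma,\hutheta_u)|\leq\gamma$, so Lemma \ref{lem:strong_convex_f} applies pointwise and $\delta\lc(\Delta,\utheta_u^*)\geq\tfrac{e^{-2\gamma}}{2}\,\Enu[\epsilon(\usigma)^2]$.

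Next I would lower-bound $\Enu[\epsilon(\usigma)^2]$. Using $\sigma_u^2=1$, $\epsilon(\usigma)^2=Y(\usigma)^2$ with $Y(\usigma)\coloneqq\sum_{k\neq u}\Delta_{u,k}\sigma_k+\Delta_u$. Choose $v\in\arg\max_{k\neq u}|\Delta_{u,k}|$, so $|\Delta_{u,v}|=\|\Delta_{\su}\|_\infty$, and split $Y=\Delta_{u,v}\sigma_v+R$ with $R$ depending only on $\usigma_{\setminus v}$ (the field coordinate $\Delta_u$ sits harmlessly inside $R$ and never needs a lower bound). Conditioning on $\usigma_{\setminus v}$ and using $\EX_\nu[Y^2\mid\usigma_{\setminus v}]\geq\operatorname{Var}_\nu(Y\mid\usigma_{\setminus v})=\Delta_{u,v}^2\operatorname{Var}_\nu(\sigma_v\mid\usigma_{\setminus v})$ gives $\Enu[\epsilon^2]\geq\|\Delta_{\su}\|_\infty^2\,\Enu\!\big[\operatorname{Var}_\nu(\sigma_v\mid\usigma_{\setminus v})\big]$.

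The heart of the proof is to show that the $\nu$-conditional variance of $\sigma_v$ is, on average over $\nu$, not much smaller than the $\mu$-conditional variance, despite the $\nu$-conditionals not being pointwise bounded away from $0$ and $1$. For a binary spin $\operatorname{Var}(\sigma_v\mid\usigma_{\setminus v})=1-\EX[\sigma_v\mid\usigma_{\setminus v}]^2$, and $\big|\EX_\nu[\sigma_v\mid\usigma_{\setminus v}]-\EX_\mu[\sigma_v\mid\usigma_{\setminus v}]\big|=2\,|\nu(\cdot\mid\usigma_{\setminus v})-\mu(\cdot\mid\usigma_{\setminus v})|_{TV}$, so $\big|\operatorname{Var}_\nu(\sigma_v\mid\usigma_{\setminus v})-\operatorname{Var}_\mu(\sigma_v\mid\usigma_{\setminus v})\big|\leq4\,|\nu(\cdot\mid\usigma_{\setminus v})-\mu(\cdot\mid\usigma_{\setminus v})|_{TV}$. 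Condition \ref{cond:temperature} and the bounds \eqref{eq:condtional_bounds} give $\operatorname{Var}_\mu(\sigma_v\mid\usigma_{\setminus v})\geq\frac{4}{(1+e^{2\gamma})(1+e^{-2\gamma})}=\frac{2}{1+\cosh(2\gamma)}\geq e^{-2\gamma}$ pointwise. Taking $\Enu$ and invoking Theorem \ref{thm:close_condtionals} — whose left-hand side is a sum over sites of non-negative terms, so the site-$v$ term alone satisfies $\Enu[|\nu(\cdot\mid\usigma_{\setminus v})-\mu(\cdot\mid\usigma_{\setminus v})|_{TV}]\leq\eta/\omega_P$ — yields $\Enu[\operatorname{Var}_\nu(\sigma_v\mid\usigma_{\setminus v})]\geq e^{-2\gamma}-4\eta/\omega_P$. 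Chaining the three estimates gives $\delta\lc(\Delta,\utheta_u^*)\geq\tfrac{e^{-4\gamma}}{2}\|\Delta_{\su}\|_\infty^2-2e^{-2\gamma}\|\Delta_{\su}\|_\infty^2\,\eta/\omega_P$; bounding the leftover $\|\Delta_{\su}\|_\infty^2$ in the bias term by a constant using the $\ell_1$ constraint $\|\Delta\|_1\leq2\gamma$ delivers the stated form.

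The main obstacle is precisely this variance-transfer step: strong metastability controls the conditionals of $\nu$ only in an average sense against $\nu$, so the whole argument must be organized so that every $\nu$-conditional quantity is ultimately integrated against $\nu$ — which is why conditioning on $\usigma_{\setminus v}$ (rather than attempting a uniform lower bound on $\operatorname{Var}_\nu(\sigma_v\mid\usigma_{\setminus v})$, which is false) is essential, and why Theorem \ref{thm:close_condtionals} enters at exactly this point. Secondary care is needed to keep the two error vectors $\Delta$ and $\Delta_{\su}$ distinct and to verify that the radius-$\gamma$ $\ell_1$ constraint is what makes Lemma \ref{lem:strong_convex_f} applicable at both $\utheta_u^*$ and $\hutheta_u$.
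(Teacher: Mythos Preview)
Your proposal is correct and follows essentially the same route as the paper: write the curvature as $\Enu[\delta f(E_u(\usigma,\utheta_u^*),E_u(\usigma,\Delta))]$, apply the strong-convexity lower bound of Lemma~\ref{lem:strong_convex_f} under the $\ell_1$ constraint, drop to the variance of $\sum_{k\neq u}\Delta_{u,k}\sigma_k$, condition on $\usigma_{\setminus v}$ for the maximizing coordinate $v$, and transfer the conditional variance from $\nu$ to $\mu$ via the average closeness of conditionals (Theorem~\ref{thm:close_condtionals}); the paper packages this last step as Lemma~\ref{lem:variance_close}, which you have effectively rederived inline. The only residual bookkeeping---bounding the leftover $\|\Delta_{\su}\|_\infty^2$ in the bias term by a constant using $\|\Delta\|_1\le 2\gamma$---is handled the same way (and with the same looseness in the exact constant) as in the paper.
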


\begin{proof}
 Define local energy function $E_u(\usigma; \utheta) = \sum_{k \neq u} \theta_{u,k}\sigma_u\sigma_k + \theta_u \sigma_u $. Notice that this function is linear in the $\theta$ variables. Now PL loss can be expressed as a function of this local energy. From the definition of $f$ in \eqref{eq:f_fn_def}, we have the following relations,

\begin{align}
    \lc(\utheta_u + \Delta)  =  \Enu f(E_u(\usigma;\utheta_u + \Delta))  =  \Enu f(E_u(\usigma;\utheta_u) + E_u(\usigma;\Delta)). 
\end{align}
\begin{align}
    \langle \nabla \lc(\utheta_u), \Delta \rangle &= \Enu f'(E_u(\usigma;\utheta_u))(\sum_{k \neq u} \dfrac{\partial E_u(\usigma;\utheta_u)}{\partial \theta_{uk}} \Delta_{uk} + \dfrac{\partial E_u(\usigma;\utheta_u)}{\partial \theta_{u}} \Delta_u )  = \Enu f'(E_u(\usigma;\utheta_u)) E_u(\usigma;\Delta).
\end{align}

Now from the definitions in Lemma \ref{lem:strong_convex_f}, we can define the curvature of $\lc$ mimicking \eqref{eq:curv} as follows,
\begin{equation}
   \delta\lc(\Delta, \utheta_u^*) =  \Enu \delta f(E_u(\usigma, \utheta_u^*), E_u(\usigma,\Delta)).
\end{equation}

Now from Condition \ref{cond:temperature_gen} we have,  $|E_u(\usigma,\utheta_u^*)| \leq ||\utheta^*_u||_1 \leq \gamma.$ The same condition will also hold for $|E_u(\usigma,\hutheta_u)|_1$ as these constraints are imposed in the optimization.

This implies from Lemma \ref{lem:strong_convex_f} that,

\begin{align}\label{eq:strong_convex_1}
   \delta\lc(\Delta, \utheta_{u}^*) &\geq \frac{\exp(-2 \gamma)}{2}\Enu  (E_u(\usigma,\Delta))^2 =  \frac{\exp(-2\gamma)}{2} \Enu \left(\sum_{k \neq  u} \sigma_k \Delta_{u,k} + \Delta_u \right)  \geq \frac{\exp(-2\gamma)}{2} ~ \Var{\usigma \sim \nu}~\left[\sum_{k \neq u} \Delta_{uk} \sigma_k\right].
\end{align}
 Now for some $i \neq u$, we can use the law of conditional variances to bound this quantity. The choice of $i$ will be made later to get optimal bounds.
 
\begin{equation}
     \Var{\usigma \sim \nu} ~\left[\sum_{k \neq u} \Delta_{uk} \sigma_k\right] \geq \EXp{ \usigma_{\setminus i} \sim \nu_{\setminus i}}~~ \Var{\sigma_i \sim \nu(.|\usigma_{\setminus i})} \left( \sum_{k \neq u} \Delta_{uk} \sigma_k {\bigg |} \usigma_{\setminus i} \right) = \Delta^2_{ui} \EXp{ \usigma_{\setminus i} \sim \nu_{\setminus i}}~~ \Var{\sigma_i \sim \nu(.|\usigma_{\setminus i})} \left(\sigma_k {\bigg |} \usigma_{\setminus i} \right) \label{eq:temp2}
\end{equation}

Now as a consequence of the closeness of conditionals proved in Theorem \ref{thm:close_condtionals}, we can show that the conditional variance of the metastable state is close to that of the true distribution $\mu$. This is shown in Lemma \ref{lem:variance_close}. Moreover the conditional variance w.r.t $\mu$ is naturally lower bounded by the finite temperature bound,
\begin{align}
\Var{\sigma_u \sim \mu(.|\usigma_{\setminus u})}~[ \sigma_u| \usigma_{\setminus u}] = 1 - \tanh^2( \sum_{j \neq u } \theta^*_{uj} \sigma_j + \theta^*_u) \geq 1 - \tanh^2(\gamma) \geq e^{-2 \gamma}.
\end{align}

Now define a function that captures the error incurred in replacing the metastable distribution with the equilibrium distribution in the conditional variance,\newline $G(\usigma_{\setminus i}) \coloneqq   \Var{\sigma_i \sim \nu(.|\usigma_{\setminus i})} \left[\sigma_i{|} \usigma_{\setminus i} \right] -   \Var{\sigma_i \sim \mu(.|\usigma_{\setminus i})} \left[\sigma_i{|} \usigma_{\setminus i} \right]$

Using this in \eqref{eq:temp2} we find,
\begin{align}
    \Var{\usigma \sim \nu} ~\left[\sum_{k \neq u} \Delta_{uk} \sigma_k\right] \geq \Delta^2_{ui} \left(e^{-2 \gamma} - \EXp{\usigma \sim \nu}  |G(\usigma_{\setminus i})| \right)  \\
\end{align}

Now by using the result in Lemma \ref{lem:variance_close} we can upper bound $\EXp{\usigma \sim \nu}  |G(\usigma_{\setminus i})|$. This gives us

\begin{align}
    \Var{\usigma \sim \nu} ~\left[\sum_{k \neq u} \Delta_{uk} \sigma_k\right] \geq \Delta^2_{ui} \left(e^{-2 \gamma} -  \frac{4 \eta}{\omega_P} \right) =  ||\Delta_{\su}||^2_{\infty} \left(e^{-2 \gamma} -  \frac{4 \eta}{\omega_P} \right)  \\
\end{align}

We have chosen the $i$ here such that $\Delta^2_{ui} = ||\Delta_{\su} ||_{\infty}^2$. Plugging this back in \eqref{eq:strong_convex_1} gives us the desired result.

\end{proof}

\begin{lemma}
For a $\eta-$strongly metastable distribution and $\varepsilon_b, \delta_a \geq 0$. Given $M \geq \frac{2 \gamma^4}{\varepsilon^2_b}\log(\frac{1}{\delta_b})$  guarantees  with probability at least $1 - \delta_b$ that,
$ \delta\lc_M(\Delta, \utheta_u^*) \geq  \frac{ e^{-4\gamma }}{2}||\Delta||_{\infty}^2  -  \frac{4 e^{-2 \gamma} \eta}{\omega_P} - \varepsilon_b$
\label{lem:rsc_stoc}
\end{lemma}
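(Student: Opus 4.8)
The plan is to obtain Lemma~\ref{lem:rsc_stoc} as the sampled counterpart of Lemma~\ref{lemma:rsc_ple}: the lower bound on the population curvature $\delta\lc(\Delta,\utheta_u^*)$ proved there should transfer to the empirical curvature $\delta\lc_M(\Delta,\utheta_u^*)$ once we control a single $\Delta$-\emph{independent} random object, namely the empirical two-point correlation matrix of the samples. The reason this is enough, even though the bound is ultimately invoked at the data-dependent point $\Delta=\hutheta_u-\utheta_u^*$, is that the first step will be a deterministic, \emph{pointwise} inequality, so no covering argument over $\Delta$ is needed.

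First I would write, exactly as in the proof of Lemma~\ref{lemma:rsc_ple}, $\delta\lc_M(\Delta,\utheta_u^*)=\frac1M\sum_{t=1}^M\delta f\big(E_u(\usigma^{(t)};\utheta_u^*),\,E_u(\usigma^{(t)};\Delta)\big)$ with $E_u$ the local energy. For every feasible iterate both $|E_u(\usigma^{(t)};\utheta_u^*)|\le\|\utheta_u^*\|_1\le\gamma$ and $|E_u(\usigma^{(t)};\hutheta_u)|\le\|\hutheta_u\|_1\le\gamma$, by Condition~\ref{cond:temperature_gen} and the $\ell_1$ constraint in \eqref{eq:ising_PL}, so the strong-convexity half of Lemma~\ref{lem:strong_convex_f} applies term by term and yields the deterministic bound
\begin{equation}
\delta\lc_M(\Delta,\utheta_u^*)\ \ge\ \frac{e^{-2\gamma}}{2}\,\frac1M\sum_{t=1}^M\big(E_u(\usigma^{(t)};\Delta)\big)^2 .
\end{equation}
Since $\sigma_u^2=1$, the right-hand side is a bilinear form in the empirical correlations $\widehat m_{kl}=\frac1M\sum_t\sigma_k^{(t)}\sigma_l^{(t)}$ (augmented by the constant feature carrying $\Delta_u$) and involves no information about $\nu$. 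To replace these by the population correlations $m_{kl}=\Enu\sigma_k\sigma_l$, note that $\|\Delta\|_1\le\|\hutheta_u\|_1+\|\utheta_u^*\|_1\le2\gamma$, so expanding the square gives $\big|\frac1M\sum_t(E_u(\usigma^{(t)};\Delta))^2-\Enu(E_u(\usigma;\Delta))^2\big|\le\|\Delta\|_1^2\max_{k,l}|\widehat m_{kl}-m_{kl}|\le 4\gamma^2\max_{k,l}|\widehat m_{kl}-m_{kl}|$, uniformly in feasible $\Delta$. Each $\widehat m_{kl}-m_{kl}$ is a centred average of i.i.d.\ $\{-1,1\}$-valued quantities, so Hoeffding's inequality plus a union bound over the $O(n^2)$ index pairs gives, for $M$ of order $\gamma^4\varepsilon_b^{-2}\log(n^2/\delta_b)$ (the $\log(1/\delta_b)$ in the statement absorbs the union-bound count once $\delta_b$ is chosen small, which is exactly how the lemma is used in the proof of Theorem~\ref{thm:learning_l1}), that $\tfrac{e^{-2\gamma}}{2}\cdot4\gamma^2\max_{k,l}|\widehat m_{kl}-m_{kl}|\le\varepsilon_b$ with probability $\ge 1-\delta_b$; hence $\delta\lc_M(\Delta,\utheta_u^*)\ge\frac{e^{-2\gamma}}{2}\Enu(E_u(\usigma;\Delta))^2-\varepsilon_b$ simultaneously for all feasible $\Delta$, in particular for the realized error.

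It then remains to lower bound the population term, which is done verbatim as in Lemma~\ref{lemma:rsc_ple}: $\Enu(E_u(\usigma;\Delta))^2\ge\Var{\usigma\sim\nu}\big[\sum_{k\neq u}\Delta_{uk}\sigma_k\big]$; the law of total variance with respect to a coordinate $i\neq u$ bounds this below by $\Delta_{ui}^2\,\EXp{\usigma_{\setminus i}\sim\nu_{\setminus i}}\Var{\sigma_i\sim\nu(.|\usigma_{\setminus i})}[\sigma_i|\usigma_{\setminus i}]$; and Lemma~\ref{lem:variance_close} together with the interaction-strength bound $\Var{\sigma_i\sim\mu(.|\usigma_{\setminus i})}[\sigma_i|\usigma_{\setminus i}]\ge e^{-2\gamma}$ gives $\ge\|\Delta_{\su}\|_\infty^2\big(e^{-2\gamma}-\tfrac{4\eta}{\omega_P}\big)$ after choosing $i=\arg\max_{k\neq u}|\Delta_{uk}|$. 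Substituting into the previous display and using $\|\Delta_{\su}\|_\infty\le2\gamma$ to absorb the $\eta$-contribution into an $O(e^{-2\gamma}\eta/\omega_P)$ term yields $\delta\lc_M(\Delta,\utheta_u^*)\ge\frac{e^{-4\gamma}}{2}\|\Delta_{\su}\|_\infty^2-\frac{4e^{-2\gamma}\eta}{\omega_P}-\varepsilon_b$, which is the claimed bound.

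The step I expect to be the main obstacle is justifying the transfer of the population curvature bound to the \emph{data-dependent} point $\Delta=\hutheta_u-\utheta_u^*$ without incurring a covering-number price in the dimension; this is precisely what forces the pointwise strong-convexity reduction in the first step, after which all the randomness is pushed into the fixed correlation matrix $(\widehat m_{kl})$, whose uniform control is routine. The only residual cost is the logarithmic $n$-dependence picked up in that union bound, which is what the shrinking of $\delta_b$ to $\delta/(2n)$ in the application within Theorem~\ref{thm:learning_l1} is there to absorb; the constants ($2\gamma^4$ versus $\gamma^4 e^{-4\gamma}$, and the $4$ versus $8$ in the $\eta$-term) are interchangeable and of no consequence for the downstream argument.
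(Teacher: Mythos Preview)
Your argument is correct, but it follows a genuinely different route from the paper's. The paper does not touch the empirical second-moment matrix at all: it observes that each summand $\delta f(E_u(\usigma^{(t)},\utheta_u^*),E_u(\usigma^{(t)},\Delta))$ lies in $[0,2\gamma^2]$ by Lemma~\ref{lem:strong_convex_f} and the constraint $\|\Delta\|_1\le 2\gamma$, applies Hoeffding's inequality directly to this bounded average to get $\delta\lc_M(\Delta,\utheta_u^*)\ge\delta\lc(\Delta,\utheta_u^*)-\varepsilon_b$ with probability $1-\delta_b$ once $M\ge\frac{2\gamma^4}{\varepsilon_b^2}\log\frac1{\delta_b}$, and then invokes Lemma~\ref{lemma:rsc_ple} for the population lower bound. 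This is shorter and yields exactly the stated sample size with no $n$-dependence.

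The trade-off is the one you yourself flag: the paper's Hoeffding step treats $\Delta$ as fixed, whereas in the application $\Delta=\hutheta_u-\utheta_u^*$ is data-dependent, and the paper simply does not address this. Your reduction---pointwise strong convexity first, so that all randomness lands in the $O(n^2)$ empirical correlations $\widehat m_{kl}$, which are then controlled uniformly---closes that gap cleanly at the cost of an extra $\log n$ in $M$. Since the lemma is invoked in Theorem~\ref{thm:learning_l1} with $\delta_b=\delta/(2n)$ anyway, this cost is innocuous downstream (it shifts the $\log(8n/\delta)$ in $M$ to something like $\log(8n^3/\delta)$), though it does not reproduce the lemma's sample bound verbatim. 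In short: the paper's proof is shorter and matches the stated constants but is formally incomplete on the uniformity point; your proof is longer and loses a logarithm but is rigorous at the data-dependent $\Delta$.
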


\begin{proof}
    
\begin{equation}
   \delta\lc_M(\Delta, \utheta^*) = \frac{1}{M}  \sum_{t=1}^M  \delta f(E_u(\usigma^{(t)}, \utheta^*), E_u(\usigma^{(t)},\Delta)).
\end{equation}

We will bound this using Hoeffding's inequality(Lemma  \ref{lem:hoeff}). To this end, we need upper and lower bounds on $\delta f$. We can get this easily from Lemma  \ref{lem:strong_convex_f},
\begin{equation}
0 <      \delta f(E_u(\usigma, \utheta^*), E_u(\usigma,\Delta)) \leq  \frac{|| \Delta_u ||^2_1}{2} \leq 2 \gamma^2.
\end{equation}

Now using Hoeffding inequality,

\begin{equation}
    Pr\left( \delta \lc_M(\Delta, \utheta^*) \leq \delta \lc(\Delta, \utheta^*) - \varepsilon_b\right) \leq \exp \left( \frac{M\varepsilon_b^2}{2 \gamma^4}\right)
\end{equation}
So choosing $M = \frac{2 \gamma^4}{ \varepsilon^2_b}\log(\frac{1}{\delta_b})$ ensures that $\delta \lc_M(\Delta, \utheta^*) > \delta \lc(\Delta, \utheta^*) - \varepsilon_b$ with probability $1 -\delta_b$. Now using the lower bound in Lemma \ref{lemma:rsc_ple} gives us the required lower bound in the lemma.
\end{proof}

\begin{lemma}(Closeness of conditional variance)\\
\label{lem:variance_close}
Let $\mu$ and $\nu$ close in conditionals as defined in Theorem \ref{thm:close_condtionals}. Let $F:\mathbb{R}^{n-1} \rightarrow \mathbb{R}$ be an arbitrary function.  Then the conditional variances of this random variable under these distributions are also close in the following sense,
\begin{align}
\EXp{\nu}\left|F(\usigma_{\setminus i})\left( \Var{\mu}[\sigma_i| \usigma_{\setminus i}] -   \Var{\nu}[\sigma_i| \usigma_{\setminus i}] \right) \right|  \leq    \frac{4\eta}{\omega_P} \left|\max_{\underline{x} \in \{-1,1\}^{n-1}} F(\underline{x})\right|.
\end{align}
\end{lemma}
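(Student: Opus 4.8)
The plan is to reduce the whole statement to the single-variable conditional TV bound already supplied by Theorem~\ref{thm:close_condtionals}, using the fact that for a $\pm1$-valued variable the conditional variance is an explicit smooth function of the conditional probability. First I would write $a \equiv \mu(\sigma_i = 1 \mid \usigma_{\setminus i})$ and $b \equiv \nu(\sigma_i = 1 \mid \usigma_{\setminus i})$; since $\sigma_i \in \{-1,1\}$ one has $\mathbb{E}[\sigma_i \mid \usigma_{\setminus i}] = 2a-1$ under $\mu$ and $2b-1$ under $\nu$, hence
\begin{equation}
\Var{\mu}[\sigma_i| \usigma_{\setminus i}] - \Var{\nu}[\sigma_i| \usigma_{\setminus i}] = (2b-1)^2 - (2a-1)^2 = 4(b-a)(a+b-1).
\end{equation}
Because $a,b \in [0,1]$ we have $|a+b-1|\le 1$, so $\big|\Var{\mu}[\sigma_i| \usigma_{\setminus i}] - \Var{\nu}[\sigma_i| \usigma_{\setminus i}]\big| \le 4|b-a|$, and on the binary alphabet $|b-a| = |\nu(\sigma_i=1|\usigma_{\setminus i}) - \mu(\sigma_i=1|\usigma_{\setminus i})| = |\nu(.|\usigma_{\setminus i}) - \mu(.|\usigma_{\setminus i})|_{TV}$.

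Next I would bound $|F(\usigma_{\setminus i})|$ pointwise by $\max_{\underline{x}\in\{-1,1\}^{n-1}} |F(\underline{x})|$, pull this constant out of the expectation over $\usigma \sim \nu$, and combine with the previous display to get
\begin{equation}
\EXp{\nu}\left|F(\usigma_{\setminus i})\left( \Var{\mu}[\sigma_i| \usigma_{\setminus i}] -   \Var{\nu}[\sigma_i| \usigma_{\setminus i}] \right) \right| \le 4 \Big(\max_{\underline{x}} |F(\underline{x})|\Big)\,\EXp{\nu}\left|\nu(.|\usigma_{\setminus i}) - \mu(.|\usigma_{\setminus i})\right|_{TV}.
\end{equation}
Finally, Theorem~\ref{thm:close_condtionals} gives $\sum_{u=1}^n \EXp{\nu}|\nu(.|\usigma_{\setminus u}) - \mu(.|\usigma_{\setminus u})|_{TV} \le \eta/\omega_P$; every summand is nonnegative, so in particular the $u=i$ term is at most $\eta/\omega_P$. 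Substituting yields the claimed bound $\tfrac{4\eta}{\omega_P}\max_{\underline{x}}|F(\underline{x})|$, which coincides with the stated $\tfrac{4\eta}{\omega_P}|\max_{\underline{x}}F(\underline{x})|$ in the nonnegative case relevant to the application (e.g.\ $F\equiv 1$, as used in Lemma~\ref{lemma:rsc_ple}).

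There is no serious obstacle here: the only two points requiring a little care are (i) the elementary identity converting a difference of conditional variances into four times the conditional TV distance — which is clean precisely because the variable is $\pm1$-valued — and (ii) the remark that a single term of the sum in Theorem~\ref{thm:close_condtionals} is controlled by the full sum, which is valid since all terms are nonnegative. For a non-binary alphabet the variance-to-TV step would need additional work, but in the Ising setting it is immediate.
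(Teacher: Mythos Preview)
Your proof is correct and follows essentially the same approach as the paper: both reduce the conditional variance difference to $4\,|\nu(.|\usigma_{\setminus i}) - \mu(.|\usigma_{\setminus i})|_{TV}$ (the paper via the decomposition $\Var{} = \EXp{}[\sigma_i^2] - (\EXp{}[\sigma_i])^2$ and the fact $\sigma_i^2=1$, you via the explicit Bernoulli formula), then pull out $\max|F|$ and invoke Theorem~\ref{thm:close_condtionals}. Your remark about $\max|F|$ versus $|\max F|$ is well taken; the paper's statement and proof are indeed only clean when $F\ge 0$, which is all that is needed downstream.
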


\begin{proof}

First notice the elementary fact that the TV upper bounds the difference in any expectation value, $| \EXp{\usigma \sim P} x(\usigma) - \EXp{\usigma \sim Q}x(\usigma) | \leq 2|P-Q|_{TV} || x ||_\infty.$

Now for a fixed partial configuration of the spin $\usigma_{\setminus i}.$
\begin{align}
\Var{\mu}[\sigma_i| \usigma_{\setminus i}] -   \Var{\nu}[\sigma_i| \usigma_{\setminus i}]  &=   \EXp{\mu}[\sigma^2_i| \usigma_{\setminus i}] - \EXp{\nu}[\sigma^2_i| \usigma_{\setminus i}]  \\& - \left( \EXp{\mu}[\sigma_i| \usigma_{\setminus i}] - \EXp{\nu}[\sigma_i| \usigma_{\setminus i}] \right) \left( \EXp{\mu}[\sigma_i| \usigma_{\setminus i}] + \EXp{\nu}[\sigma_i| \usigma_{\setminus i}] \right) \nonumber
\end{align}

The first term vanishes as $\sigma^2_i = 1$.  Let $|\mu(.| \usigma_{\setminus i}) - \nu(.|\usigma_{V\su}) |_{TV} \coloneqq \delta_i(\usigma_{\setminus i}).$ Now using the TV upper bound,
\begin{align}
\left|\Var{\mu}[\sigma_i| \usigma_{\setminus i}] -   \Var{\nu}[\sigma_i| \usigma_{\setminus i}] \right|  \leq   4 \delta_i(\usigma_{\setminus i}) .
\end{align}

This implies that from Theorem \ref{thm:close_condtionals}
\begin{align}
\EXp{\nu} \left|F(\usigma_{\setminus i})\left(\Var{\mu}[\sigma_i| \usigma_{\setminus i}] -   \Var{\nu}[\sigma_i| \usigma_{\setminus i}]\right) \right|  \leq 4 \EXp{\nu} \left|F(\usigma_{\setminus i})\right| \delta_i(\usigma_{\setminus i})  \leq \frac{4\eta}{\omega_P} \left|\max_{\underline{x} \in \{-1,1\}^{n-1}} F(\underline{x})\right|.
\end{align}
\end{proof}

\begin{lemma}[Hoeffding's Inequality, \cite{vershynin2018high}]\label{lem:hoeff}
Let $ X^{(1)}, X^{(2)}, \ldots, X^{(M)}$ be independent and identically distributed random variables such that $a \leq X^{(i)} \leq b $  and $\varphi = \mathbb{E}[X^{(i)}]$ for all $ i = 1, 2, \ldots, M $. Define the empirical mean \( S_M = \frac{1}{M}\sum_{i=1}^M X^{(i)} \). Then, for any \( t > 0 \),
\[
\mathbb{P}(S_M - \varphi \geq t) \leq \exp\left(-\frac{2Mt^2}{(b-a)^2}\right),
\]
and
\[
\mathbb{P}(S_M - \varphi \leq -t) \leq \exp\left(-\frac{2Mt^2}{(b-a)^2}\right),
\]
and 
\[
\mathbb{P}(|S_M - \varphi| \leq t) \leq 2 \exp\left(-\frac{2Mt^2}{(b-a)^2}\right).
\]
\end{lemma}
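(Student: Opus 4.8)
\section*{Proof proposal}

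The plan is to establish the one-sided upper tail by the exponential-moment (Chernoff) method, and then obtain the remaining two bounds by a symmetry argument and a union bound. First I would center the variables: set $Y^{(i)} \coloneqq X^{(i)} - \varphi$, so that $\mathbb{E}[Y^{(i)}] = 0$ and $Y^{(i)} \in [a-\varphi,\, b-\varphi]$, an interval of width $b-a$. For any $\lambda > 0$, Markov's inequality applied to the increasing map $x \mapsto e^{\lambda x}$ gives
\[
\mathbb{P}(S_M - \varphi \geq t) = \mathbb{P}\!\left(e^{\lambda \sum_i Y^{(i)}} \geq e^{\lambda M t}\right) \leq e^{-\lambda M t}\, \mathbb{E}\!\left[e^{\lambda \sum_i Y^{(i)}}\right].
\]
Since the $Y^{(i)}$ are independent and identically distributed, the expectation factorizes as $\big(\mathbb{E}[e^{\lambda Y}]\big)^M$, reducing the whole problem to controlling the moment generating function of a single centered, bounded variable.

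The crux is Hoeffding's lemma: for a mean-zero variable $Y$ supported on $[c,d]$ one has $\mathbb{E}[e^{\lambda Y}] \leq \exp\!\big(\lambda^2 (d-c)^2/8\big)$. I would prove this by convexity: for $y \in [c,d]$, writing $y$ as a convex combination of the endpoints yields $e^{\lambda y} \leq \frac{d-y}{d-c} e^{\lambda c} + \frac{y-c}{d-c} e^{\lambda d}$. Taking expectations and using $\mathbb{E}[Y]=0$ produces a bound of the form $e^{L(h)}$ with $h \coloneqq \lambda(d-c)$ and $L(h) = -ph + \log(1-p+pe^{h})$, where $p \coloneqq -c/(d-c) \in [0,1]$. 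A direct computation gives $L(0) = L'(0) = 0$ and, writing $u \coloneqq pe^{h}/(1-p+pe^{h}) \in [0,1]$, that $L''(h) = u(1-u) \leq 1/4$ by the arithmetic-geometric inequality. Taylor's theorem with Lagrange remainder then gives $L(h) \leq h^2/8$, which is precisely the claimed bound since $h^2 = \lambda^2 (d-c)^2$.

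Combining the two steps with $d-c = b-a$ gives $\mathbb{P}(S_M - \varphi \geq t) \leq \exp\!\big(-\lambda M t + M\lambda^2 (b-a)^2/8\big)$, and minimizing the exponent over $\lambda$ at $\lambda = 4t/(b-a)^2$ produces the upper-tail bound $\exp\!\big(-2Mt^2/(b-a)^2\big)$. The lower-tail bound follows by applying the upper-tail result to the variables $-X^{(i)}$, which have mean $-\varphi$ and lie in $[-b,-a]$, again an interval of width $b-a$; and the two-sided deviation bound follows immediately from the union bound $\mathbb{P}(|S_M - \varphi| \geq t) \leq \mathbb{P}(S_M - \varphi \geq t) + \mathbb{P}(S_M - \varphi \leq -t)$. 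The main obstacle is the verification of the uniform estimate $L''(h) \leq 1/4$ underpinning Hoeffding's lemma; everything else is a routine application of convexity, the factorization afforded by independence, and optimization of the Chernoff exponent.
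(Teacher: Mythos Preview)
Your proof is correct and follows the standard Chernoff--Hoeffding argument; the paper itself does not prove this lemma but simply cites it from Vershynin's textbook, and what you have written is essentially the textbook proof. One small remark: the third displayed inequality in the statement has a typo (it should read $\mathbb{P}(|S_M-\varphi|\geq t)$ rather than $\leq t$), and you correctly proved the intended version.
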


\ca
\begin{lemma}(Closeness of conditional variance)\\
\label{lem:general_variance_close}
Let $\mu$ and $\nu$ close in conditionals as defined in Theorem ..., and let $x(\sigma)$  be a random variable defined over spin variables $\usigma$. Then the conditional variances of this random variable under these distributions are also close in the following sense,
\begin{align}
\EXp{\nu}\left|\Var{\mu}[x(\usigma)| \usigma_{\su}] -   \Var{\nu}[x(\usigma)| \usigma_{\su}] \right|  \leq   3 \eta || x ||^2_{\infty} .
\end{align}
where $|| x||_{\infty} = \max_{\usigma} |x(\usigma)|$
\end{lemma}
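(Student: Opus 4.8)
The plan is to follow the proof of Lemma~\ref{lem:variance_close} almost verbatim, the only new feature being that $x(\usigma)$ is an arbitrary bounded function rather than the single spin $\sigma_i$. Fix the conditioning configuration $\usigma_{\su}$ and write each conditional variance as a second moment minus a squared first moment, $\Var{\mu}[x(\usigma)|\usigma_{\su}] = \EXp{\mu}[x^2|\usigma_{\su}] - (\EXp{\mu}[x|\usigma_{\su}])^2$, and likewise for $\nu$; then subtract. In Lemma~\ref{lem:variance_close} the second-moment terms cancelled because $\sigma_i^2 = 1$; here they do not, so I would also keep track of $\EXp{\mu}[x^2|\usigma_{\su}] - \EXp{\nu}[x^2|\usigma_{\su}]$.

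The heart of the argument is a pointwise estimate in $\usigma_{\su}$. I would use the elementary bound that $|\EXp{P}g - \EXp{Q}g|$ is at most the oscillation $\max g - \min g$ of $g$ times $|P - Q|_{TV}$, applied with $P = \mu(\cdot|\usigma_{\su})$, $Q = \nu(\cdot|\usigma_{\su})$ (both distributions over the single variable $\sigma_u$), and set $\delta_u(\usigma_{\su}) \coloneqq |\mu(\cdot|\usigma_{\su}) - \nu(\cdot|\usigma_{\su})|_{TV}$. The second-moment difference is then at most $||x||_\infty^2\,\delta_u(\usigma_{\su})$ (the range of $x^2$ is at most $||x||_\infty^2$), and the first-moment piece is handled by factoring $(\EXp{\mu}[x|\usigma_{\su}])^2 - (\EXp{\nu}[x|\usigma_{\su}])^2 = (\EXp{\mu}[x|\usigma_{\su}] - \EXp{\nu}[x|\usigma_{\su}])(\EXp{\mu}[x|\usigma_{\su}] + \EXp{\nu}[x|\usigma_{\su}])$ and bounding the first factor by $2||x||_\infty\,\delta_u(\usigma_{\su})$ and the second by $2||x||_\infty$. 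Adding the two contributions gives $|\Var{\mu}[x|\usigma_{\su}] - \Var{\nu}[x|\usigma_{\su}]| \le c\,||x||_\infty^2\,\delta_u(\usigma_{\su})$ for an absolute constant $c$; the key point is that the dependence on $x$ has collapsed to the scalar $||x||_\infty^2$, leaving only the random quantity $\delta_u(\usigma_{\su})$.

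The last step is to take the expectation over $\usigma \sim \nu$, pull out $c\,||x||_\infty^2$, and bound $\EXp{\nu}\delta_u(\usigma_{\su})$. Since every summand in Theorem~\ref{thm:close_condtionals} is nonnegative, a single term obeys $\EXp{\nu}\delta_u(\usigma_{\su}) \le \sum_{v=1}^{n} \EXp{\nu}\delta_v(\usigma_{\setminus v}) \le \eta / \omega_P$, which gives the lemma. Specializing to $x(\usigma) = F(\usigma_{\setminus u})\sigma_u$ recovers Lemma~\ref{lem:variance_close} exactly: the conditional second-moment term then vanishes and the constant sharpens.

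I do not expect a genuine obstacle. The only subtlety is the role of $\sigma_u$ inside $x$: because we condition on $\usigma_{\su}$ and $\sigma_u$ ranges over a small alphabet, every expression that appears is a bounded function of $\sigma_u$ alone, so the total-variation-to-expectation inequality applies directly; but one must resist treating $x$ as though it depended only on the conditioning variables — that is exactly why the conditional second-moment term does not drop out here (unlike in Lemma~\ref{lem:variance_close}) and why $||x||_\infty$, the maximum over full spin configurations, is the natural size parameter. Once the reduction to $\delta_u(\usigma_{\su})$ is in place, Theorem~\ref{thm:close_condtionals} does all the remaining work.
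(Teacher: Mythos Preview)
Your proposal is correct and follows essentially the same route as the paper: decompose the conditional variance as second moment minus squared first moment, bound each piece pointwise in $\usigma_{\su}$ by a multiple of $\|x\|_\infty^2\,\delta_u(\usigma_{\su})$ via the TV--expectation inequality, and then average using Theorem~\ref{thm:close_condtionals}. The only cosmetic differences are that you use the sharper oscillation form of the TV bound (the paper's stated inequality $|\EXp{P}x-\EXp{Q}x|\le |P-Q|_{TV}\|x\|_\infty$ is off by a factor of~$2$, which is why it lands on the constant~$3$), and your final step correctly produces $\eta/\omega_P$ from Theorem~\ref{thm:close_condtionals} rather than the bare $\eta$ in the lemma statement.
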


Let $|\mu(.| \usigma_{\su}) - \nu(.|\usigma_{V\su}) |_{TV} \coloneqq \delta_u(\usigma_{\su}).$

Notice that the TV upper bounds the difference in any expectation value, $| \EXp{\usigma \sim P} x(\usigma) - \EXp{\usigma \sim Q}x(\usigma) | \leq |P-Q|_{TV} || x ||_\infty.$

Now for a fixed partial configuration of the spin $\usigma_{\su}.$
\begin{align}
\Var{\mu}[x(\usigma)| \usigma_{\su}] -   \Var{\nu}[x(\usigma)| \usigma_{\su}]  &=   \EXp{\mu}[x^2(\usigma)| \usigma_{\su}] - \EXp{\nu}[x^2(\usigma)| \usigma_{\su}]  \\& - \left( \EXp{\mu}[x(\usigma)| \usigma_{\su}] - \EXp{\nu}[x(\usigma)| \usigma_{\su}] \right) \left( \EXp{\mu}[x(\usigma)| \usigma_{\su}] + \EXp{\nu}[x(\usigma)| \usigma_{\su}] \right) \nonumber
\end{align}

Now using the TV upper bound,
\begin{align}
\left|\Var{\mu}[x(\usigma)| \usigma_{\su}] -   \Var{\nu}[x(\usigma)| \usigma_{\su}] \right|  \leq   3 \delta_u(\usigma_{\su}) || x ||^2_{\infty} .
\end{align}

This implies that,
\begin{align}
\EXp{\nu}\left|\Var{\mu}[x(\usigma)| \usigma_{\su}] -   \Var{\nu}[x(\usigma)| \usigma_{\su}] \right|  \leq   3 \eta || x ||^2_{\infty} .
\end{align}

\begin{lemma}(Variance of binary random variable)\\
\label{lem:variance_bina ry}
\ajcomm{Is this used anywhere?}
Let $x$ be a binary random variable that takes values according to a probability $P$, then the variance of $x$ can be lower bounded as,
\begin{equation}
\Var{P}[x] \geq \min(P(0),P(1))^2~(x_0 - x_1)^2
\end{equation}
\end{lemma}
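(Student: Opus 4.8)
The plan is to compute the variance exactly and then invoke an elementary inequality. Let $x$ take the value $x_0$ with probability $P(0)$ and the value $x_1$ with probability $P(1)$, so that $P(0) + P(1) = 1$. First I would write the variance from its definition, $\Var{P}[x] = P(0)x_0^2 + P(1)x_1^2 - (P(0)x_0 + P(1)x_1)^2$, expand the square, and use $1 - P(0) = P(1)$ together with $1 - P(1) = P(0)$ to collapse the terms into the standard closed form $\Var{P}[x] = P(0)P(1)(x_0 - x_1)^2$.

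Next I would lower bound the prefactor. Writing $P(0)P(1) = \min(P(0),P(1)) \cdot \max(P(0),P(1))$ and using $\max(P(0),P(1)) \ge \min(P(0),P(1))$, one obtains $P(0)P(1) \ge \min(P(0),P(1))^2$. Substituting this into the closed form immediately yields the claim $\Var{P}[x] \ge \min(P(0),P(1))^2 \, (x_0 - x_1)^2$.

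There is no genuine obstacle here: the statement reduces to an elementary identity about two-point distributions followed by the trivial bound $ab \ge \min(a,b)^2$. The only place to be even mildly careful is the bookkeeping in expanding $(P(0)x_0 + P(1)x_1)^2$ and verifying that the normalization $P(0)+P(1)=1$ is exactly what produces the common factor $P(0)P(1)$ in front of $(x_0-x_1)^2$; after that the inequality follows in one line.
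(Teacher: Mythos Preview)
Your proposal is correct and follows exactly the paper's approach: compute the variance directly to obtain the closed form $P(0)P(1)(x_0-x_1)^2$, then lower bound $P(0)P(1)\ge \min(P(0),P(1))^2$.
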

\begin{proof}
\begin{align}
    \Var{P}[x] &= P(0) x^2_0 + P(1) x^2_1 - ( P(0)x_0 + P(1)x_1 )^2,\\
   &= P(0)P(1) (  x^2_0 + x^2_1) - 2P(0)P(1)x_0 x_1, \\
   &= P(0)P(1) (x_0 - x_1)^2, \\
   &\geq \min(P(0), P(1))^2  (x_0 - x_1)^2.
\end{align}
    
\end{proof}
\cb
\subsection{Structure learning }

First we identify the edges in the true $d$-sparse graph  by the following criterion,
\begin{equation}
    \hat{E} = \{(u,v) \in [n] \times [n]~~ |~~ \max( |\hat{\theta}_{uv}|, |\hat{\theta}_{vu}|) > \alpha/2  \}
\end{equation}

 Choose $\varepsilon = \alpha/4$. This in turn implies that $M = \left \lceil 2^{10} \frac{e^{ 8\gamma } \gamma^4}{ \varepsilon^4} \log(\frac{8 n}{\delta}) \right  \rceil  = \left \lceil 2^{26} \frac{e^{ 8\gamma } \gamma^4}{ \alpha^4} \log(\frac{8 n}{\delta}) \right \rceil$.  

Now the condition assumed in the theorem is  $\alpha > 16 e^{2 \gamma} \sqrt{\frac{(1 + \gamma)\eta}{\omega_P}}$. Now the upper bound on error in the pairwise terms from Theorem \ref{thm:learning_l1} is  $\varepsilon + 4 e^{2 \gamma}\sqrt{\frac{(1+\gamma) \eta}{\omega_P}}$. The assumed condition lower bound on $\alpha$ then implies that $\varepsilon + 4 e^{2 \gamma}\sqrt{\frac{(1+\gamma) \eta}{\omega_P}} < \alpha/2$. This implies that if $\theta*_{u,v}$ is zero, then the PL estimate $\hat{theta}_{u,v}$ is guaranteed to be less than $\alpha/2$ with high probability. On the other hand is $\theta^*_{u,v} \neq 0$, the estimated value will  be greater that $\alpha/2$. So from Theorem \ref{thm:learning_l1}, with probability $1-\delta$, the estimated structure $\hat{E}$ matches the true structure of  $\mu$. This proves the structure learning result.

\subsection{Recovering magnetic fields}

We will use the following shorthands for vectors $\utheta_{E_u} = [\theta_{u,v}| (u,v) \in E]$  and $\usigma_{E_u} = [ \sigma_v|(u,v) \in E].$ The PL estimator for the magnetic field in given by,

\begin{equation}
\hat{\theta}_u = \argmin{| \theta_u| \leq h_{max}~} ~~  \frac{1}{M}\sum_{t = 1}^M ~\log( 1+  \exp(-2\sigma^{(t)}_u (\langle \hutheta_{E_u}, \usigma^{(t)}_{\setminus u} \rangle + \theta_u) ).
\end{equation}

The same machinery that we used prove Theorem \ref{thm:learning_l1} can be used to show that $\hat{\theta}_u$ is close to $\theta^*_u$.

Define the following loss functions,
\begin{equation}
    \mathcal{H}_M(\theta_u) = \frac{1}{M}\sum_{t = 1}^M ~\log( 1+  \exp(-2\sigma^{(t)}_u (\langle \hutheta_{E_u}, \usigma^{(t)}_{E_u} \rangle + \theta_u) ).
\end{equation}

\begin{equation}
    \mathcal{H}(\theta_u) = \EXp{\usigma \sim \nu}~\log( 1+  \exp(-2\sigma_u (\langle \hutheta_{E_u}, \usigma_{E_u} \rangle + \theta_u) ).
\end{equation}

Also define the curvature of this loss,
\begin{equation}
    \delta \mathcal{H}(\Delta_u, \theta_u) = \mathcal{H}(\theta_u + \Delta_u) - (\mathcal{H}(\theta_u) + \Delta_u \frac{\partial H(\theta_u)}{\partial \theta_u} )
\end{equation}

Now using the same argument that was used to derive \eqref{eq:bound_1}, we can show that
\begin{equation}\label{eq:h_main}
    \delta \mathcal{H}_M(\Delta_u, \theta^*_u) \leq 2 h_{max} \left|\frac{\partial \mathcal{H}_M(\theta^*_u)}{\partial \theta_u}\right|.
\end{equation}

We will exploit this relation in the same way as done in the proof of Theorem \ref{thm:learning_l1}. 

\subsubsection{Upperbound on gradient}

Upperbounding the gradient as in Lemma \ref{lem:grad_small_stoc} is slightly trickier in this case due to the fixed variables in the optimization.
 First define the following sigmoid type function,
\begin{equation}
    S(x) := \frac{1}{1 +\exp(2x)}.
\end{equation}

\begin{align}
    \frac{\partial \mathcal{H}(\theta^*_u)}{\partial \theta_u} &=  -2\EXp{\usigma \sim \nu}~\sigma_u S\left(\langle \hutheta_{E_u}, \usigma_{E_u} \rangle + \theta^*_u\right).
\end{align}

Now we can add and subtract $2\EXp{\usigma \sim \nu}~\sigma_u S\left(\langle \utheta^*_{E_u}, \usigma_{E_u} \rangle + \theta^*_u\right)$. This term can be bounded from Lemma \ref{lem:grad_small} and hence we can write,

\begin{align}
    |\frac{\partial \mathcal{H}(\theta^*_u)}{\partial \theta_u}| &\leq 2\EXp{\usigma \sim \nu}~\left|S\left(\langle \hutheta_{E_u}, \usigma_{E_u} \rangle + \theta^*_u\right)  -   S\left(\langle \utheta^*_{E_u}, \usigma_{E_u} \rangle + \theta^*_u\right)\right| + \frac{4 \eta}{\omega_P},\\
    &\leq 2  |\langle \hutheta_{E_u} -  \utheta^*_{E_u}, 
  \usigma_{E_u}   \rangle|  \sup_{x \in [-\gamma, \gamma ] } |S'(x)|  +  \frac{4 \eta}{\omega_P},\\
  &\leq 4 ||\hutheta_{E_u} -  \utheta^*_{E_u}||_1    + \frac{4 \eta}{\omega_P},\\
  &\leq 4d \varepsilon   + \frac{4 \eta}{\omega_P}.
\end{align}
In the third line, we have used Holders inequality to bound the inner product and also used the explicit formula for $|S'(x)| = |2S(x)S(-x)| < 2$ to bound the gradient. 

In the last line, we have exploited the fact that the estimates of the pairwise terms are only non-zero on the edges in $E$. This can be assumed as we know the structure of the underlying model. Then fact that the graph as at most degree $d$ allows us to bound the $\ell_1$ norm of the error with $\ell_\infty$ norm, and gaining only a factor of $d$ in the process.

Clearly $ -1 \leq \sigma_u S\left(\langle \hutheta_{E_u}, \usigma_{E_u} \rangle + \theta^*_u\right)  \leq 1$. Hence by Hoeffdings inequality (Lemma \ref{lem:hoeff}),
\begin{equation}
    Pr\left(\left| \frac{\partial \mathcal{H}(\theta^*_u)}{\partial \theta_u} - \frac{\partial \mathcal{H}_M(\theta^*_u)}{\partial \theta_u}\right| \geq \epsilon_a\right) \leq 2 \exp(-\frac{M\epsilon^2_a}{2})
\end{equation}

Hence with $M \geq \frac{2}{\epsilon^2_a} \log(\frac{4}{\delta}) $, we can ensure with probability at least $1-\delta/2$ that 

\begin{equation}
    \left|\frac{\partial \mathcal{H_M}(\theta^*_u)}{\partial \theta_u}\right|  \leq   4d \varepsilon  + \frac{4 \eta}{\omega_P} + \epsilon_a
\end{equation}

\subsubsection{Lower bound on curvature}

Now the curvature can be bounded using strong convexity arguments as in Lemma \ref{lem:strong_convex_f}. For this we first connect $\delta H$ to $\delta f$, with $f$ as defined in \eqref{eq:f_fn_def},

\begin{align}
    \delta H(\Delta_u, \theta^*_u) &= \EXp{\usigma \sim \nu} \left( f(\sigma_u ( \langle \hutheta_{E_u}, \usigma_{E_u} \rangle + \theta_u)) - f(\sigma_u ( \langle \hutheta_{E_u}, \usigma_{E_u} \rangle + \theta^*_u)) - \Delta_u \sigma_ u f' (\sigma_u ( \langle \hutheta_{E_u}, \usigma_{E_u} \rangle + \theta^*_u))  \right),\\
    &= \EXp{\usigma \sim \nu} \delta f (\Delta_u \sigma_u, \theta_u^*).
\end{align}

Hence from the lower bound in Lemma \ref{lem:strong_convex_f},

\begin{equation}
        \delta H(\Delta_u, \theta^*_u) \geq \frac{e^{-2 h_{max}}}{2} \EXp{\usigma \sim \nu} (\Delta_u \sigma_u)^2 =  \frac{e^{-2 h_{max}}}{2} (\Delta_u )^2 
\end{equation}

Now from the upper bound in Lemma \ref{lem:strong_convex_f}, we have $0<\delta H(\Delta_u, \theta^*_u) \leq \frac{h_{max}^2}{2}.$

We can use these these bounds with Hoeffding's inequality (Lemma \ref{lem:hoeff}) to get the following probabilistic upper-bound on the estimated curvature,
\begin{equation}
    Pr\left(\delta H_M(\Delta_u, \theta_u^*)  \leq \delta H(\Delta_u, \theta_u^*) - \epsilon_b \right)  \leq e^{\frac{-8 \epsilon^2_b M}{h^4_{max}}}.
\end{equation}

Hence ,$M \geq \frac{h^4_{max}}{8 \epsilon^2_b}\log(\frac{2}{\delta})$ ensures that the corresponding lower bound holds w.p greater than $1-\frac{\delta}{2}.$

\subsubsection{Magnetic field learning guarantee}

Now using the bounds on gradient and curvature in \eqref{eq:h_main} and using the union bound, we have the following bound that holds with probability $1-\delta$ when $M$ is greater than $\max \left(\frac{h^4_{max}}{8 \epsilon^2_b}\log(\frac{2}{\delta}), \frac{2}{\epsilon^2_a} \log(\frac{4}{\delta})\right) $, 
\begin{equation}
    \frac{e^{-2h_{max}}}{2} \Delta^2_u \leq \epsilon_b + 2 h_{max} \left( 4d \varepsilon  + \frac{4 \eta }{\omega_P} + \epsilon_a \right)
\end{equation}

Now choosing $\frac{\varepsilon^2_h}{2}= 2 e^{2h_{max}}\epsilon_b = 4 h_{max} e^{2h_{max}} \epsilon_a,$ gives us 
\begin{equation}
    \Delta_u \leq \varepsilon_h + 4 \sqrt{d \varepsilon h_{max}} e^{h_{max}} +4 \sqrt{\frac{ \eta h_{max}}{\omega_P}}e^{h_{max}}.
\end{equation}

Plugging the definition of $\varepsilon_h$ in the $M$ values, we see that this can be achieved with $M= \left \lceil \frac{2^7 h^4_{max} e^{4 h_{max}}}{\varepsilon^4_h} \right \rceil$ samples.

\ca

\subsection{Learning with sparsity, one time }

\begin{lemma}[Sparsity of error]\label{lem:sparsity_err}
Let $\utheta_{E_u} = \{\theta_{u,1} \ldots \theta_{u,u-1}, \theta_{u,u+1}, \ldots, \theta_{u,n}\}$ be the set of optimization variables assocaited with the edges connected to the spin $u$.
Let $\hutheta_{u} = \underset{\utheta}{\text{argmin}}~~\lc(\utheta_{u}) + \lambda || \utheta_{E_u} ||_1$. Let us define the error as, $\Delta \equiv \hutheta_{u} - \utheta^*_{u}.$  

Now identity three different components of the error.  First  $\Delta_u = \hat{\theta}_u - \theta_u^*,$    that measures the error in the magnetic field. $\Delta_{N_u} = \hutheta_{N_u} - \utheta_{N_u}^*$ which measure the error along the non-zero edges in the true model ($N_u$ here denotes the neighbours of $u$ in the underlying graph). $\Delta_{N^c_u} = \hutheta_{N^c_u} - \utheta_{N^c_u}^* = \hutheta_{N^c_u} $ which measure the error along the edges not present in the true model.

Then if $\lambda \geq 2 ||\nabla \lc(\utheta^*_{\kc}) ||_{\infty},$ the following sparsity conditions hold on $\Delta$,

   \begin{enumerate}
       \item  $ 3|| \Delta_{N_u} ||_1 + |\Delta_u | \geq  || \Delta_{N^c_u}||_1$
       \item $||\Delta_{E_u}||_1 \leq  4 \gamma$

       \end{enumerate}
\end{lemma}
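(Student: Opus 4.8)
\emph{Proof plan.} The plan is to run the standard ``basic inequality''/cone-condition argument for $\ell_1$-penalized convex M-estimators (see e.g.\ \cite{neghaban2012high}), the only wrinkle being that the magnetic-field coordinate $\theta_u$ carries no penalty. First I would use optimality of $\hutheta_u$: since it minimizes $\lc(\utheta_u)+\lambda\|\utheta_{E_u}\|_1$ and $\utheta^*_u$ is a competitor, $\lc(\hutheta_u)+\lambda\|\hutheta_{E_u}\|_1\le\lc(\utheta^*_u)+\lambda\|\utheta^*_{E_u}\|_1$. Convexity of $\lc$ gives $\lc(\hutheta_u)-\lc(\utheta^*_u)\ge\langle\nabla\lc(\utheta^*_u),\Delta\rangle\ge-\|\nabla\lc(\utheta^*_{\kc})\|_\infty\|\Delta\|_1$ by Hölder's inequality, where it is important that $\|\nabla\lc(\utheta^*_{\kc})\|_\infty$ is the supremum over \emph{all} coordinates, the field included. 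Rearranging and invoking the hypothesis $\lambda\ge2\|\nabla\lc(\utheta^*_{\kc})\|_\infty$ then yields $\|\hutheta_{E_u}\|_1\le\|\utheta^*_{E_u}\|_1+\tfrac12\|\Delta\|_1$.

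The second step is the support split. Since $\utheta^*_{N^c_u}=0$ we have $\|\utheta^*_{E_u}\|_1=\|\utheta^*_{N_u}\|_1$, and by the triangle inequality applied block-wise, $\|\hutheta_{E_u}\|_1=\|\utheta^*_{N_u}+\Delta_{N_u}\|_1+\|\Delta_{N^c_u}\|_1\ge\|\utheta^*_{N_u}\|_1-\|\Delta_{N_u}\|_1+\|\Delta_{N^c_u}\|_1$. Substituting into the inequality from the first step cancels the $\|\utheta^*_{N_u}\|_1$ terms and leaves $\|\Delta_{N^c_u}\|_1\le\|\Delta_{N_u}\|_1+\tfrac12\|\Delta\|_1$. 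Finally I would expand $\|\Delta\|_1=|\Delta_u|+\|\Delta_{N_u}\|_1+\|\Delta_{N^c_u}\|_1$ and absorb the resulting $\tfrac12\|\Delta_{N^c_u}\|_1$ into the left-hand side, which produces exactly $\|\Delta_{N^c_u}\|_1\le|\Delta_u|+3\|\Delta_{N_u}\|_1$, i.e.\ claim~1.

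Claim~2 then follows with room to spare: $\|\Delta_{E_u}\|_1=\|\Delta_{N_u}\|_1+\|\Delta_{N^c_u}\|_1$, and since the optimization is understood to range over the prior $\ell_1$-ball $\{\|\utheta_u\|_1\le\gamma\}$ (in which $\utheta^*_u$ is feasible by Condition~\ref{cond:temperature}), $\|\Delta_{E_u}\|_1\le\|\Delta\|_1\le\|\hutheta_u\|_1+\|\utheta^*_u\|_1\le2\gamma\le4\gamma$; alternatively one can chain claim~1 into $\|\Delta_{E_u}\|_1\le4\|\Delta_{N_u}\|_1+|\Delta_u|$ and bound each piece by the same budget.

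The main obstacle is purely careful bookkeeping: one must carry the unpenalized coordinate $\theta_u$ through both the Hölder step (so the $\lambda\ge2\|\nabla\lc(\utheta^*_{\kc})\|_\infty$ hypothesis is applied to the full gradient vector) and the final decomposition of $\|\Delta\|_1$ (which is why the additive $|\Delta_u|$ survives in the cone condition, rather than the cleaner $\|\Delta_{N^c_u}\|_1\le3\|\Delta_{N_u}\|_1$ one would get if every coordinate were penalized). No concentration or structural input beyond convexity of $\lc$, the assumed gradient bound, and the prior $\ell_1$ budget is needed.
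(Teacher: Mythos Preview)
Your argument for claim~1 is correct and essentially identical to the paper's: optimality of $\hutheta_u$ plus convexity and H\"older give the basic inequality $\|\hutheta_{E_u}\|_1-\|\utheta^*_{E_u}\|_1\le\tfrac12\|\Delta\|_1$, and the block-wise triangle inequality on the support split then yields the cone condition with the extra $|\Delta_u|$ coming from the unpenalized field coordinate, exactly as you track it.

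For claim~2, however, there is a genuine gap. The lemma as stated concerns the \emph{unconstrained} regularized problem $\hutheta_u=\operatorname{argmin}_{\utheta}\,\lc(\utheta_u)+\lambda\|\utheta_{E_u}\|_1$; there is no prior $\ell_1$-ball in the feasible set. Indeed, the very purpose of this lemma in the paper is to show that one may \emph{afterwards} impose a constraint $\|\utheta_u\|_1\le 5\gamma$ without moving the optimum, so invoking such a constraint to prove the bound is circular. Your alternative route via claim~1, $\|\Delta_{E_u}\|_1\le 4\|\Delta_{N_u}\|_1+|\Delta_u|$, has the same defect: neither $\|\Delta_{N_u}\|_1$ nor $|\Delta_u|$ is a priori controlled.

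The paper's approach to claim~2 reuses the same basic inequality you already derived. Applying the reverse triangle inequality in the other direction,
\[
\|\utheta^*_{E_u}+\Delta_{E_u}\|_1-\|\utheta^*_{E_u}\|_1 \;\ge\; \|\Delta_{E_u}\|_1-2\|\utheta^*_{E_u}\|_1 \;\ge\; \|\Delta_{E_u}\|_1-2\gamma,
\]
and combining with $\|\utheta^*_{E_u}+\Delta_{E_u}\|_1-\|\utheta^*_{E_u}\|_1\le\tfrac12\|\Delta\|_1$ gives $\|\Delta_{E_u}\|_1-2\gamma\le\tfrac12\|\Delta\|_1$, from which the bound on $\|\Delta_{E_u}\|_1$ is extracted. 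The key input you are missing is the prior bound $\|\utheta^*_{E_u}\|_1\le\gamma$ on the \emph{true} parameter only, not on the optimizer.
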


\begin{proof}
First use convexity of $\lc$ along with Cauchy-Schwartz inequality
\begin{equation}\label{eq:cs_convex}
\lc(\utheta_{u}^* + \Delta) - \lc( \utheta^*_{u}) \geq \langle \Delta, \nabla \lc( \utheta^*_{u}) \rangle \geq - || \Delta||_1 ||\nabla \lc( \utheta^*_{u})  ||_{\infty}
\end{equation}

Since $\hutheta_u = \utheta_{u}^* + \Delta$ is the minimum point of the $\ell_1$  regularized loss, we can use this to show,
\begin{equation}
    - || \Delta ||_1 ||\nabla \lc( \utheta^*_{u})  ||_{\infty} + \lambda (|| \utheta_{E_u}^* + \Delta_{E_u}||_1 - ||\utheta_{E_u}^*||_1) \leq  \lc(\utheta_{u}^* + \Delta) - \lc( \utheta^*_{u}) +  \lambda (|| \utheta_{E_u}^* + \Delta_{E_u}||_1 - ||\utheta_{E_u}^*||_1)  \leq 0
\end{equation}

Now from $\lambda \geq 2 || \nabla  \lc(\utheta^*_u)||_{\infty} $ we get,
\begin{equation}\label{eq:cs_2}
    || \utheta_{E_u}^* + \Delta_{E_u}||_1 - ||\utheta_{E_u}^*||_1 \leq \frac12 || \Delta ||_1
\end{equation}

For the first statement we follow \cite{neghaban2012high} and use the triangle inequality to get the following bound,
\begin{equation}
   || \utheta_{E_u}^* + \Delta_{E_u}||_1 - ||\utheta_{E_u}^*||_1  = || \Delta_{N_u} + \utheta_{N_u}^* ||_1 + ||\Delta_{N^c_u} || -  ||\utheta_{N_u}^*||_1 \geq - ||\Delta_{N_u} ||_1 +  || \Delta_{N^c_u} || _1
\end{equation}
Now using this in \eqref{eq:cs_2}  we get,
\begin{equation}
 - ||\Delta_{N_u} ||_1 +  || \Delta_{N^c_u} || _1 \leq \frac12 ||\Delta ||_1 = \frac12(  ||\Delta_{N_u} ||_1 +  || \Delta_{N^c_u} || _1 + |\Delta_u|)
 \implies 3|| \Delta_{N_u} ||_1 + |\Delta_u | \geq  || \Delta_{N^c_u}||_1
\end{equation}
Now for the second statement we apply the triangle inequality in a different direction,
\begin{equation}
      || \utheta_{E_u}^* + \Delta_{E_u}||_1 - ||\utheta_{E_u}^*||_1 \geq  || \Delta_{E_u}||_1 -2||\utheta_{E_u}^*||_1  \geq || \Delta_{E_u}||_1 -2||\utheta_{u}^*||_1  \geq  || \Delta_{E_u}||_1 -2 \gamma
\end{equation}
Using this in \eqref{eq:cs_2} gives,

\begin{equation}
    || \Delta_{E_u}||_1 -2\gamma  \leq \frac12 ||\Delta ||_1 \leq \frac12 ||\Delta_{E_u}||_1 \implies ||\Delta_{E_u}||_1 \leq 4 \gamma
\end{equation}

\end{proof}
\subsubsection{Proof of Theorem \ref{thm:learning_sparsity} and Corollary \ref{cor:learning_mag}}    
Corollary concerning the learning of Ising models with magnetic fields follow from Theorem \ref{thm:learning_sparsity}.  But for the purposes of exposition, we will prove the corollary first for making the ideas used in the proof clear.  The complete proof of Theorem 4 will then use the same ideas along with elementary, but tedious, case work.

The main tool we will use is the analysis of regularized $M-$estimators developed in Reference \cite{neghaban2012high}.  We restate the relevant result (Theorem 1, \cite{neghaban2012high}) here.
\begin{lemma}
\label{prop:neghaban}
Let, $\hutheta$ be the minimizer of a regularized convex loss ~$\lc_M(\utheta) + \lambda || \utheta ||_1$. The error of this minimizer from a $d$-sparse point $\utheta^*$ can be bounded as $|| \hutheta - \utheta^* ||_2 \leq \frac{9 \lambda^2}{\kappa^2} d + \frac{2\lambda}{\kappa} \tau^2,$ if the following two conditions hold;
\begin{enumerate}
    \item The regularization parameter staifies the following lower bound dependent of the gradient of the unregularized loss function.
    \begin{equation}
        \lambda \geq 2 ||\nabla \lc_M(\theta^*)||_{\infty}
    \end{equation}

    \item The curvature of the loss function at $\utheta^*$ is lower bounded as follows; let $\Delta = \hutheta - \utheta^*$
    \begin{equation}
        \delta \lc_M(\Delta, \utheta^*) \equiv \lc_M(\hutheta) - \lc_M(\utheta^*) - \langle \Delta, \nabla \lc_M(\utheta^*) \rangle \geq \kappa || \Delta||^2_2  - \tau^2
    \end{equation}
\end{enumerate}
\end{lemma}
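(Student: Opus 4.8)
The plan is to run the now-standard analysis of $\ell_1$-regularized $M$-estimators, of which this lemma is the $\ell_1$ specialization. First I would set $\Delta \coloneqq \hutheta - \utheta^*$ for the error vector and let $S$ denote the support of the $d$-sparse vector $\utheta^*$, so $|S| \le d$. The starting point is the basic inequality from optimality of $\hutheta$, namely $\lc_M(\hutheta) + \lambda\|\hutheta\|_1 \le \lc_M(\utheta^*) + \lambda\|\utheta^*\|_1$; subtracting $\langle \Delta, \nabla\lc_M(\utheta^*)\rangle$ from both sides and invoking the definition of the curvature functional $\delta\lc_M(\cdot,\utheta^*)$ rearranges this to $\delta\lc_M(\Delta,\utheta^*) \le -\langle\Delta,\nabla\lc_M(\utheta^*)\rangle + \lambda\big(\|\utheta^*\|_1 - \|\utheta^*+\Delta\|_1\big)$.

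Next I would bound the two terms on the right. For the gradient term, H\"{o}lder's inequality together with the regularization lower bound (item~1) gives $|\langle\Delta,\nabla\lc_M(\utheta^*)\rangle| \le \|\nabla\lc_M(\utheta^*)\|_\infty\|\Delta\|_1 \le \tfrac{\lambda}{2}\|\Delta\|_1$. For the regularizer term I would use decomposability of the $\ell_1$ norm: splitting coordinates into $S$ and $S^c$, using $\utheta^*_{S^c}=0$ and the triangle inequality gives $\|\utheta^*\|_1 - \|\utheta^*+\Delta\|_1 \le \|\Delta_S\|_1 - \|\Delta_{S^c}\|_1$. Writing $\|\Delta\|_1 = \|\Delta_S\|_1 + \|\Delta_{S^c}\|_1$ and combining yields the key inequality $\delta\lc_M(\Delta,\utheta^*) \le \tfrac{3\lambda}{2}\|\Delta_S\|_1 - \tfrac{\lambda}{2}\|\Delta_{S^c}\|_1$. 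Since the restricted strong convexity bound (item~2) gives $\delta\lc_M(\Delta,\utheta^*) \ge \kappa\|\Delta\|_2^2 - \tau^2 \ge -\tau^2$, feeding this into the key inequality produces a cone-type constraint $\|\Delta_{S^c}\|_1 \le 3\|\Delta_S\|_1 + 2\tau^2/\lambda$, hence $\|\Delta\|_1 \le 4\sqrt d\,\|\Delta\|_2 + 2\tau^2/\lambda$ by Cauchy--Schwarz on the $d$-dimensional vector $\Delta_S$.

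To close, I would drop the negative term from the key inequality, bound $\|\Delta_S\|_1 \le \sqrt d\,\|\Delta\|_2$, and combine with $\kappa\|\Delta\|_2^2 - \tau^2 \le \delta\lc_M(\Delta,\utheta^*)$ to obtain the scalar quadratic inequality $\kappa t^2 - \tfrac{3\lambda\sqrt d}{2}t - \tau^2 \le 0$ with $t \coloneqq \|\Delta\|_2$; solving it and using $\sqrt{a+b}\le\sqrt a+\sqrt b$ gives $\|\Delta\|_2 \le \tfrac{3\lambda\sqrt d}{2\kappa} + \tau/\sqrt\kappa$, which after squaring is of the claimed form $\|\hutheta - \utheta^*\|_2^2 \le \tfrac{9\lambda^2}{\kappa^2}d + \tfrac{2\lambda}{\kappa}\tau^2$ up to absorbing harmless numerical factors. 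The main obstacle --- really the only delicate point --- is keeping the $d$-dependence linear: one has to substitute $\|\Delta_S\|_1 \le \sqrt d\,\|\Delta\|_2$ (rather than the looser $\|\Delta\|_1$) into the key inequality at exactly the right moment, and to verify that $\delta\lc_M \ge -\tau^2$ really suffices to activate the cone constraint so that the decomposability step is not vacuous. Since the statement is Theorem~1 of \cite{neghaban2012high} specialized to the $\ell_1$ norm (with $\Psi^2 = d$), one may alternatively just cite that theorem.
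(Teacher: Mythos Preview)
The paper does not prove this lemma at all: it introduces it with the sentence ``We restate the relevant result (Theorem 1, \cite{neghaban2012high}) here'' and then simply uses it as a black box. So your final remark---that one may just cite Theorem 1 of \cite{neghaban2012high}---is exactly the paper's own ``proof.''

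Your sketch goes further and supplies the standard Negahban--Ravikumar--Wainwright--Yu argument (basic inequality, decomposability of $\ell_1$, cone constraint, quadratic in $\|\Delta\|_2$), which is correct and is indeed how that theorem is proved in the cited reference. Two small comments: (i) you correctly noticed that the left-hand side of the stated bound should carry a square (the statement as written has a dimensional mismatch), and your squared conclusion is the right one; (ii) the $\tau^2$ term you obtain after solving the quadratic is $\tau^2/\kappa$, not $2\lambda\tau^2/\kappa$ as stated---this discrepancy is in the lemma's transcription of the Negahban et al.\ bound (where the tolerance term is coupled to the regularizer), not in your argument. Since the lemma is only quoted from \cite{neghaban2012high} and not re-derived in the paper, these constant-level mismatches are artifacts of the restatement rather than anything you need to fix.
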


From Lemma \ref{lem:sparsity_err}  we have that the error,$||\Delta||_1 \leq 4 \gamma$. Hence imposing the constraint  $||\utheta_u||_1 \leq 5 \gamma $ will not change the optimal point,
\begin{equation}
\hutheta_u = \underset{||\utheta||_1 \leq 5 \gamma }{\text{argmin}}~~\lc_M(\utheta_u) + \lambda || \utheta_u ||_1
\end{equation}

Now  from Lemma  \ref{lem:grad_small_stoc} we have that $||\nabla \lc(\utheta^*_u) ||_\infty \leq \frac{2 \eta}{\omega_P} + \sqrt{\frac{8}{M} \log(\frac{4 |\kc_u|}{\delta})}$ with probability $1 -\delta/2$. Now $\eta$ being a TV between two distribtuions, is always less than 1. Hence choosing $\lambda \geq \frac{4}{\omega_P} + 2\sqrt{\frac{8}{M} \log(\frac{4 |\kc_u|}{\delta})}$, will satisfy the first condition of Propostion \ref{prop:neghaban} with the same probability.

To prove the second condition in Proposition \ref{prop:neghaban}, we start from Equation \ref{eq:strong_convex_1}. Since we can work with the constraint $||\utheta_u||_1 \leq 5 \gamma $ without changing $\hutheta$, the curvature can be lower bounded here as,
\begin{align}\label{eq:strong_convex_mag}
   \delta\lc(\Delta, \utheta_u^*) &\geq \frac{\exp(-10 \gamma)}{2}\Enu  (E_u(\usigma,\Delta))^2 
\end{align}

Now let us focus on Corollary \ref{cor:learning_mag} alone.

In this case of Ising models with magnetic fields we can specilize the above bound as,

\begin{align}\label{eq:strong_convex_mag}
   \delta\lc(\Delta, \utheta_u^*) &\geq \frac{\exp(-10 \gamma)}{2}\Enu( \Delta_u + \sum_{j \neq u} \Delta_{u,j}\sigma_j )^2 
\end{align}

Now for a constant $ \psi \coloneqq \tanh( \gamma)  +  \frac{4 \eta}{\omega_P}$,  consider the following cases
\paragraph*{\bf Case 1:} $|\Delta_u| \geq  \frac12(1 + \psi)  \sum_{j \neq u} |\Delta_{u,j}| $
\begin{align}\label{eq:rsc_mag_1}
\Enu (\Delta_u + \sum_{j \neq u} \Delta_{u,j}\sigma_j )^2  &\geq  (\Delta_u + \sum_{j \neq u} \Delta_{u,j}\Enu \sigma_j )^2,\\
&\geq (\Delta_u)^2 - (\sum_{j \neq u} |\Delta_{u,j}|~ |\Enu \sigma_j| )^2.
\end{align}
Now from Lemma \ref{lem:mag_meta_bound},    we have $|\Enu \sigma_j| \leq \psi$. This gives  gives us from the assumed case condition,
\begin{align}
\Enu (\Delta_u + \sum_{j \neq u} \Delta_{u,j}\sigma_j )^2  &\geq  \Delta^2_u  - \psi^2(\sum_{j \neq u} |\Delta_{u,j}|)^2 \geq (\Delta_u)^2( 1 - \frac{4 \psi^2}{(1 + \psi)^2})
\end{align}

Notice that since $\psi < 1$, the term in the RHS above is positive.  Now by adding $|\Delta_u| \frac{1 + \psi}{2}$ to both sides of the assumed case condition, we get $(3 + \psi) |\Delta_u| \geq (1 + \psi) ||\Delta||_1.$ Using this in the above bound gives

\begin{align}
   \delta\lc(\Delta, \utheta_u^*) &\geq \frac{\exp(-10 \gamma)}{2}\Enu( \Delta_u + \sum_{j \neq u} \Delta_{u,j}\sigma_j )^2 \geq  \frac{e^{-10 \gamma}}{ 2} \frac{ 1 + 2 \psi - 3 \psi^2}{(3 + \psi)^2} ||\Delta||^2_1 \geq \frac{e^{-10 \gamma}}{ 2^5} (1 - \psi) ||\Delta||^2_1 
\end{align}

\paragraph*{\bf  Case 2:} $|\Delta_u| < \frac12(1 + \psi)   \sum_{j \neq u} |\Delta_{u,j}| $

\begin{align}\label{eq:rsc_mag_2}
\Enu (\Delta_u + \sum_{j \neq u} \Delta_{u,j}\sigma_j )^2  &\geq  \Var{\usigma \sim \nu} (\sum_{j \neq u} \Delta_{u,j} \sigma_j).\\
\end{align}

Now we use the law of Total variances here,
\begin{equation}
    \Var{\usigma \sim \nu} (\sum_{j \neq u} \Delta_{u,j} \sigma_j) \geq \EXp{\usigma_{\setminus i}}\Var{\usigma \sim \nu} (\sum_{j \neq u} \Delta_{u,j} \sigma_j| \usigma_{\setminus i}) = \Delta^2_{u,i} \EXp{\usigma_{\setminus i}} \Var{\sigma_i \sim \nu(.|\usigma_{\setminus i})}(\sigma_i | \usigma_{\setminus i})
\end{equation}

Now using Lemma \ref{lem:variance_close} we can bound it using the corresponding variance from the equilibrium distribution.

\begin{equation}
    \Var{\usigma \sim \nu} (\sum_{j \neq u} \Delta_{u,j} \sigma_j) \geq  \Delta^2_{u,i}\left( \EXp{\usigma_{\setminus i}} \Var{\sigma_i \sim \mu(.|\usigma_{\setminus i})}(\sigma_i | \usigma_{\setminus i}) - \frac{8 \eta}{\omega_P} \right).
\end{equation}

Now for the Ising model with magnetic fields $\Var{\sigma_i \sim \mu(.|\usigma_{\setminus i})}(\sigma_i | \usigma_{\setminus i}) = 1 - \tanh^2 \left( h_i + \sum_{j \neq i} J_{ij}^* \sigma_j \right) \geq \exp(-2 \gamma).$ 
Plugging this into the variance bound and maximizing over $i \in N(u)$ we get,

\begin{equation}
    \Var{\usigma \sim \nu} (\sum_{j \neq u} \Delta_{u,j} \sigma_j) \geq  \max_{i \in N(u)} (\Delta^2_{u,i})\left(e^{-2 \gamma} - \frac{8 \eta}{\omega_P}\right)
\end{equation}

Now from the sparsity of error in Lemma,
\begin{equation}
    3 \sum_{j \in N(u)} |\Delta_{uj}|   + |\Delta_u|  \geq   \sum_{j \notin N(u)} |\Delta_{uj}| ~\implies ~~ 4 \sum_{j \in N(u)} |\Delta_{uj}|  \geq ||\Delta||_1 - 2 |\Delta_u|
\end{equation}

Now the case condition can equivalently be written as $(3 + \psi) |\Delta_u| < ( 1+\psi) ||\Delta||_1.$ Using this bound in the above expression gives us
\begin{equation}
     \sum_{j \in N(u)} |\Delta_{uj}|  \geq \frac14 ||\Delta||_1 - \frac{1 + \psi}{2(3+\psi)} ||\Delta||_1 = ||\Delta||_1 \frac{1 - \psi}{4(3 + \psi)}
\end{equation}

Now assuming $d-$sparsity gives us $\max_{i \in N(u)} (\Delta^2_{u,i}) \geq \frac{1}{d^2}(\sum_{j \in N(u)} |\Delta_{uj}| )^2 $. Plugging this into the variance bound gives us,

\begin{align}
   \delta\lc(\Delta, \utheta_u^*) &\geq \frac{\exp(-10 \gamma)}{2d^2} \frac{(1-\psi)^2}{16(3 + \psi)^2} || \Delta||^2_1  \geq \frac{\exp(-10 \gamma)}{2^9d^2}(1-\psi)^2 || \Delta||^2_1  \label{eq:mag_f_var_bound} 
\end{align}

We can see that this is a weaker lower bound than what was obtained from Case 1. This can be easily veri.i.d. by comparing the terms in these bounds. Hence we can take \eqref{eq:mag_f_var_bound} to be the lowerbound on the curvature for both cases.

We can then use Hoeffding's lemma as in Lemma \ref{lem:rsc_stoc} to give the following bound on $\delta\lc_M(\Delta, \utheta_u^*).$

\begin{lemma}
Given a $\eta-$ strongly metastable distribution of a reversible chain with an equilibrium distribution satisfying Conditions \ref{cond:bounded_flip} and \ref{cond:temperature}, then $|\EXp{\usigma \sim \nu} \sigma_i | \leq \tanh(\gamma) + \frac{4 \eta}{\omega_P}.$ \label{lem:mag_meta_bound}
\end{lemma}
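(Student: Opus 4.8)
The plan is to bound $|\EXp{\usigma \sim \nu} \sigma_i|$ by comparing the single-variable magnetization of $\nu$ with that of the equilibrium distribution $\mu$, exploiting the closeness of conditionals established in Theorem \ref{thm:close_condtionals}. First I would write the magnetization at site $i$ using the tower property over the conditioning variables $\usigma_{\setminus i}$:
\begin{equation}
\EXp{\usigma \sim \nu} \sigma_i = \EXp{\usigma_{\setminus i} \sim \nu_{\setminus i}} \left( \nu(\sigma_i = 1| \usigma_{\setminus i}) - \nu(\sigma_i = -1| \usigma_{\setminus i}) \right).
\end{equation}
The idea is to add and subtract the analogous expression with $\nu(\cdot|\usigma_{\setminus i})$ replaced by the true conditional $\mu(\cdot|\usigma_{\setminus i})$, so that
\begin{align}
\left| \EXp{\usigma \sim \nu} \sigma_i \right| &\leq \left| \EXp{\usigma_{\setminus i} \sim \nu_{\setminus i}} \left( \mu(1| \usigma_{\setminus i}) - \mu(-1| \usigma_{\setminus i}) \right) \right| \nonumber \\
&\quad + \EXp{\usigma_{\setminus i} \sim \nu_{\setminus i}} \left| \left( \nu(1| \usigma_{\setminus i}) - \mu(1| \usigma_{\setminus i}) \right) - \left( \nu(-1| \usigma_{\setminus i}) - \mu(-1| \usigma_{\setminus i}) \right) \right|.
\end{align}
The second term is at most $2\, \EXp{\usigma_{\setminus i} \sim \nu_{\setminus i}} \left| \nu(\cdot | \usigma_{\setminus i}) - \mu(\cdot | \usigma_{\setminus i}) \right|_{TV}$, and by Theorem \ref{thm:close_condtionals} (keeping only the $u = i$ term in the sum over $u$, which is legitimate since each summand is nonnegative) this is bounded by $\frac{2\eta}{\omega_P}$. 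I note the factor here: since the constant in the lemma statement is $\frac{4\eta}{\omega_P}$, I should be slightly generous, and the explicit $\ell_1$/TV bookkeeping will yield a constant at most $4$; I would carry the constant carefully and it comes out $\le 4\eta/\omega_P$ after using $|a-b| \le |a| + |b|$ inside the absolute value rather than the tighter TV identity.

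For the first term, the point is that the true conditional of an Ising model with interaction bound $\gamma$ (Condition \ref{cond:temperature}) satisfies
\begin{equation}
\mu(\sigma_i = 1| \usigma_{\setminus i}) - \mu(\sigma_i = -1| \usigma_{\setminus i}) = \tanh\!\left( E_i(\usigma; \utheta^*) \right) = \tanh\!\left( \sum_{j\neq i} \theta^*_{ij}\sigma_j + \theta^*_i \right),
\end{equation}
whose absolute value is at most $\tanh(\|\utheta^*_i\|_1) \le \tanh(\gamma)$ by monotonicity of $\tanh$ and Condition \ref{cond:temperature}. Hence $\left| \EXp{\usigma_{\setminus i} \sim \nu_{\setminus i}} \tanh(E_i(\usigma;\utheta^*)) \right| \le \tanh(\gamma)$ by Jensen/triangle inequality. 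Combining the two bounds gives $|\EXp{\usigma \sim \nu}\sigma_i| \le \tanh(\gamma) + \frac{4\eta}{\omega_P}$, as claimed.

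The main obstacle is essentially bookkeeping rather than conceptual: getting the right constant in the conditional-closeness contribution, since a naive application of the triangle inequality on the difference of two TV-type quantities can inflate the factor, and one must check that the single-site term extracted from the $\sum_u$ in Theorem \ref{thm:close_condtionals} is correctly $\le \eta/\omega_P$ and not larger. I would also double-check that the ``$\tanh$ of the local energy'' identity for the conditional uses exactly the sign convention ($\mu(\sigma_u|\usigma_{\su}) = (1 + \exp(-2E_u))^{-1}$) fixed earlier in the paper, so that the magnetization difference is indeed $\tanh(E_i)$ and the bound $\tanh(\gamma)$ follows. Everything else — the tower property, Jensen, monotonicity of $\tanh$ — is routine.
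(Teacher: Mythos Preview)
Your proposal is correct and follows essentially the same approach as the paper: decompose $\EXp{\usigma\sim\nu}\sigma_i$ via the tower property over $\usigma_{\setminus i}$, add and subtract the $\mu$-conditional mean, bound the $\mu$-conditional magnetization by $\tanh(\gamma)$ using Condition~\ref{cond:temperature}, and bound the remainder by $2\,\EXp{\nu_{\setminus i}}|\nu(\cdot|\usigma_{\setminus i})-\mu(\cdot|\usigma_{\setminus i})|_{TV}$ via Theorem~\ref{thm:close_condtionals}. Your hesitation about the constant is well placed---the paper is simply generous with the factor and records $4\eta/\omega_P$ rather than the sharper $2\eta/\omega_P$ your TV identity would give.
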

\begin{proof}
\begin{align}\label{eq:mag_meta}
    \EXp{\usigma \sim \nu} \sigma_i  =  \EXp{\usigma_{\setminus i} \sim \nu_{\setminus i}}~~ \EXp{\sigma_i \sim \mu(.|\usigma_{\setminus i})} \sigma_i +   \EXp{\usigma_{\setminus i} \sim \nu_{\setminus i}}\left(~~ \EXp{\sigma_i \sim \nu(.|\usigma_{\setminus i})} \sigma_i  -  ~~ \EXp{\sigma_i \sim \mu(.|\usigma_{\setminus i})} \sigma_i \right)
\end{align}

Now from Condition \ref{cond:temperature}, we have $\EXp{\sigma_i \sim \mu(.|\usigma_{\setminus i})} \sigma_i \leq \frac{1}{ 1+ e^{-2 \gamma}} - \frac{1}{1 + e^{2\gamma}} = \tanh(\gamma) $

Also from Theorem \ref{thm:close_condtionals} , 

\begin{equation}
      \EXp{\usigma_{\setminus i} \sim \nu_{\setminus i}}\left|~~ \EXp{\sigma_i \sim \nu(.|\usigma_{\setminus i})} \sigma_i  -  ~~ \EXp{\sigma_i \sim \mu(.|\usigma_{\setminus i})} \sigma_i \right| \leq    2 \EXp{\usigma_{\setminus i} \sim \nu_{\setminus i}}~~| \nu(.|\usigma_{\setminus i}) - \mu(.|\usigma_{\setminus i}) |_{TV} \leq \frac{4 \eta}{ \omega_P}
\end{equation}
Using these in \eqref{eq:mag_meta} we get,
\begin{align}\label{eq:mag_meta}
    |\EXp{\usigma \sim \nu} \sigma_i|  \leq  |\EXp{\usigma_{\setminus i} \sim \nu_{\setminus i}}~~ \EXp{\sigma_i \sim \mu(.|\usigma_{\setminus i})} \sigma_i| +   \EXp{\usigma_{\setminus i} \sim \nu_{\setminus i}}\left|~~ \EXp{\sigma_i \sim \nu(.|\usigma_{\setminus i})} \sigma_i  -  ~~ \EXp{\sigma_i \sim \mu(.|\usigma_{\setminus i})} \sigma_i \right| \leq \tanh(\gamma) + \frac{4 \eta}{\omega_P}.
\end{align}

\end{proof}
\cb

\end{document}